\documentclass{article}

    \PassOptionsToPackage{numbers, compress}{natbib}

\usepackage[preprint]{neurips_2026}

\usepackage[utf8]{inputenc} 
\usepackage[T1]{fontenc}    
\usepackage{amsmath}
\usepackage{amsfonts}       
\usepackage{amssymb}
\usepackage{amsthm}
\usepackage{url}            
\usepackage{booktabs}       
\usepackage{tabularx}
\usepackage{multirow}
\usepackage{nicefrac}       
\usepackage{microtype}      
\usepackage[table]{xcolor}  
\usepackage{wrapfig}        
\usepackage{needspace}      
\usepackage{pifont}         
\usepackage{algorithm}
\usepackage{algpseudocode}
\definecolor{LinkOrange}{HTML}{D97706}
\definecolor{FeatCalRow}{HTML}{FBF3EC}
\definecolor{GainGreen}{HTML}{087F5B}
\definecolor{GainRed}{HTML}{B42318}
\definecolor{GainGray}{HTML}{6B7280}
\definecolor{GainLightGray}{HTML}{B6BDC8}
\newcommand{\posgain}[1]{\textcolor{GainGreen}{\scriptsize\,(+#1)}}

\newcommand{\upgain}[1]{\textcolor{GainGray}{\scriptsize\,\mbox{$\uparrow$#1}}}
\newcommand{\downgain}[1]{\textcolor{GainGray}{\scriptsize\,\mbox{$\downarrow$#1}}}
\newcommand{\zerogain}{\textcolor{GainLightGray}{\scriptsize\,\mbox{$\uparrow$0.0}}}
\usepackage{hyperref}       
\hypersetup{
    colorlinks=true,
    linkcolor=LinkOrange,
    citecolor=LinkOrange,
    urlcolor=LinkOrange
}
\usepackage[capitalize,nameinlink]{cleveref}
\usepackage{xspace}
\usepackage{graphicx}
\usepackage{capt-of}
\crefname{figure}{Fig.}{Figs.}
\Crefname{figure}{Fig.}{Figs.}
\crefname{table}{Tab.}{Tabs.}
\Crefname{table}{Tab.}{Tabs.}
\crefname{section}{Sec.}{Secs.}
\Crefname{section}{Sec.}{Secs.}
\crefname{appendix}{App.}{Apps.}
\Crefname{appendix}{App.}{Apps.}
\crefname{subsection}{Sec.}{Secs.}
\Crefname{subsection}{Sec.}{Secs.}
\crefname{subsubsection}{Sec.}{Secs.}
\Crefname{subsubsection}{Sec.}{Secs.}
\crefname{proposition}{Prop.}{Props.}
\Crefname{proposition}{Prop.}{Props.}
\crefname{corollary}{Cor.}{Cors.}
\Crefname{corollary}{Cor.}{Cors.}
\crefname{definition}{Def.}{Defs.}
\Crefname{definition}{Def.}{Defs.}
\crefname{remark}{Remark}{Remarks}
\Crefname{remark}{Remark}{Remarks}
\crefname{algorithm}{Alg.}{Algs.}
\Crefname{algorithm}{Alg.}{Algs.}
\algrenewcommand\algorithmicrequire{\textbf{Input:}}
\algrenewcommand\algorithmicensure{\textbf{Output:}}
\newcommand{\method}{\textsc{FeatCal}\xspace}
\newcommand{\expert}{\mathrm{exp}}
\newcommand{\merge}{\mathrm{mer}}
\newcommand{\base}{\mathrm{base}}
\newcommand{\cali}{\mathrm{cal}}
\newcommand{\tgt}{\mathrm{tgt}}
\newcommand{\anchor}{\mathrm{anc}}
\newcommand{\src}{\mathrm{src}}
\newcommand{\dep}{\mathrm{dep}}

\newtheorem{proposition}{Proposition}
\newtheorem{corollary}{Corollary}
\theoremstyle{definition}
\newtheorem{definition}{Definition}
\theoremstyle{remark}

\theoremstyle{plain}

\title{\textcolor{LinkOrange}{\textsc{FeatCal}}: Feature Calibration for Post-Merging Models}

\author{%
\textbf{Yanggan Gu}$^{1}$\thanks{Equal contribution.} \quad
\textbf{Shuo Cai}$^{1}$\footnotemark[1] \quad
\textbf{Zihao Wang}$^{2}$\footnotemark[1] \quad
\textbf{Wenjun Wang}$^{1}$\footnotemark[1] \quad
\textbf{Yuanyi Wang}$^{1}$\\
\textbf{Pengkai Wang}$^{1}$ \quad
\textbf{Sirui Huang}$^{1}$ \quad
\textbf{Su Lu}$^{1}$ \quad
\textbf{Jianmin Wu}$^{1,4}$ \quad
\textbf{Hongxia Yang}$^{1,3,4}$\thanks{Corresponding author.}\\
{\normalfont $^{1}$The Hong Kong Polytechnic University (PolyU)}\\
{\normalfont $^{2}$The Chinese University of Hong Kong}\\
{\normalfont $^{3}$PolyU-Daya Bay Technology and Innovation Research Institute \quad $^{4}$InfiX.ai}\\
{\normalfont \texttt{yanggangu@outlook.com}}\\
{\normalfont \textbf{Code:}
\href{https://github.com/egangu/featcal}{\underline{\texttt{github.com/egangu/featcal}}}}
}

\begin{document}

\maketitle

\begin{abstract}
Model merging combines task experts into one model and avoids joint training, retraining, or deploying many expert models, but the merged model often still underperforms task experts. We study this performance gap through \emph{feature drift}, the difference between features produced by the merged model and by the expert on the same input. Our theory decomposes this drift into upstream propagation and local mismatch, tracks how it propagates and combines through later layers in forward order, and links final feature drift to output drift. This view motivates \method, which uses a small calibration set to calibrate the merged model weights layer by layer in forward order, reducing feature drift while staying close to merged weights and preserving the benefits of model merging. \method uses an efficient closed-form solution to update model weights, with no gradient descent, iterative optimization, or extra modules. On the main CLIP and GLUE benchmarks, \method beats Surgery and ProbSurgery, the closest post-merging calibration baselines: 85.5\% vs. 77.0\%/78.8\% on CLIP-ViT-B/32 Task Arithmetic (TA) and 85.2\% vs. 83.7\%/82.2\% on FLAN-T5-base GLUE. On CLIP-ViT-B/32, 8 examples per task reach 82.9\%, and 256 examples per task take 53 seconds, about 4x faster than both baselines, showing better sample efficiency and lower calibration cost.
\end{abstract}
\vspace{-1.8em}

\begin{figure}[H]
    \centering
    \includegraphics[width=\textwidth,trim=0 4pt 0 0,clip]{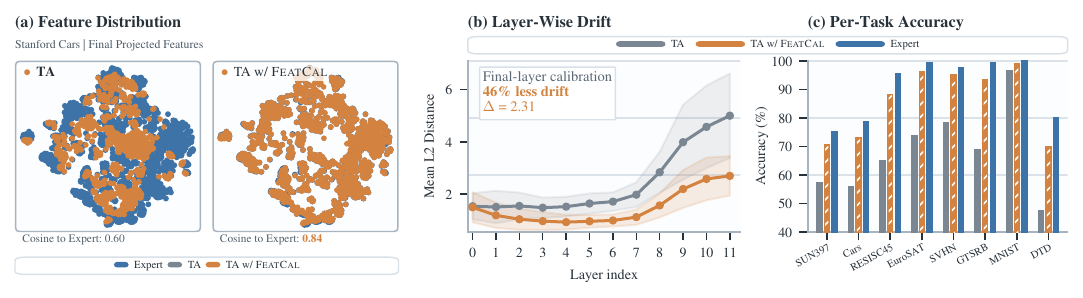}\par
    \vspace{-0.5em}
    \setlength{\abovecaptionskip}{2pt}
    \caption{
        \emph{Feature drift} after Task Arithmetic (TA) merging and \method calibration in CLIP-ViT-B/32.
        Panels (a,b) use Stanford Cars: \method moves features toward expert features, raises their mean cosine similarity from 0.60 to 0.84, and reduces mean L2 feature drift, with 46\% less final-layer drift.
        Panel (c) reports per-task accuracy in the 8-task setting. \Cref{app:ta8-feature-distribution-diagnostics} gives full 8-task feature views.
    }
    \label{fig:intro-triptych}
\end{figure}

\section{Introduction}

Model merging composes task experts into one model, avoiding joint training, retraining, or deploying a separate model for each task \citep{wortsman2022modelsoups,matena2022fisher,ilharco2023taskarithmetic,yadav2023ties,yang2024adamerging,yu2024dare,davari2024breadcrumbs,tam2024mats,daheim2024gradmatch,cheng2025wudi}. However, the merged model often still underperforms the experts it is intended to combine. This leaves a clear performance gap after merging in practice.

We analyze this gap through \emph{feature drift}, the difference between features of the merged model and the task expert on the same input sample. Our layer by layer analysis decomposes this drift into upstream propagation and local mismatch at expert input features, then shows how local mismatches propagate through later layers in \emph{forward order} and combine into final feature drift. We further analyze how final feature drift reaches the model output and becomes output drift, explaining how feature drift can affect output scores. \Cref{fig:intro-triptych} illustrates this behavior on CLIP-ViT-B/32 merged with Task Arithmetic (TA)~\citep{ilharco2023taskarithmetic}, where final features move away from the Stanford Cars expert features and drift appears across multiple layers of the network after TA merging.

Surgery and ProbSurgery~\citep{yang2024surgery,wei2025probsurgery} are related post-merging calibration methods: they identify feature drift at the final layer and train extra modules for calibration. Other intervention methods point to the same lesson: expert signals can help, but current methods often rely on task specific intervention parameters, extra modules at inference, or iterative optimization \citep{yang2024surgeryv2,osial2025intervmerge}. These choices make calibration less direct for an already merged model and can leave the final model with an auxiliary inference path. This motivates efficient, direct calibration that preserves inference speed.

\textbf{\method} uses our drift analysis as a design cue for calibrating the merged model. It uses a forward-order schedule suggested by the propagation view and calibrates the model layer by layer. We introduce a regularization term that keeps the calibrated model close to the merged model, which helps preserve the benefits of model merging and reduce overfitting to the small calibration set. The resulting objective has an efficient closed form solution, so calibration needs no gradient descent, iterative optimization, or extra modules at inference. We further introduce feature interpolation and anchor regularization to balance expert signals with the merged model and improve performance.

Empirically, \Cref{fig:intro-triptych} shows the practical effect: \method moves merged features toward expert features, reduces feature drift, and raises per-task accuracy after TA merging. On CLIP-ViT-B/32 8-task TA, it reaches 85.5\% versus 77.0\%/78.8\% for Surgery/ProbSurgery, and on FLAN-T5-base GLUE it reaches 85.2\% versus 83.7\%/82.2\%. The same trend holds on CLIP-ViT-L/14 WUDI, FLAN-T5-large, and MergeBench Llama-family LLM merging, where \method improves TA by \(+2.0\)/\(+2.3\) average points on 3B/8B models. On CLIP-ViT-B/32 TA, 8 examples per task reach 82.9\%, and calibration with 256 examples per task takes 53 seconds, about 4x faster than both baselines under the same calibration protocol.

We summarize the main contributions of this work as follows:
\par\noindent\ding{182}\enspace We develop a theory of feature drift after merging, with an exact decomposition into local mismatch and upstream propagation, forward order propagation, and a link to output drift.
\par\noindent\ding{183}\enspace We introduce \method, which efficiently calibrates merged model weights in forward order with closed form updates, without gradient descent, architecture changes, or extra modules at inference.
\par\noindent\ding{184}\enspace We validate \method on CLIP, FLAN-T5, and MergeBench LLM benchmarks, where it outperforms related post-merging calibration baselines while using fewer samples and lower calibration cost without adding inference-time modules.

\section{Related Work}

\paragraph{Model Merging.}
Most model merging methods build a fused model by merging task experts directly in parameter space \citep{wortsman2022modelsoups,matena2022fisher,ilharco2023taskarithmetic,ortizjimenez2023tangenttask,yadav2023ties,yang2024adamerging,yu2024dare,davari2024breadcrumbs,tam2024mats,daheim2024gradmatch,cheng2025wudi}. Methods based on feature statistics or feature drift, such as RegMean, RegMean++, and LOT Merging, are closer in mechanism: they derive layer updates during merging from feature statistics, regression objectives, or an explicit feature drift objective \citep{jin2023regmean,nguyen2025regmeanpp,sun2025lot}. These methods define how to build the merged model. In contrast, \method treats that model as the starting point, uses a small calibration set and task experts for post-merging calibration, and controls calibration strength through regularization. This stage separation matters because calibration must work with the feature drift left by a chosen merger instead of changing the merge rule itself. It helps preserve the benefits of model merging while reducing the risk of overfitting to calibration data.

\paragraph{Post-Merging Feature Calibration.}
Representation Surgery and follow-up methods operate on merged-model features through task-specific plugins, deeper interventions, probabilistic feature-drift modeling, or parameter-efficient modules \citep{yang2024surgery,yang2024surgeryv2,wei2025probsurgery,osial2025intervmerge}. These closest post-merging alternatives establish that expert-guided feature calibration is useful. \method differs in parameterization and deployment: instead of learning or deploying auxiliary intervention modules, it folds the calibration into the original linear module weights through closed-form regularized updates, leaving a single architecture-preserving model at inference time rather than an auxiliary intervention path.

\section{Post-Merging Feature Drift: Problem Formulation and Properties}
\label{sec:feature-drift}

Before introducing \method, we formalize post-merging feature drift and show how local mismatch is defined at expert input features, then propagated and combined in forward order across depth.
\subsection{Layer-Wise Feature Drift}
\label{sec:setup-notation}

We consider \(N\) task experts \(\{M_i^{\expert}\}_{i=1}^N\) and a merged model \(M^{\merge}\). As in standard weight-space merging, experts are fine-tuned from a common pretrained base. All models share the same architecture and contain \(L\) layers. Let \(\mathcal D_i\) be the data distribution of task \(i\). For layer \(\ell\), \(f_{i,\ell}^{\expert}\) and \(f_{\ell}^{\merge}\) denote the corresponding layer functions of the task expert and the merged model, respectively.

For an input sample \(x\) from task \(i\), define the expert and merged layer output features recursively as
\begin{equation}
\label{eq:setup-recursion}
\begin{aligned}
h_{i,0}^{\expert}(x)&=x,
&
h_{i,\ell}^{\expert}(x)&=
f_{i,\ell}^{\expert}\!(h_{i,\ell-1}^{\expert}(x)),
\\
h_{i,0}^{\merge}(x)&=x,
&
h_{i,\ell}^{\merge}(x)&=
f_{\ell}^{\merge}\!(h_{i,\ell-1}^{\merge}(x)),
\end{aligned}
\qquad \ell=1,\dots,L.
\end{equation}

\begin{definition}[Layer-wise feature drift]
\label{def:layer-wise-feature-drift}
For task \(i\), sample \(x\), and layer \(\ell\), the layer-wise feature drift is
the difference between the merged-model feature and the corresponding task-expert
feature at that layer:
\begin{equation}
\label{eq:layer-wise-feature-error}
e_{i,\ell}(x)
=
h_{i,\ell}^{\merge}(x)-h_{i,\ell}^{\expert}(x).
\end{equation}
\end{definition}
This pointwise drift signal is the object propagated in forward order in the analysis below.

\subsection{Local Mismatch and Drift Propagation}
\label{sec:layer-wise-feature-error}
\label{sec:error-propagation-nonlinear}

\begin{proposition}[Exact layer-wise drift decomposition]
\label{prop:exact-single-layer-decomposition}
For every task \(i\), sample \(x\), and intermediate layer \(\ell\), the drift decomposes as follows. By \cref{eq:setup-recursion}, \(p_{i,1}(x)=0\).
\begin{equation}
\label{eq:exact-single-layer-decomposition}
e_{i,\ell}(x)
=
\underbrace{
f_{\ell}^{\merge}\!\left(h_{i,\ell-1}^{\expert}(x)\right)
-
f_{i,\ell}^{\expert}\!\left(h_{i,\ell-1}^{\expert}(x)\right)
}_{\text{local mismatch}\ \scriptstyle m_{i,\ell}(x)}
+
\underbrace{
f_{\ell}^{\merge}\!\left(h_{i,\ell-1}^{\merge}(x)\right)
-
f_{\ell}^{\merge}\!\left(h_{i,\ell-1}^{\expert}(x)\right)
}_{\text{upstream-drift propagation}\ \scriptstyle p_{i,\ell}(x)}.
\end{equation}
\end{proposition}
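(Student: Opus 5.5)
The plan is a telescoping (add-and-subtract) argument that uses only the recursion in \cref{eq:setup-recursion} and \cref{def:layer-wise-feature-drift}. No regularity of the layer maps (linearity, Lipschitz continuity, differentiability) is needed, because the identity is exact.

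\textbf{Step 1: rewrite both features.} Fix a task \(i\), a sample \(x\), and a layer \(\ell\in\{1,\dots,L\}\). By \cref{eq:setup-recursion},
\[
h_{i,\ell}^{\merge}(x)=f_{\ell}^{\merge}\!\left(h_{i,\ell-1}^{\merge}(x)\right),
\qquad
h_{i,\ell}^{\expert}(x)=f_{i,\ell}^{\expert}\!\left(h_{i,\ell-1}^{\expert}(x)\right).
\]
So \cref{eq:layer-wise-feature-error} gives
\[
e_{i,\ell}(x)=f_{\ell}^{\merge}\!\left(h_{i,\ell-1}^{\merge}(x)\right)-f_{i,\ell}^{\expert}\!\left(h_{i,\ell-1}^{\expert}(x)\right).
\]

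\textbf{Step 2: insert the hybrid term.} Add and subtract \(f_{\ell}^{\merge}\!\left(h_{i,\ell-1}^{\expert}(x)\right)\), which is the merged layer evaluated at the expert's input feature. This is well defined because all models share the same architecture, so the merged layer accepts the expert's feature as input.

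\textbf{Step 3: regroup.} Pair the hybrid term with the expert term. This gives \(m_{i,\ell}(x)\): the same input passed through two different layer maps. Pair the merged term with the subtracted hybrid term. This gives \(p_{i,\ell}(x)\): the same merged layer map applied to two different inputs. Together these two differences are exactly \cref{eq:exact-single-layer-decomposition}.

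\textbf{Step 4: the case \(\ell=1\).} By \cref{eq:setup-recursion}, \(h_{i,0}^{\merge}(x)=h_{i,0}^{\expert}(x)=x\). The two arguments in \(p_{i,1}(x)\) therefore coincide, so \(p_{i,1}(x)=0\) and \(e_{i,1}(x)=m_{i,1}(x)\).

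There is no real obstacle. The only point needing care is choosing the hybrid term \(f_{\ell}^{\merge}(h_{i,\ell-1}^{\expert}(x))\), rather than \(f_{i,\ell}^{\expert}(h_{i,\ell-1}^{\merge}(x))\). This choice makes the local mismatch be measured at expert input features, as the statement requires. It also makes the propagation term depend only on the merged layer, which is what allows the later forward-order recursion.
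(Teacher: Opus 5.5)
Your proposal is correct and follows the paper's proof: the paper also adds and subtracts the hybrid term \(f_{\ell}^{\merge}(h_{i,\ell-1}^{\expert}(x))\) and regroups into \(m_{i,\ell}(x)\) and \(p_{i,\ell}(x)\), with no regularity assumptions. Your Step 4 gives the \(\ell=1\) case explicitly: \(p_{i,1}(x)=0\) because \(h_{i,0}^{\merge}(x)=h_{i,0}^{\expert}(x)=x\), which the paper only notes in the statement.
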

\noindent\emph{Proof.} The algebraic identity and its regularity details are deferred to \cref{app:proof-feature-drift}.

\paragraph{Interpretation.}
\Cref{prop:exact-single-layer-decomposition} decomposes layer-wise feature drift into two terms: local mismatch and upstream-drift propagation. The local mismatch \(m_{i,\ell}(x)\) measures the feature mismatch caused by the difference between the merged and expert layer maps at the same expert input feature. The propagation term \(p_{i,\ell}(x)\) measures how feature drift inherited from earlier layers changes the input feature of layer \(\ell\) and is then carried into the output feature of this layer.

\begin{proposition}[Layer-wise propagation of local mismatch]
\label{prop:layerwise-propagation-identity}
Fix a task \(i\), sample \(x\), and layers \(1,\dots,L\). Suppose that, for
each layer \(\ell\), \(f_\ell^{\merge}\) is continuously differentiable on an
open neighborhood containing the segment between
\(h_{i,\ell-1}^{\expert}(x)\) and \(h_{i,\ell-1}^{\merge}(x)\). Let
\(A_{i,\ell}(x)\) denote the corresponding path-averaged local sensitivity
operator, defined explicitly in \cref{eq:avg-jacobian-general}. Then the drift
obeys the layer-by-layer recursion
\begin{align}
p_{i,\ell}(x)
&=
A_{i,\ell}(x)e_{i,\ell-1}(x),
\qquad
e_{i,\ell}(x)
=
A_{i,\ell}(x)e_{i,\ell-1}(x)+m_{i,\ell}(x).
\label{eq:layerwise-propagation-recursion}
\end{align}
Since \(e_{i,0}(x)=0\), the final feature drift is
\begin{align}
e_{i,L}(x)
&=
\sum_{\ell=1}^{L}
P_{i,\ell\rightarrow L}(x)m_{i,\ell}(x),
\notag\\
P_{i,\ell\rightarrow L}(x)
&=
A_{i,L}(x)A_{i,L-1}(x)\cdots A_{i,\ell+1}(x),
\qquad
P_{i,L\rightarrow L}(x)=I .
\label{eq:final-layer-propagation-expansion}
\end{align}
The product composes compatible local maps. See
\cref{eq:general-layerwise-unroll} for the \(s\)-to-\(t\) form.
\end{proposition}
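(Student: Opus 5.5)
The plan is to prove the recursion by applying the fundamental theorem of calculus along the segment that joins the expert input feature to the merged input feature. Then I unroll the recursion by induction on depth.

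\emph{Step 1: defining \(A_{i,\ell}(x)\) and identifying \(p_{i,\ell}\).} Write \(a=h_{i,\ell-1}^{\expert}(x)\) and \(b=h_{i,\ell-1}^{\merge}(x)\), so that \(b-a=e_{i,\ell-1}(x)\). Set \(\gamma(t)=a+t(b-a)\) for \(t\in[0,1]\). By hypothesis \(f_\ell^{\merge}\) is \(C^1\) on an open set containing this segment, so \(t\mapsto f_\ell^{\merge}(\gamma(t))\) is \(C^1\) on \([0,1]\). The chain rule gives its derivative as \(J f_\ell^{\merge}(\gamma(t))\,(b-a)\). Applying the fundamental theorem of calculus componentwise then yields
\begin{equation*}
f_\ell^{\merge}(b)-f_\ell^{\merge}(a)=\Big(\int_0^1 J f_\ell^{\merge}\big(a+t(b-a)\big)\,dt\Big)(b-a).
\end{equation*}
I define \(A_{i,\ell}(x)\) as the bracketed path-averaged Jacobian, which is what \cref{eq:avg-jacobian-general} should be. The integral exists because the integrand is continuous on a compact interval, and \((b-a)\) can be pulled out because it is constant in \(t\). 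The left-hand side is exactly \(p_{i,\ell}(x)\) from \cref{prop:exact-single-layer-decomposition}, so \(p_{i,\ell}(x)=A_{i,\ell}(x)e_{i,\ell-1}(x)\).

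\emph{Step 2: the one-step recursion.} Substituting Step 1 into the decomposition \(e_{i,\ell}=m_{i,\ell}+p_{i,\ell}\) of \cref{prop:exact-single-layer-decomposition} gives \(e_{i,\ell}=A_{i,\ell}e_{i,\ell-1}+m_{i,\ell}\) for every \(\ell\).

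Two edge cases need a remark. At \(\ell=1\) the segment degenerates to the single point \(x\), so \(e_{i,0}=0\) and \(p_{i,1}=0\); the identity holds for any choice of \(A_{i,1}\). At \(\ell=L\) the decomposition is equally a pure algebraic identity: add and subtract \(f_\ell^{\merge}(h_{i,\ell-1}^{\expert}(x))\). So it applies to all layers, not only intermediate ones.

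\emph{Step 3: unrolling.} I prove the general \(s\)-to-\(t\) form by induction on \(t\ge s\):
\begin{equation*}
e_{i,t}=P_{i,s\to t}\,e_{i,s}+\sum_{\ell=s+1}^{t}P_{i,\ell\to t}\,m_{i,\ell},
\end{equation*}
where \(P_{i,\ell\to t}=A_{i,t}\cdots A_{i,\ell+1}\) and \(P_{i,t\to t}=I\).
\begin{itemize}
\item \emph{Base case \(t=s\).} The formula reads \(e_{i,s}=e_{i,s}\), which is trivial.
\item \emph{Inductive step.} Apply the recursion at layer \(t+1\), substitute the inductive hypothesis, and use \(A_{i,t+1}P_{i,\ell\to t}=P_{i,\ell\to t+1}\).
\end{itemize}
Taking \(s=0\) and \(t=L\), and using \(e_{i,0}=0\), gives \cref{eq:final-layer-propagation-expansion}.

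\emph{Main obstacle.} The only delicate point is Step 1, specifically the regularity and dimension bookkeeping. Layer feature spaces may have different dimensions, so \(A_{i,\ell}\) maps the layer-\((\ell-1)\) feature space to the layer-\(\ell\) space; this is exactly what makes the product well defined ("compatible local maps"). A mean-value \emph{equality} with a single intermediate point fails for vector-valued maps, which is why the integral average is needed. Beyond that, the argument is routine linear algebra.
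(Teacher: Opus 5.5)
Your proposal is correct and follows the paper's proof essentially step for step. Applying the fundamental theorem of calculus along the expert-to-merged segment gives the path-averaged Jacobian and \(p_{i,\ell}=A_{i,\ell}e_{i,\ell-1}\); substituting this into the exact decomposition gives the recursion, and unrolling from \(s\) to \(t\) with \(e_{i,0}=0\) gives the final expansion. Your explicit induction and your remarks on the \(\ell=1\) and \(\ell=L\) edge cases only spell out what the paper compresses into ``iterating.''
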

\noindent\emph{Proof.} The local sensitivity derivation and unrolled expansion are deferred to \cref{app:proof-feature-drift}.

\paragraph{Interpretation.}
\Cref{prop:layerwise-propagation-identity} gives a simple forward-order view: local mismatches arise at individual layers, and their induced feature drift propagates through downstream layers to the final layer. Thus, final feature drift is a downstream combination of local mismatches from different layers. For residual networks, we further show that residual paths carry upstream feature drift through the skip connection. The drift can also grow under specific conditions. See \cref{app:residual-growth}.

\subsection{From Feature Drift to Output Drift}
\label{sec:output-drift}

\begin{definition}[Output drift for task scores]
\label{def:output-score-drift}
For the merged model \(M^{\merge}\), let
\(z_i^{\expert}(x)\) and \(z_i^{\merge}(x)\) be the expert and merged
task scores, each represented as a score vector. Let \(\psi_i^{\expert}\) and
\(\psi_i^{\merge}\) map final features to task scores, with
\(z_i^{\expert}(x)=\psi_i^{\expert}(h_{i,L}^{\expert}(x))\) and
\(z_i^{\merge}(x)=\psi_i^{\merge}(h_{i,L}^{\merge}(x))\). The post-merging output drift is
\begin{equation}
\label{eq:output-drift-definition}
\Delta z_i^{\merge}(x)=z_i^{\merge}(x)-z_i^{\expert}(x).
\end{equation}
\end{definition}
In the logit or similarity settings used below, \(z_i\) can contain class logits, CLIP candidate scores
(scaled similarity scores over a fixed candidate set), or fixed-prefix decoder
vocabulary logits.

\begin{proposition}[Feature-to-output perturbation]
\label{prop:main-feature-to-output-perturbation}
Suppose \(\psi_i^{\merge}\) is locally \(B_i^{\merge}(x)\)-Lipschitz on the segment between \(h_{i,L}^{\expert}(x)\) and \(h_{i,L}^{\merge}(x)\).
Here \(B_i^{\merge}(x)\) locally bounds how much the merged task score map can change when its final-feature input moves along this segment. Then
\begin{equation}
\label{eq:main-score-map-bound}
\left\|\Delta z_i^{\merge}(x)\right\|_2
\le
B_i^{\merge}(x)\left\|e_{i,L}(x)\right\|_2
+
\delta_i^{\psi,\merge}(x),
\end{equation}
where
\[
\delta_i^{\psi,\merge}(x)
=
\left\|
\psi_i^{\merge}\!\left(h_{i,L}^{\expert}(x)\right)
-
\psi_i^{\expert}\!\left(h_{i,L}^{\expert}(x)\right)
\right\|_2
\]
is the score map mismatch. If the task score map is shared, this term is \(0\).
\end{proposition}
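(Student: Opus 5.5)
The plan is to insert and subtract the intermediate quantity \(\psi_i^{\merge}(h_{i,L}^{\expert}(x))\), which is the merged score map evaluated at the expert final feature, and then apply the triangle inequality. Concretely, I would write
\begin{equation*}
\Delta z_i^{\merge}(x)
=
\Bigl[\psi_i^{\merge}\!\bigl(h_{i,L}^{\merge}(x)\bigr)-\psi_i^{\merge}\!\bigl(h_{i,L}^{\expert}(x)\bigr)\Bigr]
+
\Bigl[\psi_i^{\merge}\!\bigl(h_{i,L}^{\expert}(x)\bigr)-\psi_i^{\expert}\!\bigl(h_{i,L}^{\expert}(x)\bigr)\Bigr].
\end{equation*}
This mirrors the local-mismatch and propagation split of \cref{prop:exact-single-layer-decomposition}, now applied at the output map. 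The first bracket isolates the effect of feature drift under a fixed (merged) score map. The second bracket is exactly the score map mismatch, evaluated at a common input.

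Taking \(\ell_2\) norms and using the triangle inequality gives \(\|\Delta z_i^{\merge}(x)\|_2\le \|\text{first}\|_2+\delta_i^{\psi,\merge}(x)\), where the second term is \(\delta_i^{\psi,\merge}(x)\) by definition. For the first term, I would invoke the local Lipschitz hypothesis. Both endpoints \(h_{i,L}^{\expert}(x)\) and \(h_{i,L}^{\merge}(x)\) lie on the segment where \(\psi_i^{\merge}\) is \(B_i^{\merge}(x)\)-Lipschitz, so the first bracket has norm at most \(B_i^{\merge}(x)\|h_{i,L}^{\merge}(x)-h_{i,L}^{\expert}(x)\|_2\). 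By \cref{def:layer-wise-feature-drift} this equals \(B_i^{\merge}(x)\|e_{i,L}(x)\|_2\), which yields \cref{eq:main-score-map-bound}.

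For the final remark, if \(\psi_i^{\merge}=\psi_i^{\expert}\) (shared head or shared text embeddings), the second bracket vanishes identically, so \(\delta_i^{\psi,\merge}(x)=0\).

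There is no serious obstacle. The only point needing care is interpreting ``locally Lipschitz on the segment'' so that it applies to the two endpoints themselves, with the Lipschitz constant taken in the \(\ell_2\) norms on feature and score spaces. If the hypothesis is instead stated via differentiability, I would replace this step with the mean value inequality along the segment, bounding \(B_i^{\merge}(x)\) by the supremum of the Jacobian operator norm. That route parallels the path-averaged operator construction of \cref{prop:layerwise-propagation-identity}.
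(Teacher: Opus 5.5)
Your proposal is correct and matches the paper's proof. The paper also adds and subtracts \(\psi_i^{\merge}(h_{i,L}^{\expert}(x))\), applies the triangle inequality, bounds the first difference by \(B_i^{\merge}(x)\|e_{i,L}(x)\|_2\) using the local Lipschitz hypothesis, and identifies the second term as \(\delta_i^{\psi,\merge}(x)\), which vanishes when the score map is shared.
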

\noindent\emph{Proof.} This is the perturbation bound proved in \cref{prop:score-map-perturbation}.

\paragraph{Interpretation.}
\Cref{prop:main-feature-to-output-perturbation} shows that final feature drift can propagate to output drift, thus further affect model outputs and task loss. For example, in a language model under a fixed prefix, token probabilities are obtained by applying softmax to output logits. Holding other logits fixed, a larger token logit gives a larger token probability, so output-logit drift can potentially induce probability drift and change the next-token choice. Cross-entropy loss is also tied to the probability assigned to the target token, so probability drift can induce loss drift. Detailed analysis is deferred to \cref{app:output-drift-bridge}.

\section{\method: Feature Calibration for Post-Merging Models}
\label{sec:method}
\label{sec:featcal}

The preceding drift analysis guides the design of a direct calibration procedure for an
already merged model. \method visits layers in forward order, takes a current
feature snapshot after earlier layer updates, and solves expert-guided
calibration objectives for the modules in that layer with explicit regularization.

\subsection{Basic Calibration Objective}

\paragraph{Forward-order calibration.}
The drift analysis in \cref{sec:error-propagation-nonlinear} shows that
feature drift arises from local mismatch terms that propagate layer by layer in
forward order. \method follows the same order by calibrating the merged model
layer by layer: after earlier layers are calibrated, it recollects current
calibrated-model features for the next layer before fitting that layer's module
objectives. This schedule also reduces mismatch between features used during
calibration and features exposed by the deployed calibrated model;
\Cref{app:forward-order-calibration} formalizes this source/deployed feature
mismatch.

\paragraph{Why calibrate linear modules?}
Under this schedule, each layer provides a shared feature snapshot. Given the
cached input features and expert target features for a module in that snapshot,
the local surrogate can be defined for any linear module. In practice, \method
applies it to the modules configured for calibration.
Linear modules are natural feature-mixing and projection points in the
architectures we study, including attention and MLP linear modules. Once their
calibrated-model input features are fixed, each linear module gives a tractable
regularized regression problem with a closed-form update. The update replaces
the existing merged weight, preserving the architecture without gradient
descent, adapters, or inference-time modules.
For other affine components, including bias parameters and LayerNorm affine
parameters, we also design calibration updates, as detailed in
Appendix~\ref{app:affine-layernorm-calibration}. In practice, these extra
updates give limited gains: on CLIP, their average accuracy improvement is less
than 0.5 percentage points. The main gains come from calibrating linear modules.

\paragraph{Linear module feature drift.}
For a fixed linear module inside the current layer, we define a module-local
version of feature drift using the variables available at that module. Let
\(W_i,W^{\merge},W^{\base}\in\mathbb{R}^{m\times d}\) be the task-\(i\)
expert, merged, and base weights for this module. When processing the layer, we
cache fixed input feature matrices \(X_i^{\cali}\) and \(X_i^{\expert}\) for this
module on the same task-\(i\) calibration samples. Here \(X_i^{\cali}\) is
produced by the prefix-calibrated model and is therefore the deployed input
source, while \(X_i^{\expert}\) is the corresponding task-expert feature matrix.

Following the layer-wise feature drift definition in
\cref{eq:layer-wise-feature-error}, we define the linear module feature drift of
a candidate calibrated weight \(W\) by
\begin{equation}
\label{eq:featcal-module-drift}
e_i^{\mathrm{linear}}(W)
=
W X_i^{\cali}
-
W_i X_i^{\expert}.
\end{equation}
As in the layer-wise analysis, this module-level drift can be interpreted as a
combination of module-local mismatch and upstream-drift propagation.
\paragraph{Basic calibration objective.}
With input features fixed, the per-module calibration objective minimizes the
overall linear module feature-drift error with a merged-weight penalty:
\begin{equation}
\label{eq:featcal-basic}
W^\star
=
\arg\min_{W \in \mathbb{R}^{m \times d}}
\sum_{i=1}^N
\| W X_i^{\cali} - W_i X_i^{\expert} \|_F^2
\;+\;
\lambda_{\merge}
\| W - W^{\merge} \|_F^2.
\end{equation}
The quadratic penalty controls how far a single module update can move from the
merged weights. This objective is a tractable module-local surrogate for
reducing linear module feature drift, rather than an exact objective for
end-to-end task risk or all-layer feature drift.

\subsection{Feature Interpolation for Calibration Targets}

Under the forward-order schedule, each layer's calibration should focus on the
local mismatch at its current linear module rather than compensate for feature
drift caused by upstream layers. Directly using expert input features for
calibration can violate this goal. The gap between \(X_i^{\cali}\) and
\(X_i^{\expert}\) already contains upstream feature drift, so fitting the direct
target \(W_iX_i^{\expert}\) may force the current linear module to fit drift left
by earlier layers. We therefore introduce an interpolated target feature:
\begin{equation}
\label{eq:featcal-target-feature}
X_i^{\tgt}
=
\alpha X_i^{\expert}
+
(1-\alpha) X_i^{\cali},
\qquad
\alpha \in [0,1].
\end{equation}
In \cref{eq:featcal-basic}, we replace the target term
\(W_iX_i^{\expert}\) with \(W_iX_i^{\tgt}\).
This target keeps the expert signal while making the local regression less
aggressive under upstream drift.

\subsection{Anchor Regularization}

During calibration, the objective should use targets formed from expert features
while keeping the update tied to the merged weights. The base model provides
another useful reference. It is pretrained on large data before task
specialization and can contain knowledge that a small calibration set does not
cover. To use this reference, we include the base weight in the regularization
term. The coefficients below let us control how calibration uses the merged and
base references. For a linear module, we define the anchor weight as
\begin{equation}
\label{eq:featcal-anchor}
W^{\anchor}
=
\rho W^{\merge}
+
(1-\rho) W^{\base},
\qquad
\rho\in\mathbb{R}.
\end{equation}
The coefficient \(\rho\) controls the anchor used by the quadratic penalty.

Combining the target interpolation in \cref{eq:featcal-target-feature} and the
anchor in \cref{eq:featcal-anchor} gives the practical objective:
\begin{equation}
\label{eq:featcal-practical}
W^\star
=
\arg\min_{W \in \mathbb{R}^{m \times d}}
\sum_{i=1}^N
\| W X_i^{\cali} - W_i X_i^{\tgt} \|_F^2
\;+\;
\lambda
\| W - W^{\anchor} \|_F^2.
\end{equation}
The first term uses the interpolated target from \cref{eq:featcal-target-feature}
to fit the expert feature signal without using the raw expert input feature as a
hard target. The second term uses the anchor from \cref{eq:featcal-anchor} to
keep the update close to the chosen reference, with \(\lambda>0\) controlling
the regularization strength.

\subsection{Task-Wise Scale Normalization and Closed-Form Solution}

This subsection describes the update applied after feature collection. At each
forward-order layer, \method first caches current calibrated-model features and
expert features for the modules calibrated in that layer. The cached features
are then used to compute each linear module update separately, and the layer
parameters are loaded after the layer's updates are formed. Thus the statistics
below are module-local, even though feature collection is organized by layer.

For a fixed linear module in the current layer, let \(n\) denote the calibration
sample count for each task at this module. The \(1/n\) factors form per-task
empirical moments and prevent each task contribution from scaling directly with
the sample count. We summarize task \(i\) by the empirical feature statistics
\begin{equation}
\label{eq:featcal-stats}
G_i
=
\frac{1}{n}
X_i^{\cali} {X_i^{\cali}}^\top
\in \mathbb{R}^{d \times d},
\qquad
C_i
=
\frac{1}{n}
X_i^{\tgt} {X_i^{\cali}}^\top
\in \mathbb{R}^{d \times d}.
\end{equation}
Here, \(G_i\) is the input second moment and \(C_i\) is the target-input cross moment.

The objective in \cref{eq:featcal-practical} is a matrix-valued ridge regression problem~\citep{hoerl1970ridge,tikhonov1977solutions}. To reduce scale sensitivity, we use stabilized inverse task weights with \(\epsilon>0\),
\begin{equation}
\label{eq:featcal-task-weight}
\nu_i = \max\{\|G_i\|_F,\epsilon\},
\qquad
\omega_i = \nu_i^{-1},
\end{equation}
With these stabilized task weights fixed, the module-wise objective becomes
\begin{equation}
\label{eq:featcal-final-obj}
W^\star
=
\arg\min_{W \in \mathbb{R}^{m \times d}}
\sum_{i=1}^N
\frac{\omega_i}{n}
\| W X_i^{\cali} - W_i X_i^{\tgt} \|_F^2
\;+\;
\lambda
\| W - W^{\anchor} \|_F^2.
\end{equation}
The corresponding stationary condition for this quadratic objective is
\begin{equation}
\label{eq:featcal-final-stationary}
W \Bigl( \sum_{i=1}^N \omega_i G_i + \lambda I_d \Bigr)
=
\sum_{i=1}^N \omega_i W_i C_i + \lambda W^{\anchor},
\end{equation}
Solving this linear system gives the closed-form update for this module
\begin{equation}
\label{eq:featcal-final-closed-form}
W^\star
=
\Bigl(
\sum_{i=1}^N \omega_i W_i C_i
+
\lambda W^{\anchor}
\Bigr)
\Bigl(
\sum_{i=1}^N \omega_i G_i
+
\lambda I_d
\Bigr)^{-1}.
\end{equation}
Because \(G_i\succeq 0\), \(\omega_i>0\), and \(\lambda>0\), the inverse is well defined. In implementation, we also add \(\epsilon I_d\) to the solve matrix as a numerical stabilizer.

The complete forward-order calibration procedure is given in \cref{app:calibration-algorithm}.

\section{Experiments}
\label{sec:experiments}

\subsection{Setup}
\label{sec:exp-setup}

\paragraph{Benchmarks.}
We use two public FusionBench settings~\citep{tang2025fusionbench} and one MergeBench LLM setting~\citep{he2025mergebench}:
\par\noindent\ding{182}\enspace \textbf{CLIP image classification.} We use the FusionBench CLIP model merging benchmark with CLIP-ViT-B/32 and CLIP-ViT-L/14~\citep{radford2021learning}. The primary setting has 8 image tasks: SUN397, Stanford Cars, RESISC45, EuroSAT, SVHN, GTSRB, MNIST, and DTD~\citep{xiao2010sun,krause2013cars,cheng2017remote,helber2019eurosat,netzer2011reading,stallkamp2011german,lecun1998gradient,cimpoi2014describing}. The 14-task suite adds Flowers102, PCAM, FER2013, Oxford-IIIT Pet, STL10, and CIFAR100~\citep{nilsback2008automated,veeling2018rotation,goodfellow2015challenges,parkhi2012cats,coates2011analysis,krizhevsky2009learning}. The 20-task suite further adds CIFAR10, Food101, Fashion-MNIST, EMNIST Letters, KMNIST, and Rendered SST2~\citep{krizhevsky2009learning,bossard2014food,xiao2017fashion,cohen2017emnist,clanuwat2018deep,socher2013recursive,openai2021renderedsst2}. We follow prior merging protocols~\citep{yang2024adamerging,cheng2025wudi}, report top-1 accuracy and task averages, and defer full per-task extended results to \cref{app:clip-detailed-results}.
\par\noindent\ding{183}\enspace \textbf{FLAN-T5 text generation.} We evaluate FusionBench FLAN-T5-base and FLAN-T5-large merging on 8 prompted GLUE tasks: CoLA, MNLI, MRPC, QNLI, QQP, RTE, SST-2, and STS-B~\citep{raffel2020t5,wei2022finetuned,chung2024scaling,wang2018glue,warstadt2019neural,williams2018broad,dolan2005automatically,rajpurkar2016squad,dagan2006pascal,socher2013recursive,cer2017semeval}. The base experts are full fine-tuned models, while the large experts use LoRA fine-tuning~\citep{hu2022lora}. We merge task experts and evaluate generated text outputs. We report exact match accuracy except for STS-B, where we report Spearman's \(\rho\), and average the 8 task scores.
\par\noindent\ding{184}\enspace \textbf{MergeBench LLM merging.} We evaluate Llama-3.2-3B-Instruct and Llama-3.1-8B-Instruct in the MergeBench domain-expert setting. The task suite covers mathematics, coding, instruction following, and general knowledge through MATH-500, GSM8K, HumanEval+, MBPP+, IFEval, and ARC-Challenge. HumanEval+ and MBPP+ report pass@1, and each table average is the mean over all 6 reported MergeBench tasks in this LLM setting.

\paragraph{Compared methods.}
We compare against pre-trained, single-task, and multi-task references when available. For CLIP, upstream mergers include Simple Averaging~\citep{wortsman2022modelsoups}, Task Arithmetic~\citep{ilharco2023taskarithmetic}, AdaMerging~\citep{yang2024adamerging}, and WUDI-Merging~\citep{cheng2025wudi}; for FLAN-T5 and MergeBench, we use Task Arithmetic. We apply \method on top of different upstream mergers and compare with Surgery~\citep{yang2024surgery} and ProbSurgery~\citep{wei2025probsurgery} where available.
Unless otherwise stated, upstream and baseline hyperparameters follow the FusionBench recipes.

\paragraph{Calibration setup.}
By default, \method uses 256 calibration samples per task, calibrates layers in forward order, and applies the linear-weight update in \cref{eq:featcal-final-closed-form}. When enabled, it also applies the bias and LayerNorm affine updates in Appendix~\ref{app:affine-layernorm-calibration}; the main CLIP runs enable both, while the FLAN-T5 runs calibrate linear-module bias parameters but not LayerNorm affine parameters. For the main CLIP accuracy tables, we fix \(\lambda=0.05\), \(\rho=2.0\), and \(\alpha=0.3\). For FLAN-T5, we use \(\lambda=10^{-5}\), \(\rho=2.0\), and \(\alpha=0.6\). For MergeBench, we use \((\lambda,\rho,\alpha)=(10^{-5},0.5,0.15)\) for Llama-3.2-3B-Instruct and \((3.0,0.5,0.15)\) for Llama-3.1-8B-Instruct. Test sets are used only for final reporting. \Cref{sec:analysis} includes a compact sensitivity diagnostic for the 8-task TA setting.

\subsection{Results on CLIP Models}
\label{sec:clip-results}

\begin{table}[!t]
\centering
\caption{
Multi-task performance of CLIP-ViT-B/32 models on 8 image-classification tasks.
All numbers are top-1 accuracy (\%). Gray arrows show changes over upstream mergers.
}
\label{tab:clip-vit-b32-results}
\setlength{\tabcolsep}{3.2pt}
\resizebox{\textwidth}{!}{
\begin{tabular}{l|cccccccc|c}
\toprule
Method & SUN397 & Cars & RESISC45 & EuroSAT & SVHN & GTSRB & MNIST & DTD & Avg. \\
\midrule
Pre-trained & 63.2 & 59.8 & 60.7 & 46.0 & 31.6 & 32.5 & 48.2 & 43.9 & 48.2 \\
Fine-tuned (STL) & 75.0 & 78.3 & 95.2 & 99.0 & 97.3 & 98.9 & 99.6 & 79.7 & 90.3 \\
Traditional MTL & 72.3 & 76.6 & 92.2 & 97.9 & 95.5 & 97.7 & 99.3 & 77.7 & 88.6 \\
\midrule
Simple Averaging & 65.4\zerogain & 62.4\zerogain & 70.6\zerogain & 75.7\zerogain & 64.5\zerogain & 55.0\zerogain & 86.3\zerogain & 50.6\zerogain & 66.3\zerogain \\
\quad w/ Surgery & 67.4\upgain{2.0} & 63.5\upgain{1.1} & 80.5\upgain{9.9} & 94.7\upgain{19.} & 70.8\upgain{6.3} & 79.7\upgain{24.} & 96.8\upgain{10.} & 64.7\upgain{14.} & 77.3\upgain{11.} \\
\quad w/ ProbSurgery & 69.2\upgain{3.8} & 65.8\upgain{3.4} & 85.8\upgain{15.} & 93.4\upgain{17.} & 70.4\upgain{5.9} & 87.0\upgain{32.} & 96.5\upgain{10.} & 69.1\upgain{18.} & 79.7\upgain{13.} \\
\rowcolor{FeatCalRow}\quad w/ \method & 69.7\upgain{4.3} & 70.4\upgain{8.0} & 85.0\upgain{14.} & 95.4\upgain{19.} & 92.6\upgain{28.} & 87.1\upgain{32.} & 97.9\upgain{11.} & 67.4\upgain{16.} & 83.2\upgain{16.} \\
\midrule
Task Arithmetic & 57.0\zerogain & 55.7\zerogain & 64.7\zerogain & 73.3\zerogain & 77.9\zerogain & 68.5\zerogain & 96.1\zerogain & 47.1\zerogain & 67.5\zerogain \\
\quad w/ Surgery & 59.9\upgain{2.9} & 59.9\upgain{4.2} & 76.1\upgain{11.} & 92.4\upgain{19.} & 83.6\upgain{5.7} & 84.9\upgain{16.} & 98.0\upgain{1.9} & 61.1\upgain{14.} & 77.0\upgain{9.5} \\
\quad w/ ProbSurgery & 60.6\upgain{3.6} & 61.1\upgain{5.4} & 81.0\upgain{16.} & 93.8\upgain{20.} & 85.3\upgain{7.4} & 87.1\upgain{18.} & 97.9\upgain{1.8} & 63.8\upgain{16.} & 78.8\upgain{11.} \\
\rowcolor{FeatCalRow}\quad w/ \method & 70.1\upgain{13.} & 72.5\upgain{16.} & 88.1\upgain{23.} & 96.3\upgain{23.} & 95.0\upgain{17.} & 93.0\upgain{24.} & 98.8\upgain{2.7} & 69.8\upgain{22.} & 85.5\upgain{18.} \\
\midrule
AdaMerging & 67.9\zerogain & 71.2\zerogain & 84.0\zerogain & 92.3\zerogain & 87.6\zerogain & 93.1\zerogain & 98.2\zerogain & 66.9\zerogain & 82.7\zerogain \\
\quad w/ Surgery & 69.8\upgain{1.9} & 72.1\upgain{0.9} & 88.7\upgain{4.7} & 95.3\upgain{3.0} & 90.5\upgain{2.9} & 95.7\upgain{2.6} & 98.7\upgain{0.5} & 73.3\upgain{6.4} & 85.5\upgain{2.8} \\
\quad w/ ProbSurgery & 70.6\upgain{2.7} & 72.9\upgain{1.7} & 90.4\upgain{6.4} & 95.8\upgain{3.5} & 90.2\upgain{2.6} & 95.0\upgain{1.9} & 98.8\upgain{0.6} & 73.8\upgain{6.9} & 85.9\upgain{3.2} \\
\rowcolor{FeatCalRow}\quad w/ \method & \textbf{73.0}\upgain{5.1} & \textbf{77.4}\upgain{6.2} & 91.3\upgain{7.3} & 96.8\upgain{4.5} & 94.2\upgain{6.6} & 96.7\upgain{3.6} & 99.0\upgain{0.8} & 76.4\upgain{9.5} & 88.1\upgain{5.4} \\
\midrule
WUDI-Merging & 68.0\zerogain & 72.5\zerogain & 85.0\zerogain & 94.6\zerogain & 94.8\zerogain & 95.0\zerogain & 99.3\zerogain & 66.7\zerogain & 84.5\zerogain \\
\quad w/ Surgery & 69.0\upgain{1.0} & 71.7\downgain{0.8} & 89.1\upgain{4.1} & 97.2\upgain{2.6} & 95.6\upgain{0.8} & 96.7\upgain{1.7} & 99.3\upgain{0.0} & 72.7\upgain{6.0} & 86.4\upgain{1.9} \\
\quad w/ ProbSurgery & 69.5\upgain{1.5} & 72.7\upgain{0.2} & 90.6\upgain{5.6} & 97.3\upgain{2.7} & 95.6\upgain{0.8} & 97.1\upgain{2.1} & 99.3\upgain{0.0} & 73.3\upgain{6.6} & 86.9\upgain{2.4} \\
\rowcolor{FeatCalRow}\quad w/ \method & 72.2\upgain{4.2} & 76.0\upgain{3.5} & \textbf{93.0}\upgain{8.0} & \textbf{98.1}\upgain{3.5} & \textbf{96.7}\upgain{1.9} & \textbf{97.9}\upgain{2.9} & \textbf{99.4}\upgain{0.1} & \textbf{77.1}\upgain{10.} & \textbf{88.8}\upgain{4.3} \\
\bottomrule
\end{tabular}
}
\end{table}

\begin{wraptable}[12]{r}{0.53\textwidth}
\centering
\vspace{-2.0em}
\caption{\small CLIP Extended Average Accuracy.}
\label{tab:clip-extended-avg}
\vspace{0.15em}
\scriptsize
\setlength{\tabcolsep}{1.7pt}
\renewcommand{\arraystretch}{0.78}
\resizebox{\linewidth}{!}{
\begin{tabular}{l|ccccc}
\toprule
\multirow{2}{*}{Method} & 8-Task & 14-Task & 14-Task & 20-Task & 20-Task \\
& L/14 & B/32 & L/14 & B/32 & L/14 \\
\midrule
Pre-trained & 64.6 & 58.8 & 69.1 & 55.6 & 65.6 \\
Fine-tuned (STL) & 94.3 & 90.0 & 92.8 & 90.3 & 93.1 \\
\midrule
Task Arithmetic & 80.5\zerogain & 66.2\zerogain & 77.3\zerogain & 60.6\zerogain & 70.3\zerogain \\
\quad w/ Surgery & 86.0\upgain{5.5} & 76.8\upgain{10.} & 83.8\upgain{6.5} & 75.2\upgain{14.} & 82.4\upgain{12.} \\
\quad w/ ProbSurgery & 87.6\upgain{7.1} & 78.3\upgain{12.} & 86.1\upgain{8.8} & 77.7\upgain{17.} & 83.3\upgain{13.} \\
\rowcolor{FeatCalRow}\quad w/ \method & 91.6\upgain{11.} & 81.5\upgain{15.} & 90.7\upgain{13.} & 79.4\upgain{18.} & 84.7\upgain{14.} \\
\midrule
WUDI-Merging & 92.2\zerogain & 78.7\zerogain & 88.8\zerogain & 67.1\zerogain & 75.8\zerogain \\
\quad w/ Surgery & 92.8\upgain{0.6} & 82.5\upgain{3.8} & 90.3\upgain{1.5} & 76.5\upgain{9.4} & 84.9\upgain{9.1} \\
\quad w/ ProbSurgery & 93.0\upgain{0.8} & 82.7\upgain{4.0} & 90.7\upgain{1.9} & 77.6\upgain{10.} & 86.5\upgain{10.} \\
\rowcolor{FeatCalRow}\quad w/ \method & \textbf{93.5}\upgain{1.3} & \textbf{86.2}\upgain{7.5} & \textbf{91.5}\upgain{2.7} & \textbf{83.5}\upgain{16.} & \textbf{89.8}\upgain{14.} \\
\bottomrule
\end{tabular}
}
\vspace{-0.7em}
\end{wraptable}

On B/32 8-task CLIP, \method raises the 4 upstream averages from 66.3/67.5/82.7/84.5 to 83.2/85.5/88.1/88.8, beating Surgery and ProbSurgery in each block. The gains are largest for the weaker upstream mergers, but \method also improves AdaMerging and WUDI-Merging, where the merged models are already close to the multi-task reference. This pattern aligns with the feature drift motivation: the same calibration step can recover large lost accuracy and still refine strong merged models without changing the merger itself. The best average, 88.8, is close to the 90.3 task expert average and above the 88.6 multi-task reference. For the TA and WUDI rows in \Cref{tab:clip-extended-avg}, \method remains above Surgery and ProbSurgery across all extended averages. On 20-task TA, \method reaches 79.4 on B/32 and 84.7 on L/14, giving \(+18.8\) and \(+14.4\) points over TA. Full per-task results are in \cref{app:clip-detailed-results}.

\subsection{Results on FLAN-T5 Models}
\label{sec:flan-t5-results}

\begin{table}[!htbp]
\centering
\caption{
FLAN-T5 GLUE generation results.
Scores are percentages: exact-match accuracy except STS-B, which reports Spearman's \(\rho\); arrows show changes over Task Arithmetic, and bold marks best non-reference entries.
}
\label{tab:flan-t5-results}
\setlength{\tabcolsep}{2.9pt}
\resizebox{\textwidth}{!}{
\begin{tabular}{l|l|cccccccc|c}
\toprule
\multirow{2}{*}{Model} & \multirow{2}{*}{Method} & \multicolumn{8}{c|}{GLUE Tasks} & \multirow{2}{*}{Avg.} \\
\cmidrule(lr){3-10}
& & CoLA & MNLI & MRPC & QNLI & QQP & RTE & SST-2 & STS-B & \\
\midrule
\multirow{6}{*}{Base}
& Pre-trained & 69.1 & 56.5 & 76.2 & 88.4 & 82.1 & 80.1 & 91.2 & 62.2 & 75.7 \\
& Fine-tuned (STL) & 75.0 & 83.4 & 87.5 & 91.5 & 85.4 & 85.9 & 93.6 & 88.7 & 86.4 \\
\cmidrule(lr){2-11}
& Task Arithmetic & 70.5\zerogain & 57.8\zerogain & 78.4\zerogain & 90.2\zerogain & 83.6\zerogain & 80.5\zerogain & 92.3\zerogain & 77.8\zerogain & 78.9\zerogain \\
& \quad w/ Surgery & 70.8\upgain{0.3} & 82.4\upgain{24.} & 82.4\upgain{4.0} & 89.8\downgain{0.4} & 84.2\upgain{0.6} & 83.0\upgain{2.5} & 92.1\downgain{0.2} & 85.2\upgain{7.4} & 83.7\upgain{4.8} \\
& \quad w/ ProbSurgery & \textbf{82.4}\upgain{11.} & 69.1\upgain{11.} & 78.3\downgain{0.1} & 80.6\downgain{9.6} & \textbf{89.8}\upgain{6.2} & 83.4\upgain{2.9} & 81.2\downgain{11.} & \textbf{92.5}\upgain{14.} & 82.2\upgain{3.3} \\
\rowcolor{FeatCalRow}
& \quad w/ \method & 72.2\upgain{1.7} & \textbf{82.6}\upgain{24.} & \textbf{85.1}\upgain{6.7} & \textbf{91.1}\upgain{0.9} & 84.7\upgain{1.1} & \textbf{85.2}\upgain{4.7} & \textbf{93.0}\upgain{0.7} & 87.7\upgain{9.9} & \textbf{85.2}\upgain{6.3} \\
\midrule
\multirow{6}{*}{Large}
& Pre-trained & 73.7 & 56.6 & 82.4 & 91.1 & 85.5 & 85.6 & 94.3 & 87.5 & 82.1 \\
& Fine-tuned (STL) & 80.2 & 88.5 & 89.2 & 94.4 & 87.2 & 91.7 & 95.2 & 90.9 & 89.6 \\
\cmidrule(lr){2-11}
& Task Arithmetic & 76.8\zerogain & 85.4\zerogain & 85.3\zerogain & \textbf{94.0}\zerogain & 85.8\zerogain & 88.1\zerogain & 95.2\zerogain & 87.7\zerogain & 87.3\zerogain \\
& \quad w/ Surgery & 76.0\downgain{0.8} & \textbf{87.9}\upgain{2.5} & 86.0\upgain{0.7} & 93.9\downgain{0.1} & 86.2\upgain{0.4} & \textbf{89.9}\upgain{1.8} & 95.2\upgain{0.0} & 89.0\upgain{1.3} & 88.0\upgain{0.7} \\
& \quad w/ ProbSurgery & 77.2\upgain{0.4} & 87.6\upgain{2.2} & 85.0\downgain{0.3} & 93.8\downgain{0.2} & 86.0\upgain{0.2} & 88.4\upgain{0.3} & 95.2\upgain{0.0} & 87.8\upgain{0.1} & 87.6\upgain{0.3} \\
\rowcolor{FeatCalRow}
& \quad w/ \method & \textbf{79.2}\upgain{2.4} & 87.8\upgain{2.4} & \textbf{89.0}\upgain{3.7} & 93.9\downgain{0.1} & \textbf{86.9}\upgain{1.1} & 88.4\upgain{0.3} & \textbf{95.3}\upgain{0.1} & \textbf{90.8}\upgain{3.1} & \textbf{88.9}\upgain{1.6} \\
\bottomrule
\end{tabular}
}
\end{table}

\Cref{tab:flan-t5-results} shows that the GLUE gains hold for both FLAN-T5-base and FLAN-T5-large. For base, \method improves Task Arithmetic by \(+6.3\) average points, exceeds Surgery and ProbSurgery, and improves all 8 tasks, with the largest gains on MNLI and STS-B. For large, where LoRA-based Task Arithmetic is already strong, \method gives smaller gains but still reaches the best post-TA average. This contrast suggests that feature calibration helps most when the merged generator has clear headroom, while still helping in the stronger setting.

\subsection{Results on MergeBench LLMs}
\label{sec:mergebench-results}

\begin{table}[!htbp]
\centering
\caption{
MergeBench results on Llama-family models.
All Task Arithmetic rows use scale \(0.3\); arrows show changes over Task Arithmetic within each model block.
Bold marks the best non-reference entry for each metric within the corresponding model block.
}
\label{tab:mergebench-llama-results}
\setlength{\tabcolsep}{3.8pt}
\resizebox{\textwidth}{!}{
\begin{tabular}{ll|cccccc|c}
\toprule
Model & Method & MATH-500 & GSM8K & HumanEval+ & MBPP+ & IFEval & ARC-C & Avg. \\
\midrule
\multirow{6}{*}{Llama-3.2-3B-Instruct}
& Base & 47.8 & 72.6 & 49.4 & 56.6 & 67.3 & 73.1 & 61.1 \\
& Fine-tuned (STL) & 49.0 & 80.6 & 54.9 & 57.1 & 69.7 & 73.1 & 64.1 \\
\cmidrule(lr){2-9}
& Task Arithmetic & 44.0\zerogain & 74.8\zerogain & 52.4\zerogain & 53.4\zerogain & 63.2\zerogain & 72.6\zerogain & 60.1\zerogain \\
& \quad w/ Surgery & 45.8\upgain{1.8} & 74.5\downgain{0.3} & 53.0\upgain{0.6} & 54.5\upgain{1.1} & 64.3\upgain{1.1} & 72.9\upgain{0.3} & 60.8\upgain{0.7} \\
& \quad w/ ProbSurgery & 46.0\upgain{2.0} & 74.8\upgain{0.0} & \textbf{54.3}\upgain{1.9} & 54.5\upgain{1.1} & 64.1\upgain{0.9} & \textbf{73.0}\upgain{0.4} & 61.1\upgain{1.0} \\
\rowcolor{FeatCalRow}
& \quad w/ \method & \textbf{47.0}\upgain{3.0} & \textbf{78.0}\upgain{3.2} & 53.0\upgain{0.6} & \textbf{55.3}\upgain{1.9} & \textbf{66.5}\upgain{3.3} & 72.5\downgain{0.1} & \textbf{62.1}\upgain{2.0} \\
\midrule
\multirow{6}{*}{Llama-3.1-8B-Instruct}
& Base & 48.6 & 83.5 & 62.2 & 63.0 & 72.8 & 80.9 & 68.5 \\
& Fine-tuned (STL) & 52.6 & 85.3 & 67.1 & 63.8 & 72.8 & 80.9 & 70.4 \\
\cmidrule(lr){2-9}
& Task Arithmetic & 49.0\zerogain & \textbf{85.6}\zerogain & 62.8\zerogain & 61.6\zerogain & 45.3\zerogain & 76.9\zerogain & 63.5\zerogain \\
& \quad w/ Surgery & 47.2\downgain{1.8} & 85.2\downgain{0.4} & \textbf{64.6}\upgain{1.8} & 62.2\upgain{0.6} & 47.1\upgain{1.8} & 77.6\upgain{0.7} & 64.0\upgain{0.5} \\
& \quad w/ ProbSurgery & \textbf{49.6}\upgain{0.6} & 85.3\downgain{0.3} & 62.8\upgain{0.0} & 61.9\upgain{0.3} & 48.8\upgain{3.5} & 77.8\upgain{0.9} & 64.4\upgain{0.9} \\
\rowcolor{FeatCalRow}
& \quad w/ \method & 47.6\downgain{1.4} & 84.5\downgain{1.1} & 59.8\downgain{3.0} & \textbf{62.7}\upgain{1.1} & \textbf{60.8}\upgain{15.} & \textbf{79.5}\upgain{2.6} & \textbf{65.8}\upgain{2.3} \\
\bottomrule
\end{tabular}
}
\end{table}

\Cref{tab:mergebench-llama-results} shows that the gains extend to LLM domain expert merging. \method improves the 6-task average over Task Arithmetic by \(+2.0\) points on the 3B model and \(+2.3\) points on the 8B model, outperforming Surgery and ProbSurgery in both model blocks. The largest gain is on IFEval for the 8B setting, where \method improves Task Arithmetic by \(+15.\) points.

\subsection{Analysis}
\label{sec:analysis}

We collect 4 diagnostics that probe the mechanism, practical cost, robustness, and stability of post-merging feature calibration.

\begin{figure}[t]
\centering
\vspace{-0.5em}
\includegraphics[width=\textwidth]{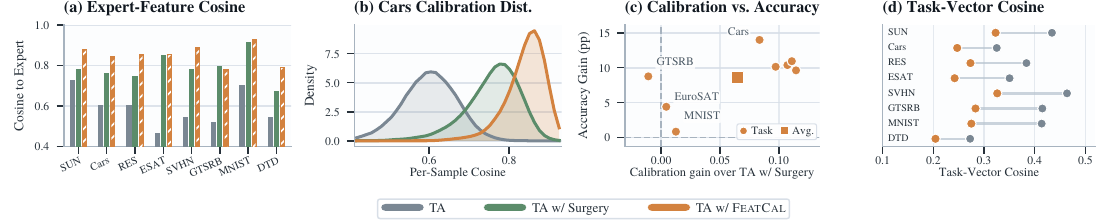}
\vspace{-1.15em}
\caption{
Feature-calibration diagnostics for TA on CLIP-ViT-B/32. 
(a) Task-wise final-feature cosine to experts; (b) per-sample cosine on Cars; (c) expert-cosine gain versus accuracy gain over TA w/ Surgery; (d) backbone task-vector cosine to same-task experts.
}
\label{fig:feature-calibration-diagnostics}
\end{figure}

\paragraph{Feature-calibration diagnostics.}
\Cref{fig:feature-calibration-diagnostics} tests whether TA w/ \method{} gains coincide with final features closer to task experts. Panels (a) and (b) show larger expert cosine than Surgery in this setting: the macro average rises from 0.785 to 0.850 over TA w/ Surgery, and Stanford Cars has a mean per-sample gain of 0.084. This matches the intended mechanism because \method{} calibrates the feature distribution reached by the deployed merged model. Panel (c) shows that expert-feature cosine is a diagnostic rather than a complete explanation: larger cosine gains often track accuracy gains, with an average \(+8.64\)-point gain over TA w/ Surgery, but EuroSAT and MNIST improve with small cosine changes, while GTSRB improves despite a slight decrease. Panel (d) highlights a feature-space and parameter-space mismatch: although \method{} moves final features closer to experts, the calibrated backbone has lower same-task expert task-vector cosine than the TA baseline on every task. Thus, \method{} calibrates the deployed feature distribution without bringing the backbone parameters into closer task-vector alignment, and the ProbSurgery comparison in \cref{app:probsurgery-diagnostic} shows the same pattern with a stronger adapter baseline.

\begin{wrapfigure}[21]{r}{0.46\textwidth}
\centering
\vspace{-1.8em}
\captionof{table}{\small
Post-TA runtime at \(n=256\), excluding final evaluation. Speedup is relative to Surgery.
}
\label{tab:ta8-runtime-efficiency-n256}
\vspace{0.15em}
{\setlength{\tabcolsep}{1.7pt}
\renewcommand{\arraystretch}{1.0}
\newcommand{\runtimecell}[2]{\makebox[4.35em][c]{#1$\times$ #2}}
\small
\begin{tabular}{@{}lccc@{}}
\toprule
\multirow{2}{*}{Method}
& Speedup
& GPU Energy
& CPU RSS \\
& (Time) & (Wh) & (GiB) \\
\midrule
Surgery & \runtimecell{1.0}{(217s)} & 18.1 & 69.3 \\
ProbSurgery & \runtimecell{1.0}{(224s)} & 18.3 & 87.0 \\
\rowcolor{FeatCalRow} \method & \runtimecell{\textbf{4.1}}{\textbf{(053s)}} & \textbf{\phantom{0}1.9} & \textbf{22.8} \\
\bottomrule
\end{tabular}
}
\vspace{-0.65em}
\includegraphics[width=0.98\linewidth]{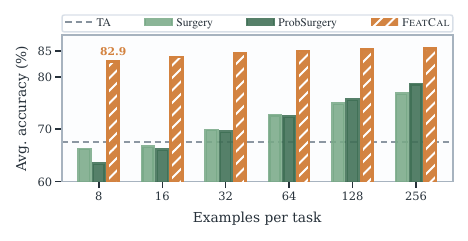}
\vspace{-0.45em}
\captionof{figure}{\small
Sample Efficiency.
}
\label{fig:ta8-data-efficiency}
\vspace{-0.2em}
\end{wrapfigure}

\paragraph{Sample efficiency and calibration cost.}
\Cref{fig:ta8-data-efficiency} reports post-TA sample efficiency on CLIP-ViT-B/32 TA with matched per-task budgets \(n\). All methods start from the same Task Arithmetic model and differ only in the post-merging calibration stream.
With \(8\) examples per task, \method already reaches \(82.9\%\) average accuracy and then saturates near \(85.5\%\), while Surgery and ProbSurgery rise more slowly.
\Cref{tab:ta8-runtime-efficiency-n256} fixes \(n=256\): excluding final evaluation, the closed-form update takes 53s, \(4.1\times\) faster than Surgery and \(4.2\times\) faster than ProbSurgery, with lower GPU energy and much less peak CPU RSS.
Together, \method reaches most of its TA gain with few examples and keeps calibration cost low at \(n=256\).
Full per-budget accuracy, resource averages, hardware, and logging details are in \cref{app:efficiency-protocol}.
\WFclear

\paragraph{Robustness to corrupted calibration data.}
\begin{wrapfigure}[13]{r}{0.52\textwidth}
\centering
\vspace{-1.55em}
\captionof{table}{\small
8-task TA accuracy under corrupted calibration. Avg. includes clean, Gaussian, blur, and fog.
}
\label{tab:ta8-corrupted-calibration}
\vspace{0.2em}
{\setlength{\tabcolsep}{1.8pt}
\renewcommand{\arraystretch}{0.92}
\footnotesize
\begin{tabularx}{\linewidth}{@{}>{\raggedright\arraybackslash}p{0.31\linewidth}*{5}{>{\centering\arraybackslash}X}@{}}
\toprule
Method & Clean & Gauss. & Blur & Fog & Avg. \\
\midrule
Task Arithmetic & 67.5 & -- & -- & -- & -- \\
\quad w/ Surgery & 76.8 & 72.4 & 67.5 & 69.4 & 71.5 \\
\quad w/ ProbSurgery & 79.1 & 73.2 & 65.8 & 68.2 & 71.6 \\
\rowcolor{FeatCalRow}\quad w/ \method & \textbf{85.5} & \textbf{76.6} & \textbf{74.2} & \textbf{75.6} & \textbf{78.0} \\
\bottomrule
\end{tabularx}
}
\vspace{-0.3em}
\includegraphics[width=1.0\linewidth]{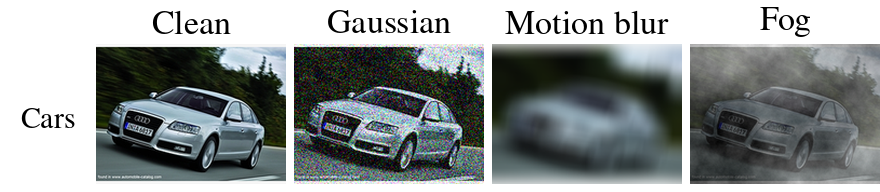}
\vspace{-1.55em}
\captionof{figure}{\small
Calibration examples under corruptions.
}
\label{fig:calibration-corruption-examples}
\vspace{-0.9em}
\end{wrapfigure}

We corrupt only the images used for post-merging calibration and evaluate on clean test sets from 8-task TA, isolating calibration data quality rather than test-time corruption robustness. Protocol details are in \cref{app:corrupted-calibration-protocol}.

\method remains strongest in every reported setting, with a \(78.0\%\) average over clean, Gaussian noise, motion blur, and fog, compared with \(71.5\%\) for Surgery and \(71.6\%\) for ProbSurgery.
Corrupted runs remain below the clean reference, but the stable ordering in \cref{fig:calibration-corruption-examples} suggests that \method extracts a reliable expert-feature calibration signal from imperfect data.
\WFclear

\Needspace{0.62\textheight}
\paragraph{Ablation study.}
\begin{wrapfigure}[16]{r}{0.369\textwidth}
\centering
\vspace{0em}
\includegraphics[width=\linewidth]{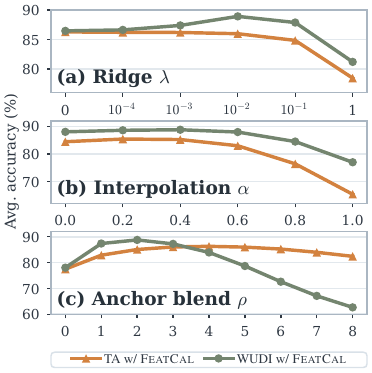}
\vspace{-1.5em}
\caption{\small
CLIP-ViT-B/32 TA coefficient sweeps.
}
\label{fig:featcal-clip-ablation}
\vspace{0em}
\end{wrapfigure}

\Cref{fig:featcal-clip-ablation} reports single-factor sensitivity in the 8-task TA setting.
\method works best in a conservative calibration regime. Overly aggressive targets or regularization can degrade performance.
The ridge sweep is flat over small positive values, so the closed-form solve is not tied to a narrow regularization choice, while very large ridge strength suppresses the update.
The interpolation sweep has a sharper target-side tradeoff: small and medium \(\alpha\) values help, whereas the hard-expert endpoint can force local updates to absorb upstream drift that should be calibrated gradually across layers.
The merged-base blend curve is most asymmetric: moderate extrapolation improves both upstream mergers, but large \(\rho\) values push calibrated weights too far from the merged-base reference and especially hurt the WUDI variant.
Overall, the ablation is a stability diagnostic with a broad conservative region rather than a single sharp optimum.
\WFclear

\section{Conclusion}

Feature drift frames post-merging calibration by showing how local mismatches can propagate through a merged model and affect outputs. \method uses this signal by capturing features layer by layer and applying closed-form updates to linear modules, improving CLIP and FLAN-T5 mergers over Surgery and ProbSurgery in the main settings without adding modules at inference.


\bibliographystyle{unsrtnat}
\bibliography{neurips_2026}

\begin{thebibliography}{64}
\providecommand{\natexlab}[1]{#1}
\providecommand{\url}[1]{\texttt{#1}}
\expandafter\ifx\csname urlstyle\endcsname\relax
  \providecommand{\doi}[1]{doi: #1}\else
  \providecommand{\doi}{doi: \begingroup \urlstyle{rm}\Url}\fi

\bibitem[Wortsman et~al.(2022)Wortsman, Ilharco, Gadre, Roelofs, Gontijo-Lopes,
  Morcos, Namkoong, Farhadi, Carmon, Kornblith, and
  Schmidt]{wortsman2022modelsoups}
Mitchell Wortsman, Gabriel Ilharco, Samir~Ya Gadre, Rebecca Roelofs, Raphael
  Gontijo-Lopes, Ari~S. Morcos, Hongseok Namkoong, Ali Farhadi, Yair Carmon,
  Simon Kornblith, and Ludwig Schmidt.
\newblock Model soups: averaging weights of multiple fine-tuned models improves
  accuracy without increasing inference time.
\newblock In \emph{ICML}, 2022.

\bibitem[Matena and Raffel(2022)]{matena2022fisher}
Michael~S. Matena and Colin~A. Raffel.
\newblock Merging models with fisher-weighted averaging.
\newblock In \emph{NeurIPS}, 2022.

\bibitem[Ilharco et~al.(2023)Ilharco, Ribeiro, Wortsman, Schmidt, Hajishirzi,
  and Farhadi]{ilharco2023taskarithmetic}
Gabriel Ilharco, Marco~Tulio Ribeiro, Mitchell Wortsman, Ludwig Schmidt,
  Hannaneh Hajishirzi, and Ali Farhadi.
\newblock Editing models with task arithmetic.
\newblock In \emph{ICLR}, 2023.

\bibitem[Yadav et~al.(2023)Yadav, Tam, Choshen, Raffel, and
  Bansal]{yadav2023ties}
Prateek Yadav, Derek Tam, Leshem Choshen, Colin Raffel, and Mohit Bansal.
\newblock {TIES-Merging}: Resolving interference when merging models.
\newblock In \emph{NeurIPS}, 2023.

\bibitem[Yang et~al.(2024{\natexlab{a}})Yang, Wang, Shen, Liu, Guo, Wang, and
  Tao]{yang2024adamerging}
Enneng Yang, Zhenyi Wang, Li~Shen, Shiwei Liu, Guibing Guo, Xingwei Wang, and
  Dacheng Tao.
\newblock {AdaMerging}: Adaptive model merging for multi-task learning.
\newblock In \emph{ICLR}, 2024{\natexlab{a}}.

\bibitem[Yu et~al.(2024)Yu, Yu, Yu, Huang, and Li]{yu2024dare}
Le~Yu, Bowen Yu, Haiyang Yu, Fei Huang, and Yongbin Li.
\newblock Language models are {Super Mario}: Absorbing abilities from
  homologous models as a free lunch.
\newblock In \emph{ICML}, 2024.

\bibitem[Davari and Belilovsky(2024)]{davari2024breadcrumbs}
MohammadReza Davari and Eugene Belilovsky.
\newblock {Model Breadcrumbs}: Scaling multi-task model merging with sparse
  masks.
\newblock In \emph{ECCV}, 2024.

\bibitem[Tam et~al.(2024)Tam, Bansal, and Raffel]{tam2024mats}
Derek Tam, Mohit Bansal, and Colin Raffel.
\newblock Merging by matching models in task parameter subspaces.
\newblock \emph{TMLR}, 2024.

\bibitem[Daheim et~al.(2024)Daheim, M{\"o}llenhoff, Ponti, Gurevych, and
  Khan]{daheim2024gradmatch}
Nico Daheim, Thomas M{\"o}llenhoff, Edoardo~M. Ponti, Iryna Gurevych, and
  Mohammad~Emtiyaz Khan.
\newblock Model merging by uncertainty-based gradient matching.
\newblock In \emph{ICLR}, 2024.

\bibitem[Cheng et~al.(2025)Cheng, Xiong, Wei, Zhu, and Yuan]{cheng2025wudi}
Runxi Cheng, Feng Xiong, Yongxian Wei, Wanyun Zhu, and Chun Yuan.
\newblock Whoever started the interference should end it: Guiding data-free
  model merging via task vectors.
\newblock In \emph{ICML}, 2025.

\bibitem[Yang et~al.(2024{\natexlab{b}})Yang, Shen, Wang, Guo, Chen, Wang, and
  Tao]{yang2024surgery}
Enneng Yang, Li~Shen, Zhenyi Wang, Guibing Guo, Xiaojun Chen, Xingwei Wang, and
  Dacheng Tao.
\newblock Representation surgery for multi-task model merging.
\newblock In \emph{ICML}, 2024{\natexlab{b}}.

\bibitem[Wei et~al.(2025)Wei, He, Yang, Liu, Wang, Feng, and
  An]{wei2025probsurgery}
Qi~Wei, Shuo He, Enneng Yang, Tingcong Liu, Haobo Wang, Lei Feng, and Bo~An.
\newblock Representation surgery in model merging with probabilistic modeling.
\newblock In \emph{ICML}, 2025.

\bibitem[Yang et~al.(2024{\natexlab{c}})Yang, Shen, Wang, Guo, Wang, Cao,
  Zhang, and Tao]{yang2024surgeryv2}
Enneng Yang, Li~Shen, Zhenyi Wang, Guibing Guo, Xingwei Wang, Xiaocun Cao, Jie
  Zhang, and Dacheng Tao.
\newblock {SurgeryV2}: Bridging the gap between model merging and multi-task
  learning with deep representation surgery.
\newblock \emph{arXiv preprint arXiv:2410.14389}, 2024{\natexlab{c}}.

\bibitem[Osial et~al.(2025)Osial, Marczak, and
  Zieli{\'n}ski]{osial2025intervmerge}
Marcin Osial, Daniel Marczak, and Bartosz Zieli{\'n}ski.
\newblock Parameter-efficient interventions for enhanced model merging.
\newblock In \emph{Proceedings of the 2025 SIAM International Conference on
  Data Mining}, 2025.

\bibitem[Ortiz-Jimenez et~al.(2023)Ortiz-Jimenez, Favero, and
  Frossard]{ortizjimenez2023tangenttask}
Guillermo Ortiz-Jimenez, Alessandro Favero, and Pascal Frossard.
\newblock Task arithmetic in the tangent space: Improved editing of pre-trained
  models.
\newblock In \emph{NeurIPS}, 2023.

\bibitem[Jin et~al.(2023)Jin, Ren, Preotiuc-Pietro, and Cheng]{jin2023regmean}
Xisen Jin, Xiang Ren, Daniel Preotiuc-Pietro, and Pengxiang Cheng.
\newblock Dataless knowledge fusion by merging weights of language models.
\newblock In \emph{ICLR}, 2023.

\bibitem[Nguyen et~al.(2025)Nguyen, Dang, Suzuki, and
  Nguyen]{nguyen2025regmeanpp}
The-Hai Nguyen, Huu-Tien Dang, Takeshi Suzuki, and Le-Minh Nguyen.
\newblock {RegMean++}: Enhancing effectiveness and generalization of regression
  mean for model merging.
\newblock \emph{arXiv preprint arXiv:2508.03121}, 2025.

\bibitem[Sun et~al.(2025)Sun, Li, Wang, Liu, Geng, and Li]{sun2025lot}
Wenju Sun, Qingyong Li, Wen Wang, Yang Liu, Yangliao Geng, and Boyang Li.
\newblock Towards minimizing feature drift in model merging: Layer-wise task
  vector fusion for adaptive knowledge integration.
\newblock In \emph{NeurIPS}, 2025.

\bibitem[Hoerl and Kennard(1970)]{hoerl1970ridge}
Arthur~E. Hoerl and Robert~W. Kennard.
\newblock Ridge regression: Biased estimation for nonorthogonal problems.
\newblock \emph{Technometrics}, 1970.

\bibitem[Tikhonov and Arsenin(1977)]{tikhonov1977solutions}
A.~N. Tikhonov and V.~Y. Arsenin.
\newblock \emph{Solutions of Ill-posed Problems}.
\newblock V. H. Winston \& Sons, 1977.
\newblock Distributed solely by Halsted Press.

\bibitem[Tang et~al.(2025)Tang, Shen, Luo, Yang, Hu, Zhang, Du, and
  Tao]{tang2025fusionbench}
Anke Tang, Li~Shen, Yong Luo, Enneng Yang, Han Hu, Lefei Zhang, Bo~Du, and
  Dacheng Tao.
\newblock Fusionbench: A unified library and comprehensive benchmark for deep
  model fusion.
\newblock \emph{JMLR}, 2025.

\bibitem[He et~al.(2025)He, Zeng, Hu, Yang, Zhang, and Zhao]{he2025mergebench}
Yifei He, Siqi Zeng, Yuzheng Hu, Rui Yang, Tong Zhang, and Han Zhao.
\newblock {MergeBench}: A benchmark for merging domain-specialized {LLM}s.
\newblock \emph{arXiv preprint arXiv:2505.10833}, 2025.

\bibitem[Radford et~al.(2021)Radford, Kim, Hallacy, Ramesh, Goh, Agarwal,
  Sastry, Askell, Mishkin, Clark, Krueger, and Sutskever]{radford2021learning}
Alec Radford, Jong~Wook Kim, Chris Hallacy, Aditya Ramesh, Gabriel Goh,
  Sandhini Agarwal, Girish Sastry, Amanda Askell, Pamela Mishkin, Jack Clark,
  Gretchen Krueger, and Ilya Sutskever.
\newblock Learning transferable visual models from natural language
  supervision.
\newblock In \emph{ICML}, 2021.

\bibitem[Xiao et~al.(2010)Xiao, Hays, Ehinger, Oliva, and
  Torralba]{xiao2010sun}
Jianxiong Xiao, James Hays, Krista~A. Ehinger, Aude Oliva, and Antonio
  Torralba.
\newblock {SUN} database: Large-scale scene recognition from abbey to zoo.
\newblock In \emph{CVPR}, 2010.

\bibitem[Krause et~al.(2013)Krause, Stark, Deng, and Fei-Fei]{krause2013cars}
Jonathan Krause, Michael Stark, Jia Deng, and Li~Fei-Fei.
\newblock {3D} object representations for fine-grained categorization.
\newblock In \emph{ICCV Workshops}, 2013.

\bibitem[Cheng et~al.(2017)Cheng, Han, and Lu]{cheng2017remote}
Gong Cheng, Junwei Han, and Xiaoqiang Lu.
\newblock Remote sensing image scene classification: Benchmark and state of the
  art.
\newblock \emph{Proc. IEEE}, 2017.

\bibitem[Helber et~al.(2019)Helber, Bischke, Dengel, and
  Borth]{helber2019eurosat}
Patrick Helber, Benjamin Bischke, Andreas Dengel, and Damian Borth.
\newblock {EuroSAT}: A novel dataset and deep learning benchmark for land use
  and land cover classification.
\newblock \emph{IEEE Journal of Selected Topics in Applied Earth Observations
  and Remote Sensing}, 2019.

\bibitem[Netzer et~al.(2011)Netzer, Wang, Coates, Bissacco, Wu, and
  Ng]{netzer2011reading}
Yuval Netzer, Tao Wang, Adam Coates, Alessandro Bissacco, Bo~Wu, and Andrew~Y.
  Ng.
\newblock Reading digits in natural images with unsupervised feature learning.
\newblock In \emph{NIPS Workshop on Deep Learning and Unsupervised Feature
  Learning}, 2011.

\bibitem[Stallkamp et~al.(2011)Stallkamp, Schlipsing, Salmen, and
  Igel]{stallkamp2011german}
Johannes Stallkamp, Marc Schlipsing, Jan Salmen, and Christian Igel.
\newblock The german traffic sign recognition benchmark: A multi-class
  classification competition.
\newblock In \emph{The 2011 International Joint Conference on Neural Networks},
  2011.

\bibitem[LeCun et~al.(1998)LeCun, Bottou, Bengio, and
  Haffner]{lecun1998gradient}
Yann LeCun, Leon Bottou, Yoshua Bengio, and Patrick Haffner.
\newblock Gradient-based learning applied to document recognition.
\newblock \emph{Proc. IEEE}, 1998.

\bibitem[Cimpoi et~al.(2014)Cimpoi, Maji, Kokkinos, Mohamed, and
  Vedaldi]{cimpoi2014describing}
Mircea Cimpoi, Subhransu Maji, Iasonas Kokkinos, Sammy Mohamed, and Andrea
  Vedaldi.
\newblock Describing textures in the wild.
\newblock In \emph{CVPR}, 2014.

\bibitem[Nilsback and Zisserman(2008)]{nilsback2008automated}
Maria-Elena Nilsback and Andrew Zisserman.
\newblock Automated flower classification over a large number of classes.
\newblock In \emph{Indian Conference on Computer Vision, Graphics and Image
  Processing}, 2008.

\bibitem[Veeling et~al.(2018)Veeling, Linmans, Winkens, Cohen, and
  Welling]{veeling2018rotation}
Bastiaan~S. Veeling, Jasper Linmans, Jim Winkens, Taco Cohen, and Max Welling.
\newblock Rotation equivariant {CNNs} for digital pathology.
\newblock In \emph{Medical Image Computing and Computer Assisted Intervention},
  2018.

\bibitem[Goodfellow et~al.(2015)Goodfellow, Erhan, Carrier, Courville, Mirza,
  Hamner, Cukierski, Tang, Thaler, Lee, Zhou, Ramaiah, Feng, Li, Wang,
  Athanasakis, Shawe-Taylor, Milakov, Park, Ionescu, Popescu, Grozea, Bergstra,
  Xie, Romaszko, Xu, Chuang, and Bengio]{goodfellow2015challenges}
Ian~J. Goodfellow, Dumitru Erhan, Pierre~Luc Carrier, Aaron Courville, Mehdi
  Mirza, Ben Hamner, Will Cukierski, Yichuan Tang, David Thaler, Dong-Hyun Lee,
  Yingbo Zhou, Chetan Ramaiah, Fangxiang Feng, Ruifan Li, Xiaojie Wang,
  Dimitris Athanasakis, John Shawe-Taylor, Maxim Milakov, John Park, Radu
  Ionescu, Marius Popescu, Cristian Grozea, James Bergstra, Jingjing Xie,
  Lukasz Romaszko, Bing Xu, Zhang Chuang, and Yoshua Bengio.
\newblock Challenges in representation learning: A report on three machine
  learning contests.
\newblock \emph{Neural Networks}, 2015.

\bibitem[Parkhi et~al.(2012)Parkhi, Vedaldi, Zisserman, and
  Jawahar]{parkhi2012cats}
Omkar~M. Parkhi, Andrea Vedaldi, Andrew Zisserman, and C.~V. Jawahar.
\newblock Cats and dogs.
\newblock In \emph{CVPR}, 2012.

\bibitem[Coates et~al.(2011)Coates, Ng, and Lee]{coates2011analysis}
Adam Coates, Andrew~Y. Ng, and Honglak Lee.
\newblock An analysis of single-layer networks in unsupervised feature
  learning.
\newblock In \emph{Proceedings of the Fourteenth International Conference on
  Artificial Intelligence and Statistics}, 2011.

\bibitem[Krizhevsky(2009)]{krizhevsky2009learning}
Alex Krizhevsky.
\newblock Learning multiple layers of features from tiny images.
\newblock Technical report, University of Toronto, 2009.

\bibitem[Bossard et~al.(2014)Bossard, Guillaumin, and
  Van~Gool]{bossard2014food}
Lukas Bossard, Matthieu Guillaumin, and Luc Van~Gool.
\newblock {Food-101}: Mining discriminative components with random forests.
\newblock In \emph{ECCV}, 2014.

\bibitem[Xiao et~al.(2017)Xiao, Rasul, and Vollgraf]{xiao2017fashion}
Han Xiao, Kashif Rasul, and Roland Vollgraf.
\newblock {Fashion-MNIST}: A novel image dataset for benchmarking machine
  learning algorithms.
\newblock \emph{arXiv preprint arXiv:1708.07747}, 2017.

\bibitem[Cohen et~al.(2017)Cohen, Afshar, Tapson, and van
  Schaik]{cohen2017emnist}
Gregory Cohen, Saeed Afshar, Jonathan Tapson, and Andre van Schaik.
\newblock {EMNIST}: Extending {MNIST} to handwritten letters.
\newblock In \emph{International Joint Conference on Neural Networks}, 2017.

\bibitem[Clanuwat et~al.(2018)Clanuwat, Bober-Irizar, Kitamoto, Lamb, Yamamoto,
  and Ha]{clanuwat2018deep}
Tarin Clanuwat, Mikel Bober-Irizar, Asanobu Kitamoto, Alex Lamb, Kazuaki
  Yamamoto, and David Ha.
\newblock Deep learning for classical japanese literature.
\newblock In \emph{NeurIPS Workshop on Machine Learning for Creativity and
  Design}, 2018.

\bibitem[Socher et~al.(2013)Socher, Perelygin, Wu, Chuang, Manning, Ng, and
  Potts]{socher2013recursive}
Richard Socher, Alex Perelygin, Jean Wu, Jason Chuang, Christopher~D. Manning,
  Andrew~Y. Ng, and Christopher Potts.
\newblock Recursive deep models for semantic compositionality over a sentiment
  treebank.
\newblock In \emph{EMNLP}, 2013.

\bibitem[{OpenAI}(2021)]{openai2021renderedsst2}
{OpenAI}.
\newblock {Rendered SST-2 Dataset}.
\newblock \url{https://github.com/openai/CLIP/blob/main/data/rendered-sst2.md},
  2021.

\bibitem[Raffel et~al.(2020)Raffel, Shazeer, Roberts, Lee, Narang, Matena,
  Zhou, Li, and Liu]{raffel2020t5}
Colin Raffel, Noam Shazeer, Adam Roberts, Katherine Lee, Sharan Narang, Michael
  Matena, Yanqi Zhou, Wei Li, and Peter~J. Liu.
\newblock Exploring the limits of transfer learning with a unified text-to-text
  transformer.
\newblock \emph{JMLR}, 2020.

\bibitem[Wei et~al.(2022)Wei, Bosma, Zhao, Guu, Yu, Lester, Du, Dai, and
  Le]{wei2022finetuned}
Jason Wei, Maarten Bosma, Vincent~Y. Zhao, Kelvin Guu, Adams~Wei Yu, Brian
  Lester, Nan Du, Andrew~M. Dai, and Quoc~V. Le.
\newblock Finetuned language models are zero-shot learners.
\newblock In \emph{ICLR}, 2022.

\bibitem[Chung et~al.(2024)Chung, Hou, Longpre, Zoph, Tay, Fedus, Li, Wang,
  Dehghani, Brahma, Webson, Gu, Dai, Suzgun, Chen, Chowdhery, Castro-Ros,
  Pellat, Robinson, Valter, Narang, Mishra, Yu, Zhao, Huang, Dai, Yu, Petrov,
  Chi, Dean, Devlin, Roberts, Zhou, Le, and Wei]{chung2024scaling}
Hyung~Won Chung, Le~Hou, Shayne Longpre, Barret Zoph, Yi~Tay, William Fedus,
  Yunxuan Li, Xuezhi Wang, Mostafa Dehghani, Siddhartha Brahma, Albert Webson,
  Shixiang~Shane Gu, Zhuyun Dai, Mirac Suzgun, Xinyun Chen, Aakanksha
  Chowdhery, Alex Castro-Ros, Marie Pellat, Kevin Robinson, Dasha Valter,
  Sharan Narang, Gaurav Mishra, Adams Yu, Vincent Zhao, Yanping Huang,
  Andrew~M. Dai, Hongkun Yu, Slav Petrov, Ed~H. Chi, Jeff Dean, Jacob Devlin,
  Adam Roberts, Denny Zhou, Quoc~V. Le, and Jason Wei.
\newblock Scaling instruction-finetuned language models.
\newblock \emph{JMLR}, 2024.

\bibitem[Wang et~al.(2018)Wang, Singh, Michael, Hill, Levy, and
  Bowman]{wang2018glue}
Alex Wang, Amanpreet Singh, Julian Michael, Felix Hill, Omer Levy, and
  Samuel~R. Bowman.
\newblock {GLUE}: A multi-task benchmark and analysis platform for natural
  language understanding.
\newblock In \emph{Proceedings of the 2018 EMNLP Workshop BlackboxNLP:
  Analyzing and Interpreting Neural Networks for NLP}, 2018.

\bibitem[Warstadt et~al.(2019)Warstadt, Singh, and Bowman]{warstadt2019neural}
Alex Warstadt, Amanpreet Singh, and Samuel~R. Bowman.
\newblock Neural network acceptability judgments.
\newblock \emph{TACL}, 2019.

\bibitem[Williams et~al.(2018)Williams, Nangia, and Bowman]{williams2018broad}
Adina Williams, Nikita Nangia, and Samuel~R. Bowman.
\newblock A broad-coverage challenge corpus for sentence understanding through
  inference.
\newblock In \emph{NAACL-HLT}, 2018.

\bibitem[Dolan and Brockett(2005)]{dolan2005automatically}
William~B. Dolan and Chris Brockett.
\newblock Automatically constructing a corpus of sentential paraphrases.
\newblock In \emph{Proceedings of the Third International Workshop on
  Paraphrasing}, 2005.

\bibitem[Rajpurkar et~al.(2016)Rajpurkar, Zhang, Lopyrev, and
  Liang]{rajpurkar2016squad}
Pranav Rajpurkar, Jian Zhang, Konstantin Lopyrev, and Percy Liang.
\newblock {SQuAD}: 100,000+ questions for machine comprehension of text.
\newblock In \emph{EMNLP}, 2016.

\bibitem[Dagan et~al.(2006)Dagan, Glickman, and Magnini]{dagan2006pascal}
Ido Dagan, Oren Glickman, and Bernardo Magnini.
\newblock The {PASCAL} recognising textual entailment challenge.
\newblock In \emph{Machine Learning Challenges. Evaluating Predictive
  Uncertainty, Visual Object Classification, and Recognising Textual
  Entailment}. Springer, 2006.

\bibitem[Cer et~al.(2017)Cer, Diab, Agirre, Lopez-Gazpio, and
  Specia]{cer2017semeval}
Daniel Cer, Mona Diab, Eneko Agirre, I{\~n}igo Lopez-Gazpio, and Lucia Specia.
\newblock {SemEval}-2017 task 1: Semantic textual similarity multilingual and
  crosslingual focused evaluation.
\newblock In \emph{SemEval}, 2017.

\bibitem[Hu et~al.(2022)Hu, Shen, Wallis, Allen-Zhu, Li, Wang, Wang, and
  Chen]{hu2022lora}
Edward~J. Hu, Yelong Shen, Phillip Wallis, Zeyuan Allen-Zhu, Yuanzhi Li, Shean
  Wang, Lu~Wang, and Weizhu Chen.
\newblock {LoRA}: Low-rank adaptation of large language models.
\newblock In \emph{ICLR}, 2022.

\bibitem[Zhou et~al.(2026)Zhou, Zhang, Gu, Wang, Yan, Li, Chung, and
  Yang]{zhou2026model}
Qi~Zhou, Yiming Zhang, Yanggan Gu, Yuanyi Wang, Zhaoyi Yan, Zhen Li, Chi~Yung
  Chung, and Hongxia Yang.
\newblock Model fusion for scalable and sustainable artificial intelligence: A
  review and outlook.
\newblock \emph{Journal of Modern Power Systems and Clean Energy}, 2026.

\bibitem[Zhou et~al.(2025)Zhou, Zhang, Gu, Wang, Sang, Yan, Li, Zhang, Wu, and
  Yang]{zhou2025democratizing}
Qi~Zhou, Yiming Zhang, Yanggan Gu, Yuanyi Wang, Zhijie Sang, Zhaoyi Yan, Zhen
  Li, Shengyu Zhang, Fei Wu, and Hongxia Yang.
\newblock Democratizing ai through model fusion: A comprehensive review and
  future directions.
\newblock \emph{Nexus}, 2025.

\bibitem[Wang et~al.(2025{\natexlab{a}})Wang, Gu, Zhang, Zhou, Yan, Xie, Wang,
  Yuan, and Yang]{wang2025model}
Yuanyi Wang, Yanggan Gu, Yiming Zhang, Qi~Zhou, Zhaoyi Yan, Congkai Xie, Xinyao
  Wang, Jianbo Yuan, and Hongxia Yang.
\newblock Model merging scaling laws in large language models.
\newblock \emph{arXiv preprint arXiv:2509.24244}, 2025{\natexlab{a}}.

\bibitem[Wang et~al.(2026)Wang, Gu, Wang, Li, Yang, Yan, Xie, Wu, and
  Yang]{wang2026mergepipe}
Yuanyi Wang, Yanggan Gu, Zihao Wang, Kunxi Li, Yifan Yang, Zhaoyi Yan, Congkai
  Xie, Jianmin Wu, and Hongxia Yang.
\newblock {MergePipe}: A budget-aware parameter management system for scalable
  {LLM} merging.
\newblock \emph{arXiv preprint arXiv:2602.13273}, 2026.

\bibitem[Gu et~al.(2025{\natexlab{a}})Gu, Wang, Yan, Zhang, Zhou, Wu, and
  Yang]{gu2025infifpo}
Yanggan Gu, Yuanyi Wang, Zhaoyi Yan, Yiming Zhang, Qi~Zhou, Fei Wu, and Hongxia
  Yang.
\newblock {InfiFPO}: Implicit model fusion via preference optimization in large
  language models.
\newblock \emph{arXiv preprint arXiv:2505.13878}, 2025{\natexlab{a}}.

\bibitem[Gu et~al.(2025{\natexlab{b}})Gu, Li, Huang, Zou, Li, and
  Hu]{gu-etal-2025-capturing}
Yanggan Gu, Junzhuo Li, Sirui Huang, Xin Zou, Zhenghua Li, and Xuming Hu.
\newblock Capturing nuanced preferences: Preference-aligned distillation for
  small language models.
\newblock In \emph{Findings of ACL}, 2025{\natexlab{b}}.

\bibitem[Wang et~al.(2025{\natexlab{b}})Wang, Yan, Zhang, Zhou, Gu, Wu, and
  Yang]{wang2025infigfusion}
Yuanyi Wang, Zhaoyi Yan, Yiming Zhang, Qi~Zhou, Yanggan Gu, Fei Wu, and Hongxia
  Yang.
\newblock {InfiGFusion}: Graph-on-logits distillation via efficient
  gromov-wasserstein for model fusion.
\newblock \emph{arXiv preprint arXiv:2505.13893}, 2025{\natexlab{b}}.

\bibitem[Dang et~al.(2025)Dang, Gao, Yan, Zou, Gu, Li, Wang, Jiang, Liu, Liu,
  and Hu]{dang-etal-2025-exploring}
Yunkai Dang, Mengxi Gao, Yibo Yan, Xin Zou, Yanggan Gu, Jungang Li, Jingyu
  Wang, Peijie Jiang, Aiwei Liu, Jia Liu, and Xuming Hu.
\newblock Exploring response uncertainty in {MLLM}s: An empirical evaluation
  under misleading scenarios.
\newblock In \emph{EMNLP}, 2025.

\bibitem[Yang et~al.(2026)Yang, Li, Li, Zheng, Wang, Qu, Yu, Wu, Li, and
  Yang]{yang2026inficoevalchain}
Yifan Yang, Jinjia Li, Kunxi Li, Puhao Zheng, Yuanyi Wang, Zheyan Qu, Yang Yu,
  Jianmin Wu, Ming Li, and Hongxia Yang.
\newblock {InfiCoEvalChain}: A blockchain-based decentralized framework for
  collaborative {LLM} evaluation.
\newblock \emph{arXiv preprint arXiv:2602.08229}, 2026.

\bibitem[Wang et~al.(2025{\natexlab{c}})Wang, Cai, Xie, Feng, Zhang, Li, Yang,
  Li, Cao, and Yang]{wang2025infir2comprehensivefp8training}
Wenjun Wang, Shuo Cai, Congkai Xie, Mingfa Feng, Yiming Zhang, Zhen Li, Kejing
  Yang, Ming Li, Jiannong Cao, and Hongxia Yang.
\newblock {InfiR2}: A comprehensive {FP8} training recipe for
  reasoning-enhanced language models.
\newblock \emph{arXiv preprint arXiv:2509.22536}, 2025{\natexlab{c}}.

\end{thebibliography}

\clearpage
\appendix
\crefalias{section}{appendix}

\section{Limitations}
\label{app:limitations}

\paragraph{Computational scope.}
Due to limited compute, our experiments focus on CLIP and FLAN-T5.
We do not scale the study to additional modalities or much larger models.
Still, the current experiments cover more than 20 tasks across vision and language, which provides evidence that \method is not tied to a single task or benchmark.

\paragraph{Hyperparameter selection.}
\method still requires hyperparameters, including \(\lambda\), \(\rho\), and \(\alpha\), to be selected on a development or validation set.
We do not provide an automatic selection rule.
Because calibration is fast and uses closed form updates, this search is practical in our setting and adds little overhead compared with iterative calibration baselines.

\paragraph{Dependence on task calibration data.}
Like Surgery and ProbSurgery, \method needs task calibration data to fit parameter updates after merging.
This requirement may be limiting when task data cannot be stored or sampled.
Our results show better data efficiency than these baselines: on CLIP-ViT-B/32 TA, 8 examples per task already give strong gains, and performance saturates with fewer samples than the baselines.

\section{Additional Feature-Distribution Diagnostics for 8-Task TA}
\label{app:ta8-feature-distribution-diagnostics}

\Cref{fig:ta8-feature-distribution-scatter} extends the Stanford Cars visualization in panel (a) of \cref{fig:intro-triptych} to all 8 tasks in the CLIP-ViT-B/32 TA setting.
Each task uses its own joint 2D projection, so the geometry is intended for within-task comparison between the uncalibrated and calibrated merged features rather than cross-task distance comparison.

\begin{figure}[H]
\centering
\includegraphics[width=\textwidth]{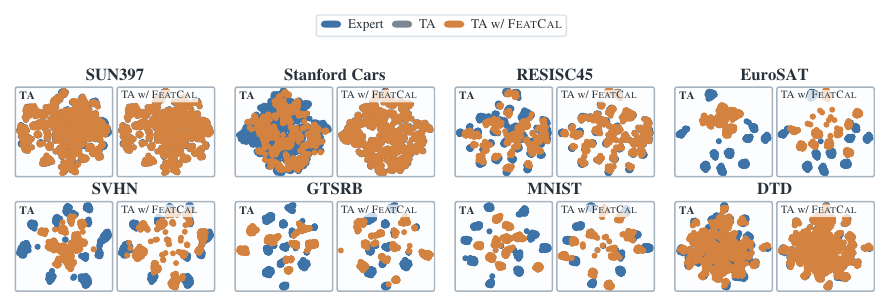}
\caption{
\textbf{8-task feature-distribution diagnostic.}
Matched final projected features for the task expert, Task Arithmetic (TA), and TA w/ \method{} on each task in the CLIP-ViT-B/32 TA setting, following panel (a) of \cref{fig:intro-triptych}.
}
\label{fig:ta8-feature-distribution-scatter}
\end{figure}

\section{Proofs for the Feature-Drift Analysis}
\label{app:proof-feature-drift}

\begin{proof}[Proof of \cref{prop:exact-single-layer-decomposition}]
Starting from the definition of \(e_{i,\ell}(x)\), add and subtract
\(
f_{\ell}^{\merge}(h_{i,\ell-1}^{\expert}(x))
\)
to obtain
\begin{align}
e_{i,\ell}(x)
&=
f_{\ell}^{\merge}\!\left(h_{i,\ell-1}^{\merge}(x)\right)
-
f_{i,\ell}^{\expert}\!\left(h_{i,\ell-1}^{\expert}(x)\right)
\notag\\
&=
\Bigl[
f_{\ell}^{\merge}\!\left(h_{i,\ell-1}^{\merge}(x)\right)
-
f_{\ell}^{\merge}\!\left(h_{i,\ell-1}^{\expert}(x)\right)
\Bigr]
\notag\\
&\quad+
\Bigl[
f_{\ell}^{\merge}\!\left(h_{i,\ell-1}^{\expert}(x)\right)
-
f_{i,\ell}^{\expert}\!\left(h_{i,\ell-1}^{\expert}(x)\right)
\Bigr].
\end{align}
The first bracket is \(p_{i,\ell}(x)\) and the second is \(m_{i,\ell}(x)\), proving \cref{eq:exact-single-layer-decomposition}.
\end{proof}

\begin{proof}[Proof of \cref{prop:layerwise-propagation-identity}]
For a fixed layer \(\ell\), the assumed differentiability on a neighborhood of the expert-to-merged feature segment lets us define the averaged Jacobian
\begin{equation}
\label{eq:avg-jacobian-general}
A_{i,\ell}(x)
=
\int_{0}^{1}
J_u f_{\ell}^{\merge}\!\Bigl(
h_{i,\ell-1}^{\expert}(x)
+
t\,e_{i,\ell-1}(x)
\Bigr)\,dt.
\end{equation}
The fundamental theorem of calculus along this segment gives
\begin{equation}
\label{eq:propagated-component-jacobian}
p_{i,\ell}(x)
=
A_{i,\ell}(x)\,e_{i,\ell-1}(x),
\end{equation}
and hence
\begin{equation}
\label{eq:general-recursion}
e_{i,\ell}(x)
=
A_{i,\ell}(x)\,e_{i,\ell-1}(x)
+
m_{i,\ell}(x).
\end{equation}
Iterating \cref{eq:general-recursion} from layer \(s+1\) to layer \(t\) yields
\begin{align}
\label{eq:general-layerwise-unroll}
e_{i,t}(x)
&=
P_{i,s\rightarrow t}(x)e_{i,s}(x)
+
\sum_{\ell=s+1}^{t}
P_{i,\ell\rightarrow t}(x)m_{i,\ell}(x),
\end{align}
where \(P_{i,r\rightarrow t}(x)=A_{i,t}(x)\cdots A_{i,r+1}(x)\), \(P_{i,t\rightarrow t}(x)=I\), and products denote composition of compatible local linear maps. Taking \(s=0\), \(t=L\), and \(e_{i,0}(x)=0\) yields
\begin{equation}
\label{eq:unrolled-accumulation}
e_{i,L}(x)
=
\sum_{\ell=1}^{L}
P_{i,\ell\rightarrow L}(x)\,m_{i,\ell}(x),
\end{equation}
where
\begin{equation}
\label{eq:phi-definition}
P_{i,\ell\rightarrow L}(x)
=
A_{i,L}(x)A_{i,L-1}(x)\cdots A_{i,\ell+1}(x),
\qquad
P_{i,L\rightarrow L}(x)=I.
\end{equation}
Thus every term in the final drift expansion originates from a layer-wise local mismatch and is propagated through the downstream sensitivity chain.
\end{proof}

\section{Residual Propagation and Conditional Growth}
\label{app:residual-growth}

This appendix specializes \cref{prop:exact-single-layer-decomposition} to residualized Transformer layers with matching input and output dimensions:
\begin{equation}
\label{eq:residual-block-form}
f_\ell^{\merge}(h)=h+g_\ell^{\merge}(h),
\qquad
f_{i,\ell}^{\expert}(h)=h+g_{i,\ell}^{\expert}(h),
\end{equation}
where \(g_\ell^{\merge}\) and \(g_{i,\ell}^{\expert}\) collect module-internal nonlinear operations, such as self-attention or MLP together with normalization and other internal operations.

Define the residual-branch propagation term
\begin{equation}
\label{eq:residual-branch-propagation}
r_{i,\ell}(x)
=
g_\ell^{\merge}\!\left(h_{i,\ell-1}^{\merge}(x)\right)
-
g_\ell^{\merge}\!\left(h_{i,\ell-1}^{\expert}(x)\right),
\end{equation}
and the local mismatch term becomes
\begin{equation}
\label{eq:residual-local-mismatch}
m_{i,\ell}(x)
=
g_\ell^{\merge}\!\left(h_{i,\ell-1}^{\expert}(x)\right)
-
g_{i,\ell}^{\expert}\!\left(h_{i,\ell-1}^{\expert}(x)\right).
\end{equation}

\begin{proposition}[Residual preservation]
\label{prop:residual-preservation}
For a residualized layer of the form in \cref{eq:residual-block-form}, the propagated component satisfies
\begin{equation}
\label{eq:residual-propagated-component}
p_{i,\ell}(x)
=
e_{i,\ell-1}(x)
+
r_{i,\ell}(x),
\end{equation}
and therefore
\begin{equation}
\label{eq:residual-decomposition}
e_{i,\ell}(x)
=
e_{i,\ell-1}(x)
+
r_{i,\ell}(x)
+
m_{i,\ell}(x).
\end{equation}
\end{proposition}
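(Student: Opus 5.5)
The identity follows by direct substitution of the residual form \cref{eq:residual-block-form} into the definition of the propagation term in \cref{prop:exact-single-layer-decomposition}. No differentiability is needed, since the statement is purely algebraic.

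First, write \(a=h_{i,\ell-1}^{\merge}(x)\) and \(b=h_{i,\ell-1}^{\expert}(x)\), so that \(e_{i,\ell-1}(x)=a-b\) by \cref{eq:layer-wise-feature-error}. Substituting \(f_\ell^{\merge}(h)=h+g_\ell^{\merge}(h)\) into \(p_{i,\ell}(x)=f_\ell^{\merge}(a)-f_\ell^{\merge}(b)\) gives
\begin{equation*}
p_{i,\ell}(x)=(a-b)+\bigl(g_\ell^{\merge}(a)-g_\ell^{\merge}(b)\bigr)=e_{i,\ell-1}(x)+r_{i,\ell}(x),
\end{equation*}
where the last step uses the definition \cref{eq:residual-branch-propagation}. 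This proves \cref{eq:residual-propagated-component}.

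Second, substitute the residual forms into \(m_{i,\ell}(x)=f_\ell^{\merge}(b)-f_{i,\ell}^{\expert}(b)\). The two identity contributions \(b\) cancel, which leaves exactly \cref{eq:residual-local-mismatch}. This confirms that the residual-form mismatch coincides with the original \(m_{i,\ell}\).

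Finally, insert both expressions into the exact decomposition \(e_{i,\ell}=m_{i,\ell}+p_{i,\ell}\) from \cref{eq:exact-single-layer-decomposition} to obtain \cref{eq:residual-decomposition}.

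The only point that needs care is the matching input and output dimensions. This assumption is what makes \(h+g(h)\) well typed, and it is also what lets \(e_{i,\ell-1}\) appear additively in the layer-\(\ell\) output space. Beyond that there is no real obstacle; the argument is a short chain of exact substitutions.
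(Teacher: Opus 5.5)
Your proof is correct and matches the paper's argument: substitute the residual form into the propagation term to obtain \(e_{i,\ell-1}(x)+r_{i,\ell}(x)\), then insert this into the exact decomposition of \cref{prop:exact-single-layer-decomposition}. You also check explicitly that the identity contributions cancel in the local mismatch, giving \cref{eq:residual-local-mismatch}; the paper states this directly without showing the cancellation.
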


\begin{proof}
By \cref{eq:residual-block-form},
\begin{align}
p_{i,\ell}(x)
&=
f_{\ell}^{\merge}\!\left(h_{i,\ell-1}^{\merge}(x)\right)
-
f_{\ell}^{\merge}\!\left(h_{i,\ell-1}^{\expert}(x)\right)
\notag\\
&=
\Bigl(
h_{i,\ell-1}^{\merge}(x)
+
g_\ell^{\merge}\!\left(h_{i,\ell-1}^{\merge}(x)\right)
\Bigr)
-
\Bigl(
h_{i,\ell-1}^{\expert}(x)
+
g_\ell^{\merge}\!\left(h_{i,\ell-1}^{\expert}(x)\right)
\Bigr)
\notag\\
&=
\underbrace{
\bigl(
h_{i,\ell-1}^{\merge}(x)-h_{i,\ell-1}^{\expert}(x)
\bigr)
}_{e_{i,\ell-1}(x)}
+
\underbrace{
\Bigl(
g_\ell^{\merge}\!\left(h_{i,\ell-1}^{\merge}(x)\right)
-
g_\ell^{\merge}\!\left(h_{i,\ell-1}^{\expert}(x)\right)
\Bigr)
}_{r_{i,\ell}(x)},
\end{align}
which proves \cref{eq:residual-propagated-component}. Substituting this identity into \cref{eq:exact-single-layer-decomposition} gives \cref{eq:residual-decomposition}.
\end{proof}

\paragraph{Interpretation.}
\cref{eq:residual-decomposition} makes the identity-skip propagation property explicit: the skip connection carries upstream feature drift through the additive term \(e_{i,\ell-1}(x)\), while the residual branch may compensate for or amplify it.

When \(g_\ell^{\merge}(\cdot)\) is continuously differentiable on an open neighborhood containing the line segment between \(h_{i,\ell-1}^{\expert}(x)\) and \(h_{i,\ell-1}^{\merge}(x)\), define
\begin{equation}
\label{eq:residual-avg-jacobian}
R_{i,\ell}(x)
=
\int_0^1
J_u g_\ell^{\merge}\!\Bigl(
h_{i,\ell-1}^{\expert}(x)+t\,e_{i,\ell-1}(x)
\Bigr)\,dt.
\end{equation}
Then
\begin{equation}
\label{eq:residual-linearized-recursion}
r_{i,\ell}(x)=R_{i,\ell}(x)\,e_{i,\ell-1}(x),
\end{equation}
and \cref{eq:residual-decomposition} becomes
\begin{equation}
\label{eq:residual-jacobian-recursion}
e_{i,\ell}(x)
=
\bigl(I+R_{i,\ell}(x)\bigr)e_{i,\ell-1}(x)
+
m_{i,\ell}(x).
\end{equation}

\begin{proposition}[Sufficient condition for local monotone amplification]
\label{prop:local-error-growth}
Consider a residualized layer of the form in \cref{eq:residual-block-form},
and assume \(g_\ell^{\merge}\) is continuously differentiable on an open
neighborhood containing the line segment between
\(h_{i,\ell-1}^{\expert}(x)\) and \(h_{i,\ell-1}^{\merge}(x)\), so
\(R_{i,\ell}(x)\) is well defined. Suppose \(e_{i,\ell-1}(x)\neq 0\).
Suppose further that there exists \(\gamma_{i,\ell}>0\) such that, for \(v=e_{i,\ell-1}(x)\),
\begin{equation}
\label{eq:expansion-condition}
\bigl\|\bigl(I+R_{i,\ell}(x)\bigr)v\bigr\|_2
\ge
(1+\gamma_{i,\ell})\|v\|_2,
\end{equation}
and suppose the local mismatch term satisfies
\begin{equation}
\label{eq:local-mismatch-bound}
\|m_{i,\ell}(x)\|_2
\le
\eta_{i,\ell}\,\|e_{i,\ell-1}(x)\|_2
\qquad
\text{for some } \eta_{i,\ell}\in[0,\gamma_{i,\ell}).
\end{equation}
Then
\begin{equation}
\label{eq:local-growth-bound}
\|e_{i,\ell}(x)\|_2
\ge
(1+\gamma_{i,\ell}-\eta_{i,\ell})
\|e_{i,\ell-1}(x)\|_2
>
\|e_{i,\ell-1}(x)\|_2.
\end{equation}
\end{proposition}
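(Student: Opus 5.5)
The argument rests on the linearized residual recursion \cref{eq:residual-jacobian-recursion}, followed by a reverse triangle inequality. Since \(g_\ell^{\merge}\) is continuously differentiable on a neighborhood of the segment between \(h_{i,\ell-1}^{\expert}(x)\) and \(h_{i,\ell-1}^{\merge}(x)\), the averaged Jacobian \(R_{i,\ell}(x)\) in \cref{eq:residual-avg-jacobian} is well defined. The fundamental theorem of calculus along that segment then gives \(r_{i,\ell}(x)=R_{i,\ell}(x)e_{i,\ell-1}(x)\), exactly as in the proof of \cref{prop:layerwise-propagation-identity}. Substituting this into the residual decomposition of \cref{prop:residual-preservation} yields
\[
e_{i,\ell}(x)=\bigl(I+R_{i,\ell}(x)\bigr)e_{i,\ell-1}(x)+m_{i,\ell}(x).
\]

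Next I apply the reverse triangle inequality \(\|a+b\|_2\ge\|a\|_2-\|b\|_2\) with \(a=(I+R_{i,\ell}(x))e_{i,\ell-1}(x)\) and \(b=m_{i,\ell}(x)\). The expansion hypothesis \cref{eq:expansion-condition}, taken at \(v=e_{i,\ell-1}(x)\), bounds \(\|a\|_2\) from below by \((1+\gamma_{i,\ell})\|e_{i,\ell-1}(x)\|_2\). The mismatch hypothesis \cref{eq:local-mismatch-bound} bounds \(\|b\|_2\) from above by \(\eta_{i,\ell}\|e_{i,\ell-1}(x)\|_2\). Combining the two gives the first inequality in \cref{eq:local-growth-bound}.

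For the strict inequality, note that \(\eta_{i,\ell}<\gamma_{i,\ell}\), so \(1+\gamma_{i,\ell}-\eta_{i,\ell}>1\). Because \(e_{i,\ell-1}(x)\neq 0\), its norm is strictly positive, and multiplying it by a factor strictly greater than one gives a strictly larger number.

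I do not see a substantive obstacle. The only point needing care is that the linearization \(r_{i,\ell}=R_{i,\ell}e_{i,\ell-1}\) is exact rather than approximate. This holds because the integral form of the mean value theorem applies to vector-valued \(C^1\) maps on a segment contained in an open set where the map is differentiable, and that containment is exactly what the differentiability hypothesis guarantees.
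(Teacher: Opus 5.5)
Your proof is correct and matches the paper's argument: rewrite the drift as \(e_{i,\ell}(x)=\bigl(I+R_{i,\ell}(x)\bigr)e_{i,\ell-1}(x)+m_{i,\ell}(x)\), apply the reverse triangle inequality with the expansion and mismatch bounds, and get strictness from \(\eta_{i,\ell}<\gamma_{i,\ell}\) together with \(e_{i,\ell-1}(x)\neq 0\). Your remark that \(r_{i,\ell}(x)=R_{i,\ell}(x)e_{i,\ell-1}(x)\) holds exactly, by the fundamental theorem of calculus along the segment, is also how the paper justifies this recursion just before stating the proposition.
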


\begin{proof}
By \cref{eq:residual-jacobian-recursion} and the reverse triangle inequality,
\begin{align}
\|e_{i,\ell}(x)\|_2
&=
\left\|
\bigl(I+R_{i,\ell}(x)\bigr)e_{i,\ell-1}(x)
+
m_{i,\ell}(x)
\right\|_2 \notag\\
&\ge
\left\|
\bigl(I+R_{i,\ell}(x)\bigr)e_{i,\ell-1}(x)
\right\|_2
-
\|m_{i,\ell}(x)\|_2 \notag\\
&\ge
(1+\gamma_{i,\ell})\|e_{i,\ell-1}(x)\|_2
-
\eta_{i,\ell}\|e_{i,\ell-1}(x)\|_2 \notag\\
&=
(1+\gamma_{i,\ell}-\eta_{i,\ell})
\|e_{i,\ell-1}(x)\|_2.
\end{align}
Since \(e_{i,\ell-1}(x)\neq 0\) and \(\eta_{i,\ell}<\gamma_{i,\ell}\), the multiplicative factor is strictly larger than \(1\), proving the claim.
\end{proof}

\cref{prop:local-error-growth} is intentionally conditional: feature drift need not amplify at every residualized layer, but it does grow monotonically when the residualized layer map is locally expansive along the current drift direction and the new local mismatch term is too small to offset that expansion.

\begin{corollary}[Conditional cumulative monotone amplification]
\label{cor:conditional-cumulative-growth}
If there exists a consecutive range of layers \(\ell_0+1,\dots,L\) such that the assumptions of \cref{prop:local-error-growth} hold for every layer in this range, then
\begin{equation}
\label{eq:cumulative-growth}
\|e_{i,L}(x)\|_2
\ge
\prod_{\ell=\ell_0+1}^{L}
(1+\gamma_{i,\ell}-\eta_{i,\ell})
\,
\|e_{i,\ell_0}(x)\|_2.
\end{equation}
Consequently, the feature drift norm increases strictly from a layer to the
next across that range.
\end{corollary}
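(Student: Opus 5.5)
The plan is an induction over the layers \(\ell=\ell_0+1,\dots,L\), using \cref{prop:local-error-growth} as the one-step inequality and chaining the resulting factors multiplicatively.

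First, fix the range \(\ell_0+1,\dots,L\) on which the hypotheses of \cref{prop:local-error-growth} hold. For each \(\ell\) in this range we have \(e_{i,\ell-1}(x)\neq 0\), the expansion condition \cref{eq:expansion-condition}, and the mismatch bound \cref{eq:local-mismatch-bound} with \(\eta_{i,\ell}<\gamma_{i,\ell}\). Applying the proposition at each such layer gives
\[
\|e_{i,\ell}(x)\|_2\ge (1+\gamma_{i,\ell}-\eta_{i,\ell})\|e_{i,\ell-1}(x)\|_2 ,
\]
and every factor \(1+\gamma_{i,\ell}-\eta_{i,\ell}\) is strictly greater than \(1\), in particular positive.

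Next, I would prove by induction on \(k=\ell_0,\dots,L\) that
\[
\|e_{i,k}(x)\|_2\ge \prod_{\ell=\ell_0+1}^{k}(1+\gamma_{i,\ell}-\eta_{i,\ell})\,\|e_{i,\ell_0}(x)\|_2 ,
\]
with the empty product equal to \(1\) for the base case \(k=\ell_0\). For the inductive step, apply the one-step bound at layer \(k+1\) and then substitute the induction hypothesis. Substituting a lower bound is legitimate only because the factor \(1+\gamma_{i,k+1}-\eta_{i,k+1}\) is nonnegative, so this positivity should be stated explicitly at that point. Taking \(k=L\) gives \cref{eq:cumulative-growth}.

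The strict monotonicity claim is simply the second inequality in \cref{eq:local-growth-bound}, applied layer by layer across the range. The only subtle point is consistency of the hypotheses: the nonvanishing assumption \(e_{i,\ell-1}(x)\neq 0\) is required at every layer, and it is already part of the assumed hypotheses. It is also self-propagating, since strict growth from a nonzero norm keeps the drift nonzero. There is no real obstacle in this argument; the main care is tracking the positivity of the factors so that the inequalities can be chained.
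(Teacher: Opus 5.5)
Your proposal is correct and follows the paper's proof, which applies \cref{eq:local-growth-bound} recursively from layer \(\ell_0+1\) to layer \(L\). You also note that each factor \(1+\gamma_{i,\ell}-\eta_{i,\ell}\) is positive, which is what allows the lower bounds to be chained; the paper leaves this step implicit.
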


\begin{proof}
Apply \cref{eq:local-growth-bound} recursively from layer \(\ell_0+1\) to layer \(L\).
\end{proof}

\section{Output-Drift Bridge}
\label{app:output-drift-bridge}

This appendix records the standard perturbation argument connecting feature drift
to output-score drift in the continuous task output score vector. For the merged
model \(M^{\merge}\), define
\[
e_{i,L}(x)=h_{i,L}^{\merge}(x)-h_{i,L}^{\expert}(x).
\]
Let the task score maps \(\psi_i^{\merge}\) and \(\psi_i^{\expert}\) produce
\[
z_i^{\merge}(x)=\psi_i^{\merge}\!\left(h_{i,L}^{\merge}(x)\right),
\qquad
z_i^{\expert}(x)=\psi_i^{\expert}\!\left(h_{i,L}^{\expert}(x)\right),
\]
and define
\[
\Delta z_i^{\merge}(x)=z_i^{\merge}(x)-z_i^{\expert}(x).
\]
Here \(z_i\) may denote class logits, CLIP candidate scores (scaled similarity
scores over a fixed candidate set), or decoder vocabulary logits under a fixed
prefix.

\begin{proposition}[Output score map perturbation]
\label{prop:score-map-perturbation}
Suppose \(\psi_i^{\merge}\) is locally \(B_i^{\merge}(x)\)-Lipschitz on the segment between \(h_{i,L}^{\expert}(x)\) and \(h_{i,L}^{\merge}(x)\).
Here \(B_i^{\merge}(x)\) locally bounds how much the merged task score map can change when its final-feature input moves along this segment. Then
\begin{align}
\left\|\Delta z_i^{\merge}(x)\right\|_2
&\le
B_i^{\merge}(x)\left\|e_{i,L}(x)\right\|_2
\;+\;
\left\|
\psi_i^{\merge}\!\left(h_{i,L}^{\expert}(x)\right)
-
\psi_i^{\expert}\!\left(h_{i,L}^{\expert}(x)\right)
\right\|_2 .
\label{eq:score-map-bound}
\end{align}
If the task score map is shared, the second term is \(0\).
\end{proposition}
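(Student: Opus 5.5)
The plan is an add-and-subtract argument: insert the merged score map evaluated at the expert final feature, then bound the two resulting pieces separately. Abbreviate \(a=h_{i,L}^{\expert}(x)\) and \(b=h_{i,L}^{\merge}(x)\), so that \(e_{i,L}(x)=b-a\). Then
\begin{equation*}
\Delta z_i^{\merge}(x)
=
\Bigl[\psi_i^{\merge}(b)-\psi_i^{\merge}(a)\Bigr]
+
\Bigl[\psi_i^{\merge}(a)-\psi_i^{\expert}(a)\Bigr].
\end{equation*}
This mirrors the decomposition in \cref{prop:exact-single-layer-decomposition}. The first bracket plays the role of drift propagation through the score map, and the second is a local mismatch of score maps at the expert feature.

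Applying the triangle inequality in \(\|\cdot\|_2\) gives
\[
\|\Delta z_i^{\merge}(x)\|_2\le \|\psi_i^{\merge}(b)-\psi_i^{\merge}(a)\|_2+\|\psi_i^{\merge}(a)-\psi_i^{\expert}(a)\|_2.
\]
The second term is exactly \(\delta_i^{\psi,\merge}(x)\) and needs no further work. For the first term, invoke the local Lipschitz hypothesis on the segment \([a,b]\). Both endpoints lie on that segment, so
\[
\|\psi_i^{\merge}(b)-\psi_i^{\merge}(a)\|_2\le B_i^{\merge}(x)\|b-a\|_2=B_i^{\merge}(x)\|e_{i,L}(x)\|_2,
\]
which yields \cref{eq:score-map-bound}. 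If the score map is shared, so that \(\psi_i^{\merge}=\psi_i^{\expert}\), the second bracket vanishes identically.

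The only point needing care is the meaning of "locally Lipschitz on the segment." The plan is to read it as a Lipschitz bound holding for any pair of points on the segment, in particular its two endpoints.

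Alternatively, if one instead assumes \(\psi_i^{\merge}\) is \(C^1\) near the segment, the bound can be recovered by mirroring \cref{eq:avg-jacobian-general}. One writes the first bracket as \(\int_0^1 J\psi_i^{\merge}(a+t\,e_{i,L}(x))\,dt\;e_{i,L}(x)\) and bounds it by the supremum of the operator norm along the segment, which serves as \(B_i^{\merge}(x)\).

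No genuine obstacle is expected; the argument is two lines once the intermediate term \(\psi_i^{\merge}(a)\) is inserted.
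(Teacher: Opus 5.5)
Your proof is correct and follows the paper's argument exactly: insert \(\psi_i^{\merge}(h_{i,L}^{\expert}(x))\), apply the triangle inequality, and bound the first difference with the local Lipschitz constant on the segment, leaving the score-map mismatch term unchanged. You read the hypothesis as a Lipschitz bound between the segment endpoints, which is the reading the paper's one-line proof uses implicitly.
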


\begin{proof}
Add and subtract \(\psi_i^{\merge}(h_{i,L}^{\expert}(x))\), then apply the local Lipschitz bound to the first difference:
\begin{align}
\left\|\Delta z_i^{\merge}(x)\right\|_2
&=
\left\|
\psi_i^{\merge}\!\left(h_{i,L}^{\merge}(x)\right)
-
\psi_i^{\expert}\!\left(h_{i,L}^{\expert}(x)\right)
\right\|_2
\notag\\
&\le
\left\|
\psi_i^{\merge}\!\left(h_{i,L}^{\merge}(x)\right)
-
\psi_i^{\merge}\!\left(h_{i,L}^{\expert}(x)\right)
\right\|_2
\notag\\
&\quad+
\left\|
\psi_i^{\merge}\!\left(h_{i,L}^{\expert}(x)\right)
-
\psi_i^{\expert}\!\left(h_{i,L}^{\expert}(x)\right)
\right\|_2
\notag\\
&\le
B_i^{\merge}(x)\left\|e_{i,L}(x)\right\|_2
\;+\;
\left\|
\psi_i^{\merge}\!\left(h_{i,L}^{\expert}(x)\right)
-
\psi_i^{\expert}\!\left(h_{i,L}^{\expert}(x)\right)
\right\|_2 .
\end{align}
\end{proof}

\paragraph{Softmax probability bridge.}
For a softmax output over a fixed candidate or token set, let
\(p(z)=\operatorname{softmax}(z)\). Its Jacobian is
\[
J_{\operatorname{sm}}(z)
=
\nabla_z p(z)
=
\operatorname{diag}(p(z))-p(z)p(z)^\top .
\]
Hence, for any logit perturbation \(\Delta z\),
\[
p(z+\Delta z)-p(z)
=
\int_0^1
J_{\operatorname{sm}}(z+t\Delta z)\Delta z\,dt .
\]
For cross-entropy with one-hot label vector \(u_y\),
\[
\mathcal{L}_{\mathrm{CE}}(y,z+\Delta z)-\mathcal{L}_{\mathrm{CE}}(y,z)
=
\int_0^1
\bigl(p(z+t\Delta z)-u_y\bigr)^\top\Delta z\,dt .
\]
Thus relative logit changes induce softmax-normalized probability and loss
changes through the softmax Jacobian and the cross-entropy gradient; an
additive shift \(c\mathbf{1}\) is a null direction for both.

\begin{proposition}[Logit perturbation and loss stability]
\label{prop:score-loss-stability}
Let \(\mathcal{L}_i(y,z)\) be a task loss that is differentiable along the segment between \(z_i^{\expert}(x)\) and \(z_i^{\merge}(x)\). Define
\[
\bar g_i(x)=
\int_0^1
\nabla_z \mathcal{L}_i\!\left(y,z_i^{\expert}(x)+t\Delta z_i^{\merge}(x)\right)\,dt .
\]
Then
\begin{equation}
\label{eq:score-loss-identity}
\mathcal{L}_i(y,z_i^{\merge}(x))-\mathcal{L}_i(y,z_i^{\expert}(x))
=
\bar g_i(x)^\top \Delta z_i^{\merge}(x),
\end{equation}
and hence
\begin{equation}
\label{eq:score-loss-bound}
\left|
\mathcal{L}_i(y,z_i^{\merge}(x))-\mathcal{L}_i(y,z_i^{\expert}(x))
\right|
\le
\left\|\bar g_i(x)\right\|_2
\left\|\Delta z_i^{\merge}(x)\right\|_2 .
\end{equation}
\end{proposition}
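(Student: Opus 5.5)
The plan is to apply the fundamental theorem of calculus along the logit segment, mirroring the path-averaged Jacobian argument used for \cref{prop:layerwise-propagation-identity}. Fix \(i\), \(x\), and the label \(y\), and define the scalar function
\[
\phi(t)=\mathcal{L}_i\!\left(y,z_i^{\expert}(x)+t\,\Delta z_i^{\merge}(x)\right),\qquad t\in[0,1].
\]
By \cref{eq:output-drift-definition}, \(\phi(0)=\mathcal{L}_i(y,z_i^{\expert}(x))\) and \(\phi(1)=\mathcal{L}_i(y,z_i^{\merge}(x))\). So the left-hand side of \cref{eq:score-loss-identity} is exactly \(\phi(1)-\phi(0)\).

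The first step is to differentiate \(\phi\). Since \(\mathcal{L}_i(y,\cdot)\) is differentiable along the segment, the chain rule gives
\[
\phi'(t)=\nabla_z\mathcal{L}_i\!\left(y,z_i^{\expert}(x)+t\Delta z_i^{\merge}(x)\right)^\top\Delta z_i^{\merge}(x).
\]

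The second step is to write \(\phi(1)-\phi(0)=\int_0^1\phi'(t)\,dt\) and pull the fixed vector \(\Delta z_i^{\merge}(x)\) out of the integral by linearity. This yields \(\bar g_i(x)^\top\Delta z_i^{\merge}(x)\), which is \cref{eq:score-loss-identity}.

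The third step is the bound \cref{eq:score-loss-bound}. It follows from taking absolute values in the identity and applying Cauchy--Schwarz.

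The only delicate point, and the step I expect to be the main obstacle, is regularity. Mere differentiability along the segment does not by itself guarantee that \(\phi'\) is integrable or that the fundamental theorem of calculus holds. The definition of \(\bar g_i(x)\) as an integral, however, implicitly assumes that the gradient is integrable along the path. I would make this explicit: read the hypothesis as continuous differentiability on a neighborhood of the segment, exactly as in \cref{prop:layerwise-propagation-identity}, so that \(\phi\in C^1[0,1]\). This standing convention covers the intended cases, such as cross-entropy composed with softmax, which is smooth everywhere.

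If one insists on bare differentiability, I would instead invoke the version of the fundamental theorem for everywhere-differentiable functions with Lebesgue-integrable derivative. That integrability is exactly what is needed for \(\bar g_i(x)\) to be well defined in the first place.
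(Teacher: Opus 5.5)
Your proposal is correct and matches the paper's proof, which applies the fundamental theorem of calculus to \(t\mapsto\mathcal{L}_i(y,z_i^{\expert}(x)+t\Delta z_i^{\merge}(x))\) on \([0,1]\) and then uses Cauchy--Schwarz. Your regularity remark is a sound refinement that the paper leaves implicit. The FTC step is justified either by continuous differentiability near the segment, or by everywhere-differentiability of \(\phi\) together with integrability of the gradient along the path, which the definition of \(\bar g_i(x)\) already presupposes.
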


\begin{proof}
Apply the fundamental theorem of calculus to
\(\mathcal{L}_i(y,z_i^{\expert}(x)+t\Delta z_i^{\merge}(x))\) over \(t\in[0,1]\), and then use Cauchy--Schwarz.
\end{proof}

\begin{corollary}[Feature-to-loss perturbation]
\label{cor:feature-to-loss-perturbation}
Under the conditions of \cref{prop:score-map-perturbation}, suppose
\(\mathcal{L}_i(y,z)\) is locally \(G_i^{\merge}(x)\)-Lipschitz on the segment between
\(z_i^{\expert}(x)\) and \(z_i^{\merge}(x)\), and let \(\delta_i^{\psi,\merge}(x)\) denote the
score map mismatch term in \cref{eq:score-map-bound}. Then
\begin{equation}
\label{eq:feature-to-loss-bound}
\left|
\mathcal{L}_i(y,z_i^{\merge}(x))-\mathcal{L}_i(y,z_i^{\expert}(x))
\right|
\le
G_i^{\merge}(x)
\left(
B_i^{\merge}(x)\left\|e_{i,L}(x)\right\|_2
+
\delta_i^{\psi,\merge}(x)
\right).
\end{equation}
\end{corollary}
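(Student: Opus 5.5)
The proof simply chains the two bounds already established, the loss-Lipschitz step and the score-map perturbation step of \cref{prop:score-map-perturbation}. No new analytic ingredient is needed beyond composing them.

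\emph{Step 1.} Apply the local Lipschitz hypothesis on \(\mathcal{L}_i(y,\cdot)\) along the segment between \(z_i^{\expert}(x)\) and \(z_i^{\merge}(x)\). With \(G_i^{\merge}(x)\) as the local constant, this gives
\[
\left|\mathcal{L}_i(y,z_i^{\merge}(x))-\mathcal{L}_i(y,z_i^{\expert}(x))\right|
\le G_i^{\merge}(x)\left\|\Delta z_i^{\merge}(x)\right\|_2 .
\]
Only the two endpoints of the segment are used here. If \(\mathcal{L}_i\) is additionally differentiable, one can instead use \cref{prop:score-loss-stability}; in that case \(\|\bar g_i(x)\|_2\le G_i^{\merge}(x)\), since every gradient along the segment is bounded by the local Lipschitz constant.

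\emph{Step 2.} Substitute the bound \cref{eq:score-map-bound} from \cref{prop:score-map-perturbation} for \(\|\Delta z_i^{\merge}(x)\|_2\). Its hypotheses are exactly those assumed in the corollary, and its second term is by definition \(\delta_i^{\psi,\merge}(x)\). Because \(G_i^{\merge}(x)\ge 0\), multiplying the inequality by \(G_i^{\merge}(x)\) preserves its direction. This yields \cref{eq:feature-to-loss-bound}.

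The only point needing care, and hardly an obstacle, is reading ``locally Lipschitz on the segment'' correctly. The constant must hold at least between the two endpoints \(z_i^{\expert}(x)\) and \(z_i^{\merge}(x)\), since those are the only points compared. The nonnegativity of \(G_i^{\merge}(x)\) is automatic for a Lipschitz constant. As a remark, when the score map is shared, \(\delta_i^{\psi,\merge}(x)=0\) and the bound reduces to \(G_i^{\merge}(x)B_i^{\merge}(x)\|e_{i,L}(x)\|_2\).
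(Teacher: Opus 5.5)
Your proof is correct and follows the paper's argument: bound the loss gap by \(G_i^{\merge}(x)\|\Delta z_i^{\merge}(x)\|_2\) using the local Lipschitz hypothesis, then substitute \cref{eq:score-map-bound}, which is valid because \(G_i^{\merge}(x)\ge 0\). One small caveat concerns your optional aside: the claim \(\|\bar g_i(x)\|_2\le G_i^{\merge}(x)\) needs the Lipschitz bound on a neighborhood, since a bound only along the segment controls just the directional term \(\bar g_i(x)^\top\Delta z_i^{\merge}(x)\); your main chain does not depend on this aside.
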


\begin{proof}
Combine \cref{eq:score-map-bound} with the local Lipschitz bound on
\(\mathcal{L}_i(y,\cdot)\).
\end{proof}

\paragraph{Decision margins.}
For discrete decisions, output-score drift matters through margins. For a top-1
classification output with unique expert winner
\(k^\star=\arg\max_k z_{i,k}^{\expert}(x)\), each competitor \(k\ne k^\star\)
changes its pairwise margin as
\[
z_{i,k^\star}^{\merge}(x)-z_{i,k}^{\merge}(x)
=
\bigl(z_{i,k^\star}^{\expert}(x)-z_{i,k}^{\expert}(x)\bigr)
-
\bigl(\Delta z_{i,k}^{\merge}(x)-\Delta z_{i,k^\star}^{\merge}(x)\bigr).
\]
Thus a decision boundary is crossed only when a competitor's relative score
drift closes the corresponding expert margin. In particular, with the minimum
expert margin
\[
\mu_i(x)=z_{i,k^\star}^{\expert}(x)-\max_{k\ne k^\star}z_{i,k}^{\expert}(x),
\]
the merged model preserves the expert top-1 decision whenever
\begin{equation}
\label{eq:top1-margin-main}
\|\Delta z_i^{\merge}(x)\|_\infty < \mu_i(x)/2
\quad\Longrightarrow\quad
\arg\max_k z_{i,k}^{\merge}(x)=k^\star .
\end{equation}
Analogous pairwise margins apply to fixed-candidate retrieval items and
fixed-prefix next-token choices.

\section{Forward-Order Calibration and Deployed Feature Sources}
\label{app:forward-order-calibration}

This appendix formalizes why \method collects current calibrated-model features
at each layer before fitting the calibrated modules in that layer. The point is
complementary to \cref{prop:layerwise-propagation-identity}: local mismatch
produced at a layer is propagated downstream in forward order, so the layer-wise
calibration surrogate should be fitted on the feature source that the final
calibrated model will actually expose to that layer.

Let \(M^{[0]}=M^{\merge}\), and let \(M^{[\ell-1]}\) denote the partially
calibrated model after all calibrated layers before layer \(\ell\) have been
updated. Because updates to layer \(\ell\) and later layers are downstream of the
prefix before layer \(\ell\), the input feature to layer \(\ell\) in the final
calibrated model \(M^{\cali}\) is the same as the input feature produced by
\(M^{[\ell-1]}\):
\begin{equation}
\label{eq:deployed-prefix-feature}
h_{i,\ell-1}^{M^{\cali}}(x)
=
h_{i,\ell-1}^{M^{[\ell-1]}}(x).
\end{equation}
Thus the deployed layer-input distribution is the push-forward distribution
\begin{equation}
\label{eq:deployed-prefix-distribution}
\mu_{i,\ell}^{\dep}
=
\bigl(h_{i,\ell-1}^{M^{[\ell-1]}}\bigr)_{\#}\mathcal D_i .
\end{equation}
Within layer \(\ell\), \method uses this cached layer snapshot to construct the
input features for all calibrated modules in the layer. It does not advance the
feature source after each module update inside the same layer. Thus the deployed
source statement is layer-level, while the objectives below are module-local fits
computed from cached features.

Consider a calibrated linear module with candidate weight \(W\). Let
\((u,y)\) denote the module input feature and the expert target output
used by the local regression, and define
\[
r_W(u,y)=\|Wu-y\|_2^2 .
\]
Let \(\Pi_{i,\ell}^{\dep}\) be the deployed joint distribution of these
input-target pairs induced by the prefix-calibrated feature source in
\cref{eq:deployed-prefix-distribution}. Let \(\Pi_{i,\ell}^{\src}\) be any other
joint distribution used to fit the same local update, such as features collected
from the uncalibrated merged model, from the task expert, or from a stale
partially calibrated model. For \(a\in\{\src,\dep\}\), define the population
ridge objective
\begin{equation}
\label{eq:source-deployed-objective}
J_{\ell}^{a}(W)
=
\sum_{i=1}^{N}
\mathbb E_{(u,y)\sim \Pi_{i,\ell}^{a}}
r_W(u,y)
+
\lambda\|W-W^{\anchor}\|_F^2 .
\end{equation}

\begin{proposition}[Source/deployed objective mismatch]
\label{prop:source-deployed-objective-mismatch}
Let \(\mathcal W\) be a weight class for which the following suprema are finite,
and let
\[
W_{\src}^{\star}\in\arg\min_{W\in\mathcal W}J_{\ell}^{\src}(W),
\qquad
W_{\dep}^{\star}\in\arg\min_{W\in\mathcal W}J_{\ell}^{\dep}(W).
\]
Then
\begin{equation}
\label{eq:source-deployed-excess-bound}
J_{\ell}^{\dep}(W_{\src}^{\star})
-
J_{\ell}^{\dep}(W_{\dep}^{\star})
\le
2\sup_{W\in\mathcal W}
\left|
J_{\ell}^{\dep}(W)-J_{\ell}^{\src}(W)
\right|.
\end{equation}
If, moreover, \(r_W(u,y)\) is uniformly \(K_{\ell}\)-Lipschitz in \((u,y)\) on
the relevant supports for all \(W\in\mathcal W\), then
\begin{equation}
\label{eq:source-deployed-wasserstein-bound}
J_{\ell}^{\dep}(W_{\src}^{\star})
-
J_{\ell}^{\dep}(W_{\dep}^{\star})
\le
2K_{\ell}
\sum_{i=1}^{N}
W_1\!\left(
\Pi_{i,\ell}^{\dep},
\Pi_{i,\ell}^{\src}
\right),
\end{equation}
where \(W_1\) denotes the 1-Wasserstein distance on input-target pairs.
\end{proposition}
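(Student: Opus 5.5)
The first bound is the standard three-term argument for comparing minimizers of two objectives that are uniformly close. Set \(\Delta=\sup_{W\in\mathcal W}|J_{\ell}^{\dep}(W)-J_{\ell}^{\src}(W)|\), which is finite by assumption. Then split the excess risk as
\[
J_{\ell}^{\dep}(W_{\src}^{\star})-J_{\ell}^{\dep}(W_{\dep}^{\star})
=\bigl[J_{\ell}^{\dep}(W_{\src}^{\star})-J_{\ell}^{\src}(W_{\src}^{\star})\bigr]
+\bigl[J_{\ell}^{\src}(W_{\src}^{\star})-J_{\ell}^{\src}(W_{\dep}^{\star})\bigr]
+\bigl[J_{\ell}^{\src}(W_{\dep}^{\star})-J_{\ell}^{\dep}(W_{\dep}^{\star})\bigr].
\]
The first and third brackets are each at most \(\Delta\). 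The middle bracket is \(\le 0\), because \(W_{\src}^{\star}\) minimizes \(J_{\ell}^{\src}\) over \(\mathcal W\) and \(W_{\dep}^{\star}\in\mathcal W\). Together these give \cref{eq:source-deployed-excess-bound}. The ridge term \(\lambda\|W-W^{\anchor}\|_F^2\) appears identically in both objectives, so it cancels in every difference and plays no role.

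For the Wasserstein refinement, I would bound \(\Delta\) task by task. After the ridge terms cancel,
\[
|J_{\ell}^{\dep}(W)-J_{\ell}^{\src}(W)|\le\sum_{i=1}^N\Bigl|\mathbb E_{\Pi_{i,\ell}^{\dep}}r_W-\mathbb E_{\Pi_{i,\ell}^{\src}}r_W\Bigr|.
\]
Each summand is a difference of expectations of the single function \(r_W\). Since \(r_W\) is \(K_\ell\)-Lipschitz in \((u,y)\), Kantorovich--Rubinstein duality applied to \(r_W/K_\ell\) bounds the summand by \(K_\ell W_1(\Pi_{i,\ell}^{\dep},\Pi_{i,\ell}^{\src})\). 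If I want to avoid duality, I can instead take any coupling \(\gamma\) of the two laws, write the difference as \(\mathbb E_\gamma[r_W(u,y)-r_W(u',y')]\), bound the integrand by \(K_\ell\|(u,y)-(u',y')\|\), and take the infimum over couplings. This bound does not depend on \(W\), so taking the supremum over \(\mathcal W\) and inserting into the first inequality gives \cref{eq:source-deployed-wasserstein-bound}.

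The only delicate point is the Lipschitz hypothesis. The function \(r_W\) is quadratic, so it is Lipschitz only on bounded supports with bounded \(\|W\|\). That is exactly why the statement restricts to the "relevant supports" and to a weight class \(\mathcal W\) with finite suprema. I would state that \(W_1\) is computed with the same product norm on \((u,y)\) that is used in the Lipschitz constant. I would also note that duality requires the relevant first moments to be finite, which holds on bounded supports. With those conventions fixed, no step is a real obstacle.
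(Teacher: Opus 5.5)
Your proof is correct and follows the paper's argument: the same three-bracket add-and-subtract with the middle bracket nonpositive by optimality of $W^{\star}_{\mathrm{src}}$, then cancellation of the identical ridge terms and a per-task Kantorovich--Rubinstein bound summed over tasks. Your coupling-based alternative is also valid, and your remarks about bounded supports and using the same product norm for $W_1$ and the Lipschitz constant spell out conventions the paper leaves implicit.
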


\begin{proof}
Add and subtract \(J_{\ell}^{\src}(W_{\src}^{\star})\) and
\(J_{\ell}^{\src}(W_{\dep}^{\star})\):
\begin{align}
J_{\ell}^{\dep}(W_{\src}^{\star})
-
J_{\ell}^{\dep}(W_{\dep}^{\star})
&=
\bigl[J_{\ell}^{\dep}(W_{\src}^{\star})
-
J_{\ell}^{\src}(W_{\src}^{\star})\bigr]
\notag\\
&\quad+
\bigl[J_{\ell}^{\src}(W_{\src}^{\star})
-
J_{\ell}^{\src}(W_{\dep}^{\star})\bigr]
\notag\\
&\quad+
\bigl[J_{\ell}^{\src}(W_{\dep}^{\star})
-
J_{\ell}^{\dep}(W_{\dep}^{\star})\bigr].
\end{align}
The middle bracket is nonpositive by optimality of \(W_{\src}^{\star}\) for
\(J_{\ell}^{\src}\). Both remaining brackets are bounded by
\(\sup_{W\in\mathcal W}|J_{\ell}^{\dep}(W)-J_{\ell}^{\src}(W)|\), which proves
\cref{eq:source-deployed-excess-bound}. The quadratic regularization term is identical
in both objectives, so the source/deployed difference only comes from the
expected matching losses. The Kantorovich--Rubinstein duality for \(W_1\), under
the stated uniform Lipschitz condition, gives
\[
\left|
\mathbb E_{\Pi_{i,\ell}^{\dep}}r_W
-
\mathbb E_{\Pi_{i,\ell}^{\src}}r_W
\right|
\le
K_{\ell}
W_1\!\left(\Pi_{i,\ell}^{\dep},\Pi_{i,\ell}^{\src}\right).
\]
Summing over tasks and applying the previous bound proves
\cref{eq:source-deployed-wasserstein-bound}.
\end{proof}

\paragraph{Implication for \method.}
Forward-order calibration sets the source distribution for layer \(\ell\) to the
deployed prefix distribution in \cref{eq:deployed-prefix-distribution}, so the
population source/deployed mismatch term in
\cref{eq:source-deployed-wasserstein-bound} vanishes for that layer snapshot.
Using merged-model, expert, or stale features can fit a different objective:
after earlier layers are calibrated, the final model may expose layer \(\ell\)
to a feature source that differs from the source used to estimate \(W\). This does
not prove global optimality of the forward-order procedure, but it explains why
\method captures features at each layer from the current partially calibrated
model rather than from a non-deployed feature source.

\section{Bias and LayerNorm Affine Updates}
\label{app:affine-layernorm-calibration}

The main text writes the core update for a bias-free linear map. The
implementation also calibrates a linear-module bias parameter when present. It
first solves the weight \(W^\star\) by \cref{eq:featcal-final-closed-form}, then
solves the bias with \(W^\star\) fixed. Thus the following formula is a
second-stage bias update, not a joint affine solve over \(W\) and \(b\).
For readability, the equations below use uniform empirical averages; replacing
these averages by normalized sample or token weights gives the implemented
weighted version.

Let \(b_i,b^{\merge},b^{\base}\in\mathbb{R}^{m}\) be the expert, merged, and
base bias parameters for the same linear module, and define
\begin{equation}
\label{eq:featcal-bias-anchor}
b^{\anchor}
=
\rho b^{\merge}
+
(1-\rho)b^{\base}.
\end{equation}
Let \(\mathbf{1}_n\in\mathbb{R}^{n}\) be the all-one vector and define the empirical
input means
\begin{equation}
\label{eq:featcal-bias-means}
\mu_i^{\cali}
=
\frac{1}{n}X_i^{\cali}\mathbf{1}_n,
\qquad
\mu_i^{\tgt}
=
\frac{1}{n}X_i^{\tgt}\mathbf{1}_n.
\end{equation}
The mean \(\mu_i^{\tgt}\) is taken after target-side interpolation in
\cref{eq:featcal-target-feature}; it equals the raw expert-feature mean only when
\(\alpha=1\). With \(W^\star\) fixed and the task weights \(\omega_i\) from
\cref{eq:featcal-task-weight}, the bias objective is
\begin{equation}
\label{eq:featcal-bias-objective}
b^\star
=
\arg\min_{b\in\mathbb{R}^{m}}
\sum_{i=1}^{N}
\omega_i
\left\|
W^\star\mu_i^{\cali}
+
b
-
\left(W_i\mu_i^{\tgt}+b_i\right)
\right\|_2^2
+
\lambda\|b-b^{\anchor}\|_2^2 .
\end{equation}
Taking the derivative gives
\begin{equation}
\label{eq:featcal-bias-stationary}
\left(\sum_{i=1}^{N}\omega_i+\lambda\right)b
=
\sum_{i=1}^{N}
\omega_i
\left(
b_i
+
W_i\mu_i^{\tgt}
-
W^\star\mu_i^{\cali}
\right)
+
\lambda b^{\anchor},
\end{equation}
and hence
\begin{equation}
\label{eq:featcal-bias-closed-form}
b^\star
=
\frac{
\sum_{i=1}^{N}
\omega_i
\left(
b_i+W_i\mu_i^{\tgt}-W^\star\mu_i^{\cali}
\right)
+
\lambda b^{\anchor}
}{
\sum_{i=1}^{N}\omega_i+\lambda
}.
\end{equation}
The experiments in this paper provide merged and base anchors. If an anchor is
unavailable in code, the corresponding \(\lambda\)-term is omitted from the
numerator and denominator.

For a LayerNorm module, write \(\operatorname{LN}_0\) for LayerNorm without its
affine scale and shift. For task \(i\), \method collects the current calibrated
input features and forms
\[
Z_i^{\cali}
=
\operatorname{LN}_0(X_i^{\cali}),
\]
using the module's normalized shape and its LayerNorm numerical constant. This LayerNorm
update uses the current normalized features only: it matches a shared affine
map to the expert affine maps evaluated on \(Z_i^{\cali}\), rather than matching
expert-normalized features.

Let \(\gamma_i,\beta_i\in\mathbb{R}^{d}\) be the expert LayerNorm scale and
shift for task \(i\), and let
\(\gamma^{\merge},\beta^{\merge},\gamma^{\base},\beta^{\base}\in\mathbb{R}^{d}\)
be the corresponding merged and base affine parameters. Define
\[
\gamma^{\anchor}
=
\rho\gamma^{\merge}
+
(1-\rho)\gamma^{\base},
\qquad
\beta^{\anchor}
=
\rho\beta^{\merge}
+
(1-\rho)\beta^{\base}.
\]
For each task, summarize the current normalized features by the coordinate-wise
mean and second moment
\begin{equation}
\label{eq:featcal-layernorm-moments}
\bar z_i
=
\frac{1}{n}Z_i^{\cali}\mathbf{1}_n,
\qquad
q_i
=
\frac{1}{n}
\left(Z_i^{\cali}\odot Z_i^{\cali}\right)\mathbf{1}_n,
\end{equation}
where vector products with \(Z_i^{\cali}\) are applied column-wise. The
LayerNorm affine objective is
\begin{equation}
\label{eq:featcal-layernorm-objective}
\begin{aligned}
(\gamma^\star,\beta^\star)
=
\arg\min_{\gamma,\beta\in\mathbb{R}^{d}}
&
\sum_{i=1}^{N}
\frac{1}{n}
\left\|
\gamma\odot Z_i^{\cali}
+
\beta\mathbf{1}_n^{\top}
-
\left(
\gamma_i\odot Z_i^{\cali}
+
\beta_i\mathbf{1}_n^{\top}
\right)
\right\|_F^2
\\
&+
\lambda\|\gamma-\gamma^{\anchor}\|_2^2
+
\lambda\|\beta-\beta^{\anchor}\|_2^2 .
\end{aligned}
\end{equation}
Unlike the linear-weight and bias updates, this LayerNorm affine update uses
equal task weights after each task's empirical moments are formed. This objective
separates across coordinates. Let
\(\mathbf{1}_d\in\mathbb{R}^{d}\) be the all-one vector and define
\begin{align}
\label{eq:featcal-layernorm-normal-terms}
a_{11}
&=
\sum_{i=1}^{N}q_i+\lambda\mathbf{1}_d,
&
a_{12}
&=
\sum_{i=1}^{N}\bar z_i,
&
a_{22}
&=
N+\lambda,
\notag\\
r_{\gamma}
&=
\sum_{i=1}^{N}
\left(q_i\odot\gamma_i+\bar z_i\odot\beta_i\right)
+
\lambda\gamma^{\anchor},
&
r_{\beta}
&=
\sum_{i=1}^{N}
\left(\bar z_i\odot\gamma_i+\beta_i\right)
+
\lambda\beta^{\anchor}.
\end{align}
The coordinate-wise \(2\times2\) normal equations have determinant
\[
D
=
a_{22}a_{11}
-
a_{12}\odot a_{12}.
\]
Thus the closed-form affine update is
\begin{equation}
\label{eq:featcal-layernorm-closed-form}
\gamma^\star
=
\left(a_{22}r_{\gamma}-a_{12}\odot r_{\beta}\right)\oslash D,
\qquad
\beta^\star
=
\left(a_{11}\odot r_{\beta}-a_{12}\odot r_{\gamma}\right)\oslash D,
\end{equation}
where \(\oslash\) denotes element-wise division. In the implementation,
near-singular determinants are clamped from below by the FeatCal numerical
stabilizer \(\epsilon\). If an anchor is unavailable, the corresponding
\(\lambda\)-term is omitted.

\section{Calibration Algorithm}
\label{app:calibration-algorithm}

\begin{algorithm}[H]
\caption{\method Calibration}
\label{alg:featcal}
\begin{algorithmic}[1]
\Require \(M^{\merge}\), \(M^{\base}\), experts \(\{M_i^{\expert}\}_{i=1}^N\), calibration sets, layers or blocks in forward order with calibrated modules inside each layer, and hyperparameters \(\lambda,\alpha,\rho,\epsilon\)
\Ensure calibrated model \(M^{\cali}\)
\State Initialize \(M^{\cali}\leftarrow M^{\merge}\)
\For{each layer or block in forward order}
    \State Cache current calibrated-model and expert input features once for the modules calibrated in this layer
    \For{each module \(t\) calibrated in the current layer}
    \If{\(t\) is a linear module}
    \State Let \(W^{\merge}\), \(W^{\base}\), and \(W_i\) be the corresponding merged, base, and expert module weights
    \State Read cached \(X_i^{\cali}\) and \(X_i^{\expert}\) from the layer feature snapshot
    \State Construct \(X_i^{\tgt}\), \(G_i\), \(C_i\), and \(\omega_i\) using \cref{eq:featcal-target-feature,eq:featcal-stats,eq:featcal-task-weight}
    \State Form \(W^{\anchor}\) by \cref{eq:featcal-anchor} and store the closed-form weight from \cref{eq:featcal-final-closed-form}
        \If{the linear module has a calibrated bias parameter}
            \State Store the closed-form bias from \cref{eq:featcal-bias-closed-form}
        \EndIf
    \ElsIf{\(t\) is a LayerNorm module}
        \State Read cached normalized current features and store the affine parameters from \cref{eq:featcal-layernorm-closed-form}
    \EndIf
    \EndFor
    \State Load all stored calibrated parameters of this layer into \(M^{\cali}\)
\EndFor
\State \Return \(M^{\cali}\)
\end{algorithmic}
\end{algorithm}

\section{Detailed CLIP Results}
\label{app:clip-detailed-results}

\begin{table}[t]
\centering
\caption{
Full 8-task CLIP-ViT-L/14 results. All numbers are top-1 accuracy (\%).
Parenthesized values in Avg. report changes over the corresponding upstream merger.
}
\label{tab:clip-vit-l14-full}
\setlength{\tabcolsep}{3.2pt}
\resizebox{\textwidth}{!}{
\begin{tabular}{l|cccccccc|c}
\toprule
Method & SUN397 & Cars & RESISC45 & EuroSAT & SVHN & GTSRB & MNIST & DTD & Avg. \\
\midrule
\multicolumn{10}{l}{\emph{Reference methods}} \\
Pre-trained & 68.3 & 77.8 & 71.0 & 58.9 & 58.4 & 50.6 & 76.4 & 55.5 & 64.6 \\
Fine-tuned (STL) & 82.8 & 92.9 & 97.4 & 99.2 & 97.9 & 99.2 & 99.8 & 85.5 & 94.3 \\
Traditional MTL & 79.0 & 89.3 & 94.5 & 98.4 & 96.4 & 98.1 & 99.4 & 83.7 & 92.4 \\
\midrule
\multicolumn{10}{l}{\emph{Model merging and post-merging calibration}} \\
Simple Averaging & 72.5 & 81.5 & 82.3 & 88.5 & 81.6 & 74.0 & 96.6 & 61.8 & 79.9 \\
\quad w/ Surgery & 73.8 & 81.2 & 89.1 & 95.8 & 86.1 & 97.9 & 74.4 & 85.2 & 85.4\posgain{5.5} \\
\quad w/ ProbSurgery & 76.4 & 82.9 & 92.2 & 95.6 & 85.3 & 91.2 & 98.0 & 77.0 & 87.3\posgain{7.3} \\
\rowcolor{FeatCalRow}\quad w/ \method & 77.4 & 88.4 & 91.0 & 97.1 & 95.3 & 93.7 & 97.3 & 73.6 & 89.2\posgain{9.3} \\
\midrule
Task Arithmetic & 72.0 & 79.0 & 80.6 & 84.6 & 87.5 & 83.5 & 98.0 & 58.5 & 80.5 \\
\quad w/ Surgery & 73.4 & 80.5 & 87.9 & 94.6 & 90.6 & 90.3 & 98.7 & 72.1 & 86.0\posgain{5.5} \\
\quad w/ ProbSurgery & 74.3 & 80.8 & 90.9 & 95.4 & 90.7 & 93.0 & 98.8 & 76.6 & 87.6\posgain{6.9} \\
\rowcolor{FeatCalRow}\quad w/ \method & 80.0 & 90.8 & 93.7 & 98.1 & 96.6 & 97.0 & 98.3 & 78.6 & 91.6\posgain{11.1} \\
\midrule
AdaMerging & 78.2 & 90.7 & 90.9 & 96.1 & 94.8 & 97.6 & 98.5 & 81.4 & 91.0 \\
\quad w/ Surgery & 79.0 & 91.0 & 93.9 & 96.9 & 95.4 & 97.8 & 98.8 & 83.1 & 92.0\posgain{1.0} \\
\quad w/ ProbSurgery & 79.4 & 91.6 & 94.3 & 97.5 & 95.6 & 98.1 & 98.9 & 83.4 & 92.4\posgain{1.4} \\
\rowcolor{FeatCalRow}\quad w/ \method & 81.9 & 92.6 & 94.9 & 98.2 & 96.4 & 98.5 & 98.0 & 84.1 & 93.1\posgain{2.1} \\
\midrule
WUDI-Merging & 79.8 & 90.9 & 94.0 & 98.4 & 97.0 & 98.1 & 99.3 & 80.0 & 92.2 \\
\quad w/ Surgery & 80.5 & 91.3 & 95.3 & 98.5 & 97.1 & 98.4 & 99.4 & 81.6 & 92.8\posgain{0.6} \\
\quad w/ ProbSurgery & 80.4 & 91.3 & 95.8 & 98.8 & 97.2 & 98.4 & 99.4 & 82.8 & 93.0\posgain{0.7} \\
\rowcolor{FeatCalRow}\quad w/ \method & 81.5 & 91.7 & 96.5 & 99.0 & 97.8 & 99.2 & 98.6 & 83.6 & 93.5\posgain{1.3} \\
\bottomrule
\end{tabular}
}
\end{table}

\begin{table}[t]
\centering
\caption{
Full 14-task CLIP-ViT-B/32 results. All numbers are top-1 accuracy (\%).
Parenthesized values in Avg. report changes over the corresponding upstream merger.
}
\label{tab:clip-14task-b32-full}
\setlength{\tabcolsep}{2.5pt}
\resizebox{\textwidth}{!}{
\begin{tabular}{l|cccccccccccccc|c}
\toprule
Method & SUN397 & Cars & RESISC45 & EuroSAT & SVHN & GTSRB & MNIST & DTD & Flowers & PCAM & FER & Pets & STL10 & C100 & Avg. \\
\midrule
\multicolumn{16}{l}{\emph{Reference methods}} \\
Pre-trained & 63.2 & 59.6 & 60.3 & 45.0 & 31.6 & 32.5 & 48.3 & 44.2 & 66.5 & 60.6 & 41.2 & 83.3 & 97.1 & 89.8 & 58.8 \\
Fine-tuned (STL) & 74.9 & 78.5 & 95.1 & 99.1 & 97.3 & 98.9 & 99.6 & 79.7 & 88.5 & 98.0 & 71.6 & 92.5 & 97.5 & 88.4 & 90.0 \\
\midrule
\multicolumn{16}{l}{\emph{Model merging and post-merging calibration}} \\
Simple Averaging & 64.8 & 60.4 & 67.1 & 67.0 & 50.7 & 45.6 & 76.6 & 46.9 & 67.4 & 65.2 & 51.6 & 84.2 & 97.2 & 70.4 & 65.4 \\
\quad w/ Surgery & 66.7 & 61.4 & 80.7 & 92.5 & 61.4 & 74.8 & 94.8 & 63.7 & 78.0 & 78.6 & 60.4 & 88.5 & 97.9 & 72.9 & 76.6\posgain{11.2} \\
\quad w/ ProbSurgery & 68.7 & 64.7 & 85.0 & 93.3 & 60.6 & 82.9 & 95.9 & 68.6 & 79.9 & 81.1 & 60.2 & 91.3 & 97.9 & 75.0 & 78.9\posgain{13.5} \\
\rowcolor{FeatCalRow}\quad w/ \method & 68.2 & 67.3 & 80.3 & 92.5 & 86.5 & 77.0 & 95.1 & 60.7 & 75.2 & 86.0 & 63.9 & 89.3 & 97.5 & 74.5 & 79.6\posgain{14.2} \\
\midrule
Task Arithmetic & 62.4 & 55.6 & 64.2 & 67.0 & 53.1 & 85.6 & 48.2 & 55.5 & 61.9 & 76.4 & 54.2 & 82.7 & 95.2 & 64.6 & 66.2 \\
\quad w/ Surgery & 65.6 & 60.5 & 78.7 & 92.7 & 80.6 & 96.3 & 62.7 & 67.5 & 75.4 & 81.2 & 60.5 & 88.0 & 96.8 & 68.4 & 76.8\posgain{10.6} \\
\quad w/ ProbSurgery & 65.9 & 64.0 & 83.2 & 92.6 & 85.6 & 96.2 & 67.1 & 68.7 & 75.3 & 82.2 & 60.6 & 88.2 & 96.7 & 70.5 & 78.3\posgain{12.1} \\
\rowcolor{FeatCalRow}\quad w/ \method & 68.5 & 68.6 & 83.7 & 93.6 & 91.7 & 85.4 & 98.0 & 64.4 & 74.1 & 87.4 & 65.3 & 89.2 & 97.0 & 73.5 & 81.5\posgain{15.3} \\
\midrule
AdaMerging & 66.3 & 70.0 & 82.1 & 92.2 & 86.1 & 89.8 & 98.0 & 61.3 & 73.6 & 51.6 & 65.0 & 87.3 & 96.6 & 68.9 & 77.8 \\
\quad w/ Surgery & 68.5 & 69.5 & 88.5 & 95.2 & 88.8 & 94.6 & 98.4 & 70.9 & 80.8 & 79.7 & 65.3 & 91.4 & 97.6 & 71.6 & 82.9\posgain{5.1} \\
\quad w/ ProbSurgery & 69.0 & 71.1 & 89.8 & 96.0 & 89.2 & 93.9 & 98.8 & 72.6 & 82.0 & 82.7 & 64.5 & 91.4 & 97.6 & 73.3 & 83.7\posgain{5.9} \\
\rowcolor{FeatCalRow}\quad w/ \method & 71.2 & 74.6 & 88.4 & 95.3 & 90.6 & 91.9 & 98.6 & 71.4 & 81.2 & 85.2 & 69.9 & 91.2 & 97.5 & 76.6 & 84.5\posgain{6.7} \\
\midrule
WUDI-Merging & 65.7 & 64.8 & 77.0 & 89.2 & 91.3 & 91.5 & 99.0 & 60.7 & 63.8 & 85.5 & 64.2 & 86.2 & 96.1 & 66.6 & 78.7 \\
\quad w/ Surgery & 66.9 & 68.5 & 84.6 & 96.0 & 92.6 & 95.3 & 99.0 & 67.9 & 76.4 & 86.7 & 64.9 & 88.6 & 96.8 & 70.8 & 82.5\posgain{3.8} \\
\quad w/ ProbSurgery & 67.1 & 69.5 & 86.7 & 96.4 & 92.5 & 95.4 & 99.1 & 68.6 & 76.8 & 85.7 & 63.9 & 89.0 & 96.8 & 70.3 & 82.7\posgain{4.0} \\
\rowcolor{FeatCalRow}\quad w/ \method & 71.8 & 74.6 & 91.2 & 97.1 & 95.9 & 96.4 & 99.3 & 72.8 & 80.5 & 86.8 & 70.6 & 90.7 & 97.9 & 80.7 & 86.2\posgain{7.5} \\
\bottomrule
\end{tabular}
}
\end{table}

\begin{table}[t]
\centering
\caption{
Full 14-task CLIP-ViT-L/14 results. All numbers are top-1 accuracy (\%).
Parenthesized values in Avg. report changes over the corresponding upstream merger.
}
\label{tab:clip-14task-l14-full}
\setlength{\tabcolsep}{2.5pt}
\resizebox{\textwidth}{!}{
\begin{tabular}{l|cccccccccccccc|c}
\toprule
Method & SUN397 & Cars & RESISC45 & EuroSAT & SVHN & GTSRB & MNIST & DTD & Flowers & PCAM & FER & Pets & STL10 & C100 & Avg. \\
\midrule
\multicolumn{16}{l}{\emph{Reference methods}} \\
Pre-trained & 68.2 & 77.9 & 71.3 & 61.2 & 58.4 & 50.5 & 76.3 & 55.5 & 79.3 & 51.2 & 50.0 & 93.2 & 99.4 & 75.1 & 69.1 \\
Fine-tuned (STL) & 82.8 & 92.8 & 97.4 & 91.1 & 97.9 & 99.2 & 99.8 & 85.5 & 97.7 & 91.1 & 75.9 & 95.8 & 99.2 & 93.0 & 92.8 \\
\midrule
\multicolumn{16}{l}{\emph{Model merging and post-merging calibration}} \\
Simple Averaging & 71.2 & 79.0 & 78.7 & 80.4 & 71.3 & 64.6 & 94.3 & 58.7 & 81.9 & 74.2 & 54.8 & 94.6 & 99.3 & 82.4 & 77.5 \\
\quad w/ Surgery & 72.0 & 78.7 & 82.6 & 93.4 & 75.4 & 78.6 & 97.3 & 70.4 & 88.7 & 85.4 & 67.2 & 95.3 & 99.5 & 83.2 & 83.4\posgain{5.9} \\
\quad w/ ProbSurgery & 75.7 & 81.8 & 91.0 & 95.3 & 75.7 & 89.9 & 98.2 & 76.8 & 95.3 & 84.8 & 67.9 & 95.6 & 99.6 & 84.7 & 86.6\posgain{9.1} \\
\rowcolor{FeatCalRow}\quad w/ \method & 74.4 & 85.4 & 86.8 & 95.9 & 91.9 & 86.8 & 96.7 & 68.0 & 91.6 & 84.3 & 67.1 & 95.8 & 99.2 & 84.2 & 86.3\posgain{8.8} \\
\midrule
Task Arithmetic & 71.2 & 75.9 & 77.2 & 76.4 & 70.2 & 95.7 & 58.2 & 72.5 & 80.3 & 73.3 & 57.9 & 94.6 & 98.7 & 79.7 & 77.3 \\
\quad w/ Surgery & 72.3 & 78.3 & 81.1 & 93.9 & 83.7 & 97.7 & 70.0 & 79.8 & 88.8 & 85.9 & 67.1 & 95.0 & 99.3 & 80.8 & 83.8\posgain{6.5} \\
\quad w/ ProbSurgery & 74.8 & 79.8 & 90.2 & 94.7 & 89.9 & 98.1 & 74.3 & 80.2 & 94.0 & 87.2 & 66.1 & 94.9 & 99.3 & 81.7 & 86.1\posgain{8.8} \\
\rowcolor{FeatCalRow}\quad w/ \method & 81.1 & 92.0 & 93.0 & 97.2 & 94.2 & 95.8 & 98.2 & 80.9 & 97.5 & 82.9 & 75.2 & 96.2 & 99.1 & 86.4 & 90.7\posgain{13.4} \\
\midrule
AdaMerging & 77.6 & 90.1 & 91.3 & 95.9 & 94.1 & 96.1 & 98.6 & 78.1 & 95.7 & 51.6 & 74.3 & 95.8 & 99.4 & 83.2 & 87.3 \\
\quad w/ Surgery & 78.7 & 91.0 & 93.5 & 96.5 & 94.5 & 96.8 & 98.9 & 81.5 & 97.3 & 82.8 & 74.1 & 96.0 & 99.5 & 84.2 & 90.4\posgain{3.1} \\
\quad w/ ProbSurgery & 79.0 & 91.4 & 94.6 & 97.0 & 94.9 & 97.4 & 99.0 & 82.7 & 98.0 & 83.2 & 74.4 & 96.4 & 99.5 & 85.2 & 90.9\posgain{3.6} \\
\rowcolor{FeatCalRow}\quad w/ \method & 81.1 & 91.8 & 93.0 & 97.1 & 94.2 & 95.7 & 98.2 & 80.9 & 97.3 & 82.8 & 75.1 & 96.2 & 99.2 & 86.2 & 90.6\posgain{3.3} \\
\midrule
WUDI-Merging & 76.7 & 87.6 & 90.4 & 95.4 & 94.8 & 95.7 & 99.2 & 71.4 & 95.4 & 86.7 & 70.7 & 96.2 & 99.1 & 84.5 & 88.8 \\
\quad w/ Surgery & 78.0 & 89.6 & 92.4 & 97.3 & 95.1 & 96.9 & 99.3 & 77.1 & 96.4 & 89.8 & 72.1 & 96.0 & 99.4 & 85.2 & 90.3\posgain{1.5} \\
\quad w/ ProbSurgery & 78.4 & 89.3 & 94.3 & 97.9 & 95.5 & 97.0 & 99.2 & 79.3 & 96.6 & 89.4 & 72.2 & 96.0 & 99.3 & 85.3 & 90.7\posgain{1.9} \\
\rowcolor{FeatCalRow}\quad w/ \method & 80.5 & 91.1 & 95.4 & 98.5 & 97.2 & 98.5 & 99.0 & 80.6 & 97.5 & 83.7 & 75.0 & 95.6 & 99.2 & 89.3 & 91.5\posgain{2.7} \\
\bottomrule
\end{tabular}
}
\end{table}

\begin{table}[t]
\centering
\caption{
Full 20-task CLIP-ViT-B/32 results. All numbers are top-1 accuracy (\%).
Parenthesized values in Avg. report changes over the corresponding upstream merger.
}
\label{tab:clip-20task-b32-full}
\setlength{\tabcolsep}{1.8pt}
\resizebox{\textwidth}{!}{
\begin{tabular}{l|cccccccccccccccccccc|c}
\toprule
Method & SUN397 & Cars & RESISC45 & EuroSAT & SVHN & GTSRB & MNIST & DTD & Flowers & PCAM & FER & Pets & STL10 & C100 & C10 & Food & F-MNIST & EMNIST & KMNIST & Rendered & Avg. \\
\midrule
\multicolumn{22}{l}{\emph{Reference methods}} \\
Pre-trained & 63.2 & 59.6 & 60.3 & 45.0 & 31.6 & 32.5 & 48.3 & 44.2 & 66.5 & 60.6 & 41.2 & 83.3 & 97.1 & 63.7 & 89.8 & 82.4 & 63.0 & 12.0 & 9.9 & 58.6 & 55.6 \\
Fine-tuned (STL) & 74.9 & 78.5 & 95.1 & 99.1 & 97.3 & 98.9 & 99.6 & 79.7 & 88.5 & 98.0 & 71.6 & 92.5 & 97.5 & 88.4 & 97.6 & 88.4 & 94.8 & 95.6 & 98.2 & 71.3 & 90.3 \\
\midrule
\multicolumn{22}{l}{\emph{Model merging and post-merging calibration}} \\
Simple Averaging & 64.2 & 59.6 & 64.8 & 60.9 & 47.3 & 43.1 & 71.8 & 46.4 & 66.5 & 63.9 & 50.2 & 84.1 & 97.0 & 69.8 & 92.7 & 80.4 & 71.3 & 15.0 & 11.5 & 61.8 & 61.1 \\
\quad w/ Surgery & 66.9 & 61.1 & 79.1 & 91.2 & 56.8 & 74.0 & 95.3 & 61.5 & 76.8 & 80.4 & 59.3 & 89.1 & 97.9 & 72.3 & 94.6 & 81.6 & 83.1 & 63.4 & 65.5 & 60.7 & 75.5\posgain{14.4} \\
\quad w/ ProbSurgery & 68.0 & 64.6 & 83.5 & 92.3 & 58.2 & 82.0 & 95.8 & 67.9 & 81.0 & 79.4 & 59.3 & 90.0 & 97.6 & 74.8 & 94.5 & 82.0 & 84.7 & 73.6 & 66.8 & 63.8 & 78.0\posgain{16.9} \\
\rowcolor{FeatCalRow}\quad w/ \method & 67.6 & 65.3 & 78.4 & 92.0 & 81.1 & 71.2 & 91.5 & 58.3 & 73.5 & 82.8 & 62.7 & 88.6 & 97.3 & 73.4 & 94.9 & 83.4 & 83.1 & 24.3 & 35.2 & 68.4 & 73.7\posgain{12.6} \\
\midrule
Task Arithmetic & 62.0 & 53.7 & 60.9 & 58.1 & 48.9 & 79.4 & 46.1 & 48.5 & 61.1 & 73.4 & 51.4 & 82.3 & 94.9 & 64.6 & 91.4 & 71.9 & 73.9 & 17.8 & 12.2 & 59.9 & 60.6 \\
\quad w/ Surgery & 64.9 & 59.8 & 76.7 & 90.2 & 78.2 & 96.1 & 60.6 & 62.9 & 74.4 & 79.8 & 58.9 & 86.6 & 96.9 & 68.9 & 93.2 & 74.8 & 83.3 & 66.7 & 70.4 & 61.0 & 75.2\posgain{14.6} \\
\quad w/ ProbSurgery & 65.8 & 61.8 & 80.1 & 91.8 & 84.1 & 96.7 & 67.2 & 63.5 & 77.2 & 79.9 & 57.7 & 88.3 & 96.6 & 79.7 & 92.8 & 74.7 & 84.8 & 75.7 & 70.4 & 66.0 & 77.7\posgain{17.1} \\
\rowcolor{FeatCalRow}\quad w/ \method & 67.6 & 67.3 & 83.4 & 94.4 & 91.6 & 88.5 & 97.9 & 63.6 & 73.3 & 84.8 & 65.0 & 87.7 & 97.0 & 76.0 & 95.9 & 82.6 & 89.3 & 42.8 & 66.7 & 72.2 & 79.4\posgain{18.8} \\
\midrule
AdaMerging & 66.2 & 67.0 & 80.0 & 92.1 & 79.1 & 87.2 & 90.5 & 59.5 & 72.7 & 53.2 & 65.5 & 85.2 & 96.4 & 69.3 & 90.8 & 79.7 & 70.0 & 15.4 & 10.0 & 50.6 & 69.0 \\
\quad w/ Surgery & 68.9 & 67.1 & 87.3 & 95.4 & 84.2 & 92.7 & 97.0 & 70.0 & 81.5 & 79.6 & 64.0 & 91.4 & 97.9 & 71.8 & 91.9 & 83.0 & 81.4 & 63.2 & 60.0 & 61.1 & 79.5\posgain{10.5} \\
\quad w/ ProbSurgery & 69.3 & 67.8 & 89.2 & 95.1 & 85.6 & 92.6 & 98.0 & 71.8 & 81.2 & 81.7 & 64.7 & 91.8 & 97.9 & 73.7 & 92.9 & 84.3 & 82.9 & 72.0 & 59.5 & 62.1 & 80.7\posgain{11.7} \\
\rowcolor{FeatCalRow}\quad w/ \method & 70.3 & 70.9 & 85.6 & 94.9 & 87.0 & 89.2 & 97.0 & 67.3 & 79.0 & 82.6 & 68.7 & 90.0 & 97.6 & 75.3 & 95.2 & 84.3 & 83.2 & 27.8 & 28.2 & 68.3 & 77.1\posgain{8.1} \\
\midrule
WUDI-Merging & 55.1 & 44.8 & 59.4 & 78.5 & 79.7 & 82.9 & 98.1 & 50.3 & 49.3 & 82.0 & 58.5 & 77.7 & 93.4 & 59.9 & 90.3 & 53.1 & 83.9 & 35.4 & 40.0 & 69.0 & 67.1 \\
\quad w/ Surgery & 57.5 & 60.1 & 73.8 & 93.4 & 84.9 & 89.5 & 98.3 & 59.3 & 67.7 & 82.2 & 61.0 & 82.1 & 95.2 & 64.8 & 90.8 & 57.9 & 87.7 & 75.3 & 78.4 & 70.4 & 76.5\posgain{9.4} \\
\quad w/ ProbSurgery & 57.8 & 60.7 & 76.7 & 94.3 & 86.2 & 92.7 & 98.4 & 63.0 & 69.3 & 82.3 & 60.6 & 83.9 & 95.0 & 65.4 & 91.9 & 57.1 & 88.6 & 80.6 & 77.4 & 70.1 & 77.6\posgain{10.5} \\
\rowcolor{FeatCalRow}\quad w/ \method & 70.5 & 71.6 & 89.1 & 96.6 & 93.9 & 93.5 & 98.4 & 69.1 & 77.9 & 85.4 & 68.7 & 90.0 & 97.7 & 79.5 & 96.6 & 85.1 & 91.4 & 58.1 & 85.9 & 71.1 & 83.5\posgain{16.4} \\
\bottomrule
\end{tabular}
}
\end{table}

\begin{table}[t]
\centering
\caption{
Full 20-task CLIP-ViT-L/14 results. All numbers are top-1 accuracy (\%).
Parenthesized values in Avg. report changes over the corresponding upstream merger.
}
\label{tab:clip-20task-l14-full}
\setlength{\tabcolsep}{1.8pt}
\resizebox{\textwidth}{!}{
\begin{tabular}{l|cccccccccccccccccccc|c}
\toprule
Method & SUN397 & Cars & RESISC45 & EuroSAT & SVHN & GTSRB & MNIST & DTD & Flowers & PCAM & FER & Pets & STL10 & C100 & C10 & Food & F-MNIST & EMNIST & KMNIST & Rendered & Avg. \\
\midrule
\multicolumn{22}{l}{\emph{Reference methods}} \\
Pre-trained & 68.2 & 77.9 & 71.3 & 61.2 & 58.4 & 50.5 & 76.3 & 55.5 & 79.3 & 51.2 & 50.0 & 93.2 & 99.4 & 75.1 & 95.6 & 91.2 & 67.0 & 12.3 & 9.7 & 68.9 & 65.6 \\
Fine-tuned (STL) & 82.8 & 92.8 & 97.4 & 91.1 & 97.9 & 99.2 & 99.8 & 85.5 & 97.7 & 91.1 & 75.9 & 95.8 & 99.2 & 93.0 & 99.1 & 94.7 & 95.3 & 95.4 & 98.3 & 80.5 & 93.1 \\
\midrule
\multicolumn{22}{l}{\emph{Model merging and post-merging calibration}} \\
Simple Averaging & 70.7 & 77.7 & 76.4 & 75.3 & 69.5 & 62.1 & 93.7 & 57.7 & 80.8 & 73.6 & 52.7 & 94.2 & 99.2 & 81.7 & 97.0 & 90.7 & 77.4 & 16.1 & 10.4 & 66.1 & 71.2 \\
\quad w/ Surgery & 72.0 & 78.1 & 82.1 & 95.0 & 75.4 & 79.2 & 97.5 & 70.9 & 89.0 & 84.9 & 66.5 & 94.8 & 99.3 & 83.5 & 98.1 & 93.0 & 86.3 & 60.5 & 67.6 & 70.8 & 82.2\posgain{11.0} \\
\quad w/ ProbSurgery & 75.1 & 78.7 & 90.7 & 94.5 & 74.7 & 89.5 & 97.9 & 75.1 & 94.2 & 84.7 & 67.2 & 94.6 & 99.5 & 84.0 & 98.0 & 93.6 & 87.2 & 78.4 & 73.0 & 73.8 & 85.2\posgain{14.1} \\
\rowcolor{FeatCalRow}\quad w/ \method & 73.4 & 83.1 & 84.8 & 95.6 & 89.7 & 82.2 & 97.2 & 64.9 & 89.8 & 82.9 & 65.7 & 95.8 & 99.1 & 83.1 & 98.0 & 91.6 & 87.4 & 36.4 & 14.4 & 71.9 & 79.3\posgain{8.1} \\
\midrule
Task Arithmetic & 70.4 & 74.1 & 73.9 & 66.3 & 65.6 & 95.1 & 56.6 & 69.9 & 78.6 & 70.4 & 55.7 & 94.2 & 98.6 & 79.1 & 96.6 & 87.6 & 80.8 & 17.6 & 10.6 & 63.6 & 70.3 \\
\quad w/ Surgery & 71.7 & 76.5 & 79.6 & 94.3 & 80.6 & 98.1 & 69.3 & 78.5 & 88.6 & 85.8 & 66.2 & 94.6 & 99.0 & 80.5 & 97.5 & 90.1 & 86.4 & 66.2 & 74.3 & 71.5 & 82.4\posgain{12.1} \\
\quad w/ ProbSurgery & 76.4 & 86.6 & 89.2 & 96.1 & 92.9 & 91.0 & 98.5 & 68.5 & 93.7 & 82.9 & 70.0 & 95.9 & 98.9 & 84.3 & 98.1 & 91.3 & 90.4 & 50.9 & 37.6 & 73.2 & 83.3\posgain{13.0} \\
\rowcolor{FeatCalRow}\quad w/ \method & 77.2 & 86.9 & 90.0 & 96.4 & 93.8 & 93.0 & 98.6 & 69.1 & 94.9 & 82.7 & 70.4 & 95.8 & 99.0 & 84.9 & 98.1 & 92.5 & 92.0 & 54.0 & 51.2 & 73.2 & 84.7\posgain{14.4} \\
\midrule
AdaMerging & 76.8 & 89.6 & 90.3 & 95.1 & 91.9 & 95.2 & 98.5 & 76.4 & 94.5 & 52.0 & 69.9 & 95.5 & 99.3 & 82.3 & 96.5 & 90.5 & 85.2 & 14.0 & 10.0 & 81.7 & 79.3 \\
\quad w/ Surgery & 78.3 & 90.2 & 92.6 & 96.7 & 92.7 & 96.7 & 98.6 & 80.8 & 96.9 & 84.3 & 72.2 & 95.9 & 99.3 & 83.7 & 96.8 & 92.0 & 87.3 & 55.8 & 61.7 & 77.7 & 86.5\posgain{7.2} \\
\quad w/ ProbSurgery & 78.4 & 90.3 & 94.1 & 96.7 & 94.1 & 96.6 & 98.8 & 81.3 & 97.3 & 83.3 & 72.2 & 95.9 & 99.4 & 84.6 & 97.1 & 93.0 & 88.7 & 73.9 & 65.1 & 81.7 & 88.1\posgain{8.8} \\
\rowcolor{FeatCalRow}\quad w/ \method & 80.3 & 91.1 & 91.7 & 96.7 & 91.7 & 93.3 & 98.7 & 78.7 & 96.9 & 82.4 & 73.0 & 96.1 & 99.0 & 84.8 & 97.9 & 91.6 & 89.0 & 36.7 & 10.3 & 81.8 & 83.1\posgain{3.8} \\
\midrule
WUDI-Merging & 70.3 & 72.1 & 73.3 & 69.7 & 81.8 & 84.3 & 98.1 & 56.8 & 85.9 & 83.7 & 64.2 & 94.5 & 97.5 & 72.9 & 95.7 & 83.5 & 89.6 & 33.2 & 33.3 & 74.8 & 75.8 \\
\quad w/ Surgery & 71.7 & 80.5 & 81.7 & 95.3 & 86.4 & 87.7 & 98.7 & 69.2 & 91.3 & 88.6 & 66.2 & 94.1 & 98.2 & 75.1 & 95.7 & 85.1 & 90.7 & 74.4 & 88.7 & 79.0 & 84.9\posgain{9.1} \\
\quad w/ ProbSurgery & 72.2 & 80.8 & 88.3 & 95.2 & 86.8 & 92.8 & 98.9 & 73.2 & 93.1 & 88.4 & 66.2 & 93.7 & 98.3 & 76.0 & 95.8 & 85.1 & 91.0 & 85.5 & 89.6 & 79.7 & 86.5\posgain{10.7} \\
\rowcolor{FeatCalRow}\quad w/ \method & 79.4 & 89.4 & 94.3 & 98.0 & 96.1 & 97.2 & 98.9 & 77.2 & 97.2 & 83.0 & 74.4 & 95.7 & 99.2 & 88.4 & 98.7 & 93.8 & 93.8 & 73.8 & 88.7 & 79.3 & 89.8\posgain{14.0} \\
\bottomrule
\end{tabular}
}
\end{table}

\clearpage

\section{Additional Diagnostic: Comparison with ProbSurgery}
\label{app:probsurgery-diagnostic}

\begin{figure}[t]
\centering
\includegraphics[width=\textwidth]{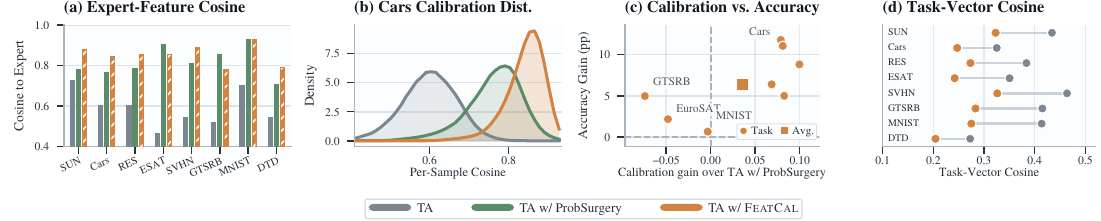}
\caption{
Feature-calibration diagnostics for TA w/ \method{} versus TA w/ ProbSurgery on CLIP-ViT-B/32 Task Arithmetic. 
(a) Task-wise mean cosine between final projected features and same-task expert features.
(b) Per-sample expert-feature cosine on Stanford Cars.
(c) Expert-feature cosine gain over TA w/ ProbSurgery versus top-1 accuracy gain.
(d) Backbone task-vector cosine to same-task expert vectors for TA and TA w/ \method{}.
}
\label{fig:probsurgery-feature-calibration-diagnostics}
\end{figure}

\Cref{fig:probsurgery-feature-calibration-diagnostics} repeats the diagnostic comparison with the stronger TA w/ ProbSurgery baseline.
Panels (a) and (b) show the same average feature-space pattern as in the main text: TA w/ \method{} raises the macro-average expert cosine from 0.814 to 0.850 over TA w/ ProbSurgery, improves 8-task average accuracy from 79.1 to 85.5, and increases the Stanford Cars mean expert cosine by 0.079.

Panel (c) again shows that expert-feature cosine is useful but not a complete accuracy explanation.
TA w/ \method{} improves accuracy over TA w/ ProbSurgery on every task, yet its final-feature expert cosine is lower on EuroSAT, GTSRB, and MNIST; the largest negative cosine change is on GTSRB (\(-0.074\)), where accuracy still improves by \(+4.97\) points.
Panel (d) shows the same feature-space and parameter-space mismatch as \cref{fig:feature-calibration-diagnostics}: although TA w/ \method{} improves final-feature alignment on average, the calibrated backbone has lower same-task expert task-vector cosine than the TA baseline on every task in this run.
Thus, the ProbSurgery comparison reinforces the main diagnostic message: feature calibration can improve the deployed final-feature distribution without producing closer task-vector alignment in parameter space, and neither diagnostic alone determines task accuracy.

\section{Efficiency Experiment Protocol}
\label{app:efficiency-protocol}

\paragraph{Task and model setting.}
The efficiency analysis in \cref{sec:analysis} uses the FusionBench CLIP-ViT-B/32 8-task TA setting with Task Arithmetic as the shared upstream merger.
The Task Arithmetic scaling factor is \(0.3\), and all post-merging methods start from the same saved TA model.
We evaluate Surgery, ProbSurgery, and \method on the 8 image-classification tasks listed in \cref{sec:clip-results}.
We follow each method's native FusionBench calibration interface: Surgery and ProbSurgery use their test-time adaptation calibration stream, while \method uses its calibration split.
Final accuracy is always measured by a separate 8-task TA evaluation pass after all calibration jobs finish, so this analysis should be read as an implementation-level post-TA efficiency comparison rather than a controlled split-ablation.

\paragraph{Data budgets and seeds.}
For each method, \(n\) denotes the exact number of calibration examples per task.
We use \(n\in\{8,16,32,64,128,256,512\}\) and report mean and standard deviation over 3 seeds, \(42\), \(43\), and \(44\).
Surgery and ProbSurgery set \texttt{max\_samples\_per\_task}=n; \method sets \texttt{num\_regmean\_examples}=n and truncates the final dataloader batch so that exactly \(n\) examples per task are used.

\begin{table}[t]
\centering
\caption{
Full sample-budget accuracy values corresponding to \cref{fig:ta8-data-efficiency}.
Entries are mean top-1 accuracy (\%) in the 8-task TA setting and standard deviation over 3 seeds.
}
\label{tab:ta8-full-data-efficiency}
\setlength{\tabcolsep}{3.6pt}
\resizebox{\textwidth}{!}{
\begin{tabular}{l|ccccccc}
\toprule
Method
& \(n=8\)
& \(n=16\)
& \(n=32\)
& \(n=64\)
& \(n=128\)
& \(n=256\)
& \(n=512\) \\
\midrule
TA w/ Surgery & 66.2 $\pm$ 1.1 & 66.7 $\pm$ 1.2 & 69.9 $\pm$ 0.3 & 72.6 $\pm$ 0.2 & 74.9 $\pm$ 0.2 & 76.8 $\pm$ 0.2 & 77.8 $\pm$ 0.3 \\
TA w/ ProbSurgery & 63.4 $\pm$ 1.4 & 66.1 $\pm$ 1.3 & 69.5 $\pm$ 0.6 & 72.4 $\pm$ 0.5 & 75.7 $\pm$ 0.5 & 78.6 $\pm$ 0.2 & 80.7 $\pm$ 0.2 \\
\rowcolor{FeatCalRow} TA w/ \method & \textbf{82.9 $\pm$ 0.3} & \textbf{83.8 $\pm$ 0.2} & \textbf{84.4 $\pm$ 0.0} & \textbf{84.9 $\pm$ 0.1} & \textbf{85.2 $\pm$ 0.1} & \textbf{85.4 $\pm$ 0.1} & \textbf{85.5 $\pm$ 0.0} \\
\bottomrule
\end{tabular}
}
\end{table}

\paragraph{Optimization and dataloading.}
All 3 post-merging methods use calibration dataloader batch size \(16\), matching the original FusionBench AdaMerging/Surgery dataloader default, and the profiling run uses 8 dataloader workers.
Surgery and ProbSurgery use learning rate \(10^{-3}\).
Surgery and ProbSurgery run for \(1000\) calibration steps, and their internal validation/evaluation hooks are disabled for the main efficiency comparison.
This avoids mixing calibration cost with intermediate model-selection cost.
\method follows the same 8-task TA calibration protocol as the corresponding main result in \cref{sec:exp-setup}.

\paragraph{Measurement protocol.}
The profiler separates calibration and evaluation.
It first generates or reuses the TA baseline model, then runs all post-merging calibration jobs, and only after all calibration jobs finish evaluates the TA baseline and all calibrated models on the 8-task TA task pool.
Calibration efficiency is therefore reported using calibration-only wall-clock time, GPU energy, and CPU RSS columns, while final accuracy is reported from the separate evaluation stage.
CPU RSS is the peak resident host memory of the profiled process tree.
The profiling run also evaluates saved Surgery and ProbSurgery models at steps \(200,400,600,800,1000\) as an offline model-selection audit.
Those model-evaluation passes are excluded from the runtime columns in \cref{tab:ta8-runtime-efficiency-n256,tab:ta8-overall-runtime-appendix}; if model search is reported, it is labeled as an offline model-search variant rather than the default final-model result.

\begin{table}[t]
\centering
\caption{
Cross-budget calibration-stage resource summary.
Values average over all data budgets and 3 seeds per budget; GPU energy is computed per run from average GPU power draw and calibration-stage time.
Final 8-task TA evaluation and offline model search are excluded.
}
\label{tab:ta8-overall-runtime-appendix}
\setlength{\tabcolsep}{4pt}
\begin{tabularx}{0.82\linewidth}{@{}l>{\raggedleft\arraybackslash}X>{\raggedleft\arraybackslash}X>{\raggedleft\arraybackslash}X@{}}
\toprule
Method
& \shortstack{Avg. stage\\time (s)}
& \shortstack{Avg. GPU\\energy (Wh)}
& \shortstack{Avg. peak\\CPU RSS (GiB)} \\
\midrule
TA w/ Surgery & 209.9 & 17.0 & 102.9 \\
TA w/ ProbSurgery & 220.3 & 17.3 & 95.1 \\
\rowcolor{FeatCalRow} TA w/ \method & \textbf{52.1} & \textbf{1.5} & \textbf{20.4} \\
\bottomrule
\end{tabularx}
\end{table}

\paragraph{Shared hardware and software.}
All experiments in this paper use the same single-GPU worker class, including the CLIP main and extended results, the FLAN-T5 results, the feature-calibration diagnostics, the corrupted-calibration study, the sensitivity study, and the efficiency study.
Each run uses a single NVIDIA A800-SXM4-80GB GPU with driver \(550.163.01\), CUDA \(12.4\) as reported by \texttt{nvidia-smi}, an Intel Xeon Platinum 8358 CPU node with 128 logical CPUs, Python \(3.12.2\), and PyTorch \(2.10.0+\mathrm{cu}128\).
No experiment uses distributed training or multi-GPU parallelism.
The profiler records calibration-stage wall-clock time, GPU energy, and peak CPU RSS for each sample-budget run in the efficiency study; these measurements are summarized in \cref{tab:ta8-runtime-efficiency-n256,tab:ta8-overall-runtime-appendix}.
The logs also record the launch command, environment variables, \texttt{nvidia-smi}, CPU information, package versions, and output CSV files used to populate \cref{fig:ta8-data-efficiency,tab:ta8-runtime-efficiency-n256,tab:ta8-full-data-efficiency,tab:ta8-overall-runtime-appendix}.

\section{Corrupted-Calibration Robustness Protocol}
\label{app:corrupted-calibration-protocol}

The corrupted-calibration analysis in \cref{sec:analysis} uses the FusionBench CLIP-ViT-B/32 8-task TA setting with Task Arithmetic as the shared upstream merger.
All methods start from the same Task Arithmetic model, and final accuracy is measured on the clean test sets from this 8-task TA setting.
Only the data consumed by the post-merging calibration stage are corrupted.
For \method, this is the \(256\)-image-per-task calibration set; for Surgery and ProbSurgery, this is each method's native calibration stream under the same per-task budget.

We apply a single ImageNet-C-style severity-3 corruption at a time: Gaussian noise, motion blur, or fog.
The uncalibrated Task Arithmetic baseline has no calibration stream, so it is reported only in the clean column.
All calibrated rows in \cref{tab:ta8-corrupted-calibration} report the final calibrated models from the corresponding corrupted-data run.

\section{Additional MergeBench Expert Results}
\label{app:mergebench-results}

We report the individual MergeBench expert references omitted from the compact main-text table.
All entries in \cref{tab:mergebench-expert-results} are percentages, and the average is the mean over MATH-500, GSM8K, HumanEval+, MBPP+, IFEval, and ARC-Challenge.
HumanEval+ and MBPP+ report pass@1.

\begin{table}[t]
\centering
\caption{
Individual MergeBench expert results on Llama-family models.
}
\label{tab:mergebench-expert-results}
\setlength{\tabcolsep}{3.8pt}
\resizebox{\textwidth}{!}{
\begin{tabular}{ll|cccccc|c}
\toprule
Series / Model & Method & MATH-500 & GSM8K & HumanEval+@1 & MBPP+@1 & IFEval & ARC-C & Avg. \\
\midrule
\multirow{5}{*}{Llama-3.2-3B-Instruct}
& Base & 47.8 & 72.6 & 49.4 & 56.6 & 67.3 & 73.1 & 61.1 \\
& Coding Expert & 27.4 & 55.5 & 54.9 & 57.1 & 49.5 & 71.5 & 52.7 \\
& Math Expert & 46.4 & 80.6 & 48.2 & 55.3 & 53.0 & 71.1 & 59.1 \\
& Instruction Expert & 49.0 & 73.6 & 54.3 & 56.1 & 69.7 & 72.7 & 62.6 \\
& Fine-tuned (STL) & 49.0 & 80.6 & 54.9 & 57.1 & 69.7 & 73.1 & 64.1 \\
\midrule
\multirow{5}{*}{Llama-3.1-8B-Instruct}
& Base & 48.6 & 83.5 & 62.2 & 63.0 & 72.8 & 80.9 & 68.5 \\
& Coding Expert & 21.2 & 53.4 & 66.5 & 59.0 & 43.8 & 76.5 & 53.4 \\
& Math Expert & 47.4 & 83.9 & 22.0 & 36.5 & 15.5 & 47.4 & 42.1 \\
& Instruction Expert & 52.6 & 85.3 & 67.1 & 63.8 & 72.5 & 78.8 & 70.0 \\
& Fine-tuned (STL) & 52.6 & 85.3 & 67.1 & 63.8 & 72.8 & 80.9 & 70.4 \\
\bottomrule
\end{tabular}
}
\end{table}

\section{Broader Context and Future Directions}
\label{app:broader-outlook}

Our experiments focus on post-merging calibration after a base merger has already produced a single deployable model, but this setting is part of a wider model-fusion agenda.
Recent surveys frame model fusion as a route toward scalable, sustainable, and more broadly accessible AI systems~\citep{zhou2026model,zhou2025democratizing}.
For \method, an immediate next step is to understand how closed-form calibration scales with model size, task count, and available compute.
Scaling-law and parameter-management studies for large-language-model merging provide useful tools for this question, especially when calibration must fit into a constrained memory or budget envelope~\citep{wang2025model,wang2026mergepipe}.

Another direction is to extend feature calibration beyond accuracy-oriented benchmarks.
Preference-oriented fusion and distillation suggest that merged models can be shaped by preference signals rather than only task labels or expert features~\citep{gu2025infifpo,gu-etal-2025-capturing}, while logits-level distillation gives a complementary way to transfer relational information between models~\citep{wang2025infigfusion}.
As calibrated merged models move into interactive and multimodal settings, evaluation should also account for response uncertainty and possible misleading contexts~\citep{dang-etal-2025-exploring}.
Finally, systems support remains important: decentralized evaluation pipelines and low-precision reasoning-model training recipes point to deployment settings where calibrated model fusion should be efficient, auditable, and inexpensive to run~\citep{yang2026inficoevalchain,wang2025infir2comprehensivefp8training}.
These directions are complementary to the layer-wise calibration mechanism studied here and ask how post-merging feature calibration should be scaled, evaluated, and deployed.

\end{document}